\documentclass{article} 
\usepackage{iclr2021_arxiv,times}


\usepackage{amsmath,amsfonts,bm}









\def\eqref#1{equation~\ref{#1}}









\def\1{\bm{1}}










\DeclareMathAlphabet{\mathsfit}{\encodingdefault}{\sfdefault}{m}{sl}
\SetMathAlphabet{\mathsfit}{bold}{\encodingdefault}{\sfdefault}{bx}{n}













\DeclareMathOperator*{\argmax}{arg\,max}

\newcommand{\states}{\mathcal{S}}
\newcommand{\state}{s}
\newcommand{\actions}{\mathcal{A}}
\newcommand{\action}{a}
\newcommand{\rewards}{R}
\newcommand{\reward}{r}

\newcommand{\transitions}{P}

\newcommand{\expectation}{\mathbb{E}}

\newcommand{\simudelay}{\tau_{\mathrm{sim}}}
\newcommand{\minsimudelay}{\underline{\tau}_{\mathrm{sim}}}
\newcommand{\maxsimudelay}{\overline{\tau}_{\mathrm{sim}}}

\newcommand{\cumulateregret}[1]{\mathrm{Regret}(#1)}
\newcommand{\syncrate}{\tau_{\mathrm{syn}}}

\newcommand{\indicatorfunc}[1]{\mathbbm{1} \left [ #1 \right ]}

\usepackage{url}
\usepackage{microtype}
\usepackage{graphicx}
\usepackage{subfigure}
\usepackage{booktabs} 
\usepackage{array}
\usepackage{hyperref}
\usepackage{bbm}
\usepackage{amsmath}
\usepackage{amsthm}
\usepackage{amssymb}
\usepackage[utf8]{inputenc}
\usepackage[english]{babel}
\usepackage{multicol}
\usepackage{natbib}
\usepackage{tikz}
\usepackage{algorithm}
\usepackage{algorithmic}
\usepackage{diagbox}
\usepackage{multirow}
\usepackage{bm}
\usepackage{footnote}
\usepackage{wrapfig}
\usepackage{makecell}
\usepackage{color}
\allowdisplaybreaks[4]

\newcommand\numberthis{\addtocounter{equation}{1}\tag{\theequation}}

\newtheorem{theorem}{Theorem}

\newtheorem{definition}{Definition}

\makeatletter
\newcommand{\eqnum}{\leavevmode\hfill\refstepcounter{equation}\textup{\tagform@{\theequation}}}
\makeatother


\title{On Effective Parallelization of \\ Monte Carlo Tree Search}


\author{Anji Liu$^\dagger$, Yitao Liang$^\dagger$, Ji Liu$^\ddagger$, Guy Van den Broeck$^\dagger$ \& Jianshu Chen$^\mathsection$ \\
$^\dagger$Department of Computer Science, University of California, Los Angeles \\
\texttt{\{liuanji, yliang, guyvdb\}@cs.ucla.edu} \\
$^\ddagger$Seattle AI Lab, Kwai Inc., Bellevue, WA 98004, USA \\
\texttt{jiliu@kuaishou.com} \\
$^\mathsection$Tencent AI Lab, Bellevue, WA 98004, USA \\
\texttt{jianshuchen@tencent.com}
}

%

\iclrfinalcopy 
\begin{document}

\maketitle

\begin{abstract}
Despite its groundbreaking success in Go and computer games, Monte Carlo Tree Search (MCTS) is computationally expensive as it requires a substantial number of rollouts to construct the search tree, which calls for effective parallelization. However, \emph{how to design effective parallel MCTS algorithms} has not been systematically studied and remains poorly understood. In this paper, we seek to lay its first theoretical foundation, by examining the potential performance loss caused by parallelization when achieving a desired speedup. In particular, we discover the necessary conditions of achieving a desirable parallelization performance, and highlight two of their practical benefits. First, by examining whether existing parallel MCTS algorithms satisfy these conditions, we identify key design principles that should be inherited by future algorithms, for example tracking the unobserved samples (used in WU-UCT \citep{liu2020watch}). We theoretically establish this essential design facilitates $\mathcal{O} ( \ln n + M / \sqrt{\ln n} )$ cumulative regret when the maximum tree depth is 2, where $n$ is the number of rollouts and $M$ is the number of workers. A regret of this form is highly desirable, as compared to $\mathcal{O} ( \ln n )$ regret incurred by a sequential counterpart, its excess part approaches zero as $n$ increases. Second, and more importantly, we demonstrate how the proposed necessary conditions can be adopted to design more effective parallel MCTS algorithms. To illustrate this, we propose a new parallel MCTS algorithm, called \emph{BU-UCT}, by following our theoretical guidelines. The newly proposed algorithm, albeit preliminary, out-performs four competitive baselines on 11 out of 15 Atari games. We hope our theoretical results could inspire future work of more effective parallel MCTS.
\end{abstract}

\section{Introduction}
\label{Introduction}

Monte Carlo Tree Search (MCTS) \citep{browne2012survey} algorithms have achieved unprecedented success in fields such as computer Go \citep{silver2016mastering}, card games \citep{powley2011determinization}, and video games \citep{schrittwieser2019mastering}. However, they generally require a large number of Monte Carlo rollouts to construct search trees, making themselves time-consuming. For this reason, parallel MCTS is highly appealing and has been successfully used in solving challenging tasks such as Go \citep{silver2017mastering,couetoux2017monte} and mobile games \citep{poromaa2017crushing,devlin2016combining}.

Despite their extensive usage, the performance of parallel MCTS algorithms \citep{chaslot2008parallel} is not systematically understood from a theoretical perspective. There are empirical studies on the advantages (e.g., \citet{yoshizoe2011scalable,gelly2006exploration}) and disadvantages (e.g., \citet{mirsoleimani2017analysis,soejima2010evaluating,bourki2010scalability}) of existing approaches. However, they are mainly algorithm-specific analysis, which provides less \emph{systematic} design principles on effective MCTS parallelization. As a consequence, practitioners still largely rely on the trial-and-error approach when designing a new parallel MCTS algorithm, which is time-wise costly.

In this paper, we seek to lay the first theoretical foundation for effective MCTS parallelization. Parallel MCTS algorithms generally exhibit different levels of performance loss compared to their sequential counterparts, especially when a large number of workers are employed to achieve high speedups \citep{segal2010scalability}. It is highly desirable for algorithm designers to minimize this loss while still achieving high speedup, especially in solving challenging large-scale tasks. Therefore, we focus on examining the potential performance loss caused by the parallelization when achieving a desired speedup. And we measure the performance loss by \emph{excess regret}, which is the extra cumulative regret of a parallel MCTS algorithm relative to its sequential counterpart. In particular, we will characterize the excess regret from a theoretical perspective and seek to answer the following key question: \emph{under what conditions would the excess regret vanish when the number of rollouts increases?}

To this end, with the help of a unified algorithm framework that covers all major existing parallel MCTS algorithms as its special cases, we derive two necessary conditions for any algorithm specified by the framework to achieve vanishing excess regret when the number of rollouts increases (Thm.~\ref{theorem: necessary condition for a good algorithm in the tree search setting}). We then highlight two practical benefits of the necessary conditions. First, the conditions allow us to identify \emph{key design wisdom proposed by existing algorithms}, for example tracking the unobserved samples, which is proposed in WU-UCT \citep{liu2020watch}. Second, and more importantly, we show that the necessary conditions can provide concrete guidelines for designing better (future) algorithms, which is demonstrated through an example workflow of algorithm design based on the necessary conditions. The resulting algorithm, \textbf{B}alance the \textbf{U}nobserved in UCT (BU-UCT), out-performs four competitive baselines on 11 out of 15 Atari games. We hope this encouraging result could inspire more future work to develop better parallel MCTS algorithms with our theory. 

\vspace{-0.5em}

\section{Preliminary: MCTS and its Parallelization}
\label{MCTS and Its Parallelization}

\vspace{-0.5em}

Consider a \emph{Markov Decision Process} (MDP) $\langle \states, \actions, \rewards, \transitions, \gamma \rangle$, where $\states$ denotes a \emph{finite} state space, $\actions$ is a \emph{finite} action space, $\rewards$ is a \emph{bounded} reward function, $\transitions$ defines a \emph{deterministic} state transition function, and $\gamma \in (0, 1]$ is the discount factor. At each time step $t$, the agent takes an action $\action_{t}$ when the environment is in a state $\state_{t}$, causing it to transit to the next state $\state_{t + 1}$ and emit a reward $\reward_{t}$. In the context of MCTS, $\transitions$ and $\rewards$ (or their approximations) are assumed to be known to the agent. By exploiting such knowledge, MCTS seeks to \emph{plan} the best action $\action$ at a given state $\state$ to achieve the highest expected cumulative reward $\expectation [ \sum_{t = 0}^{\infty} \gamma^{t} r_{t} \mid \state_{0} \!=\! \state]$. To this end, it constructs a search tree using a sequence of repeated Monte Carlo \emph{rollouts}, where a node corresponds to a state, and an edge from $\state_{t}$ to $\state_{t + 1}$ represents the action $\action_{t}$ that causes the transition from $\state_{t}$ to $\state_{t + 1}$. Each edge $(\state, \action)$ in the search tree also stores a set of statistics $\{ Q (\state, \action), N (\state, \action) \}$, where $Q (\state, \action)$ is the mean action value and $N (\state, \action)$ is the count of completed simulations. These statistics guide the construction of the search tree and are updated during the process. Specifically, during the \emph{selection} phase, the algorithm traverses over the current search tree by using a \emph{tree policy} (e.g., the Upper Confidence Bound (UCB) \cite{auer2002using}) to iteratively select an action $\action_{t}$ that leads to a child node $\state_{t + 1}$:
    { \setlength{\abovedisplayskip}{0.2em}
      \setlength{\belowdisplayskip}{-0.8em}
    \begin{align*}
        \action_t = \argmax_{\action \in \actions} \left \{ Q (\state_t, \action) \!+\! c \sqrt{\frac{2 \ln  \sum_{\action'} N (\state_t, \action') }{N (\state_t, \action)}} \right \}, \numberthis \label{eq: UCT tree policy}
    \end{align*}}
    
where the first term estimates the utility of executing $\action$ at $\state_{t}$, the second term represents the uncertainty of that estimate, and the hyperparameter $c$ controls the tradeoff between exploitation (term 1) and exploration (term 2). The selection process is performed iteratively until arriving at a node $\state_{T - 1}$ where some of its actions are not expanded. Then, the algorithm selects an unexpanded action $\action_{T - 1}$ at $\state_{T - 1}$ and adds a new leaf node $\state_{T}$ (corresponds to the next state) to the search tree at the \emph{expansion} phase, followed by querying its value $V (\state_{T})$ through \emph{simulation}, where a \emph{default policy} repeatedly interacts with the MDP starting from $\state_{T}$. Finally, in \emph{backpropagation}, the statistics along the selected path are recursively updated from $\state_{T - 1}$ to $\state_{0}$ (i.e., from $t = T - 1$ to $t = 0$) by
    { \setlength{\abovedisplayskip}{0.2em}
      \setlength{\belowdisplayskip}{-0.8em}
    \begin{gather*}
        N (\state_{t}, \action_{t}) \leftarrow N (\state_{t}, \action_{t}) + 1, \quad V (\state_{t}) = \rewards (\state_{t}, \action_{t}) + \gamma V (\state_{t + 1}), \numberthis \label{eq: recursive N and V hat update} \\
        Q (\state_{t}, \action_{t}) \leftarrow \frac{N (\state_{t}, \action_{t}) - 1}{N (\state_{t}, \action_{t})} Q (\state_{t}, \action_{t}) + \frac{V (\state_{t + 1})}{N (\state_{t}, \action_{t})}, \numberthis \label{eq: recursive Q update}
    \end{gather*}}
    
\noindent where the recursion starts from the simulation return value $V (\state_{T})$.

\noindent \emph{Parallel MCTS algorithms} seek to speedup their sequential counterparts by distributing workloads stemmed from the simulation steps to multiple workers, aiming to achieve the same performance with less computation time. Fig.~\ref{fig: typical parallel MCTS algs} presents five typical parallel MCTS algorithms. Among them, Leaf Parallelization (LeafP) \citep{cazenave2007parallelization} assigns multiple workers to simulate the same node simultaneously; Root Parallelization (RootP) \citep{cazenave2007parallelization} adopts the workers to independently maintain different search trees, and the statistics are aggregated after all workers complete their jobs; in Tree Parallelization (TreeP) \citep{chaslot2008parallel}, the workers independently perform rollouts on a shared search tree; TreeP with Virtual Loss (VL-UCT) \citep{segal2010scalability,silver2016mastering} and Watch the Unobserved in UCT (WU-UCT) \citep{liu2020watch} pre-adjust the node statistics with side information to achieve a better exploration-exploitation tradeoff. Please refer to Appendix~\ref{Existing Parallel MCTS Algorithms} for a more detailed and thorough discussion of existing parallel MCTS algorithms.

\begin{figure}[t]
    \centering
    \includegraphics[width=\columnwidth]{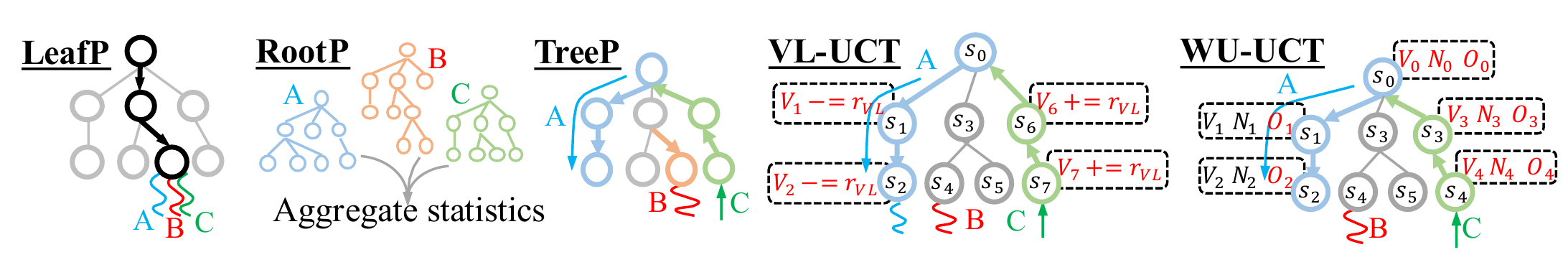}
    \vspace{-2.6em}
    \caption{Typical existing parallel MCTS algorithms. VL-UCT and WU-UCT use virtual loss (i.e., $r_{\mathrm{VL}}$) and number of on-going simulations (i.e., $O$) to pre-adjust node statistics, respectively.}
    \label{fig: typical parallel MCTS algs}
    \vspace{-1.2em}
\end{figure}

\paragraph{Main challenges} Since parallel MCTS algorithms have to initiate new rollouts before all assigned simulation tasks are completed, they are generally not able to incorporate the information from \emph{all} initiated simulations into its statistics (i.e., $Q$ and $N$). As demonstrated in previous studies (e.g., \citet{liu2020watch}), this could lead to significant performance loss compared to sequential MCTS algorithms since the tree policy (Eq.~(\ref{eq: UCT tree policy})) cannot properly balance exploration and exploitation when using such statistics. Therefore, most existing algorithms seek to improve their performance by augmenting the statistics $Q$ and $N$ used by the tree policy, which is done by either adjusting how statistics possessed by different workers are synchronized/aggregated (e.g., LeafP, RootP) or adding additional side information (e.g., VL-UCT, WU-UCT). Specifically, this can be formalized by introducing a set of \emph{modified statistics} (defined as $\overline{Q}$ and $\overline{N}$) in replacement of $Q$ and $N$ in the tree policy (Eq.~(\ref{eq: UCT tree policy})):
    { 
    \setlength{\abovedisplayskip}{0.4em}
    \setlength{\belowdisplayskip}{-1.2em}
    \begin{align}
        \overline{Q} (\state, \action) & := \alpha (\state, \action) \cdot Q (\state, \action) + \beta (\state, \action) \cdot \widetilde{Q} (\state, \action), \quad \; \overline{N} (\state, \action) := N (\state, \action) + \widetilde{N} (\state, \action) \footnotemark, \label{eq: modified value and visit count definition}
    \end{align}}
    
\footnotetext{Given the $N$-related are counts of simulations, which means $\overline{N}$ shall never be smaller than $N$, this equation is sufficient for the general purpose and there is no need for weights before $N$ and $\widetilde{N}$.}
    
\noindent where $\widetilde{Q}$ and $\widetilde{N}$ are a set of \emph{pseudo statistics} that incorporate additional side information; $\alpha$ and $\beta$ control the ratio between $Q$ and $\widetilde{Q}$. Common choices of the pseudo statistics include virtual loss (\citet{segal2010scalability}; for $\widetilde{Q}$) and incomplete visit count (\citet{liu2020watch}; for $\widetilde{N}$). Given this formulation, a natural question is \emph{how to design $\overline{Q}$ and $\overline{N}$ in order to achieve good parallel performance in MCTS?}

\vspace{-0.2em}

\section{Overview of Our Main Theoretical Results}
\label{Key Algorithmic Challenges And An Overview of Theoretical Findings}

\vspace{-0.2em}

The main objective of this paper is to answer the above question by identifying key necessary conditions of $\overline{Q}$ and $\overline{N}$ to achieve desirable performance\footnote{The notion of ``desirable performance'' will be formalized in Sec.~\ref{Evaluating Parallel MCTS Algorithms}.} in parallel MCTS algorithms. Throughout the paper, we highlight two benefits of our theoretical results: in hindsight, they help identify beneficial design principles used in existing algorithms (Sec.~\ref{Rethinking Existing Parallel MCTS Algorithms}); furthermore, they offer simple and effective guidelines for designing better (future) algorithms (Sec.~\ref{Towards Optimal Parallel Tree Search}).

The two necessary conditions are best illustrated in Fig.~\ref{fig: theoretical results demonstration}(a). Consider node $s$ in a search tree where we want to select one of its child nodes. Workers A and B are in their simulation steps, querying an offspring node of $\state_{1}$ and $\state_{2}$, respectively. To introduce the necessary condition of $\overline{N}$, we define the \emph{incomplete visit count} $O (\state, \action)$, which was introduced by \citet{liu2020watch} to track \emph{the number of simulation tasks that has been initiated but not yet completed}. For example, in Fig.~\ref{fig: theoretical results demonstration}(a), both the edges associated with $\state_{1}$ and $\state_{2}$ have incomplete visit counts of $1$ since workers A and B are still simulating their offspring nodes. The necessary condition regarding $\overline{N}$ is stated as follows:
    { \setlength{\abovedisplayskip}{0.4em}
      \setlength{\belowdisplayskip}{-1.2em}
    \begin{align}
        \forall (\state, \action) \in \big \{ \mathrm{edges~in~the~search~tree} \big \} \quad \overline{N} (\state, \action) \geq N (\state, \action) + O (\state, \action). \label{eq: necessary condition of N (abbrev)}
    \end{align}}
    
One potential benefit of adding incomplete visit count (i.e., $O$) into $\overline{N}$ is to improve the diversity of exploration \citep{liu2020watch}. Specifically, since increasing $O$ leads to a decrease of the exploration bonus (the second term) in the tree policy (Eq.~(\ref{eq: UCT tree policy})), nodes with high incomplete visit count will be less likely to be selected by other workers, which increase the diversity of exploration. In our example, the chance of selecting $\state_{3}$ is increased due to the introduction of $O$. 

The necessary condition of $\overline{Q}$ focuses on the similarity between the action value maintained by the parallel MCTS algorithm $\mathbb{A}$ and its sequential counterpart $\mathbb{A}_{\mathrm{seq}}$. Formally, it requires the following \emph{action value gap} $\overline{G}(\state, \action)$ to be \emph{zero} for each edge $(\state, \action)$ in the search tree:
    { \setlength{\abovedisplayskip}{0.4em}
      \setlength{\belowdisplayskip}{-1.0em}
    \begin{align}
        \overline{G} (\state, \action) := \big | \expectation \big [ \, \overline{Q} (\state, \action) \big ] - \expectation \big [ Q^{\mathbb{A}_\mathrm{seq}}_{m} (\state, \action) \big ] \big | \quad (m = N (\state, \action) + O (\state, \action)), \label{eq: the action value gap}
    \end{align}}
    
\noindent where $\overline{Q} (\state, \action)$ is generated by the parallel MCTS algorithm $\mathbb{A}$, $Q^{\mathbb{A}_{\mathrm{seq}}}_{m} (\state, \action)$ represents the action value of a sequential MCTS algorithm $\mathbb{A}_{\mathrm{seq}}$ that starts from the child node of $(\state, \action)$ and runs for $m$ rollouts, and $\expectation [ \cdot ]$ averages over the randomness in the simulation returns. Although seemingly nontrivial to satisfy, we will show that it indeed provides important insights for designing better algorithms.

Finally, we revisit both necessary conditions and give a preview of their two benefits: (i) identifying useful designs in existing algorithms that should be inherited by future algorithms (Section \ref{On Parallelization in Monte Carlo Tree Search}), and (ii) revealing new design principles for future algorithms (Section \ref{Towards Optimal Parallel Tree Search}). First, we identify key techniques used in existing algorithms that are \emph{aligned with our theoretical findings}. We found that none of them satisfy the necessary condition on $\overline{Q}$ and only WU-UCT satisfies the necessary condition of $\overline{N}$. In hindsight, this implies that that the design of $\overline{N}$ in WU-UCT is consistent with our theoretical guidelines. And we further confirm the benefit of this essential design by showing that it facilitates WU-UCT to achieve a cumulative regret of $\mathcal{O} (\ln n + M / \sqrt{\ln n})$ when the maximum tree depth is $2$ (Thm.~\ref{theorem: regret upper bound of WU-UCT in stochastic bandit setup}), where $n$ is the total number of rollouts and $M$ is the number of workers. Comparing to sequential UCT, whose cumulative regret is $\mathcal{O} (\ln n)$, WU-UCT merely incurs an excess regret of $\mathcal{O} (M / \sqrt{\ln n})$ that goes to zero as $n$ increase.
Second and more importantly, the necessary condition of $\overline{Q}$ can guide us in designing better (future) algorithms. Specifically, we show in Fig.~\ref{fig: theoretical results demonstration}(b) that the action value gap $\overline{G}$ is a strong performance indicator of parallel MCTS algorithms. The scatter plots obtained from two Atari games demonstrate that, regardless of algorithms and hyperparameters, there is a strong negative correlation between the action value gap $\overline{G}$ and the performance. In Sec.~\ref{Towards Optimal Parallel Tree Search}, we will demonstrate that finding a surrogate gap to approximate $\overline{G}$ and reducing its magnitude could lead to significant performance improvement across a large number of Atari games.

\begin{figure}[t]
    \centering
    \includegraphics[width=\columnwidth]{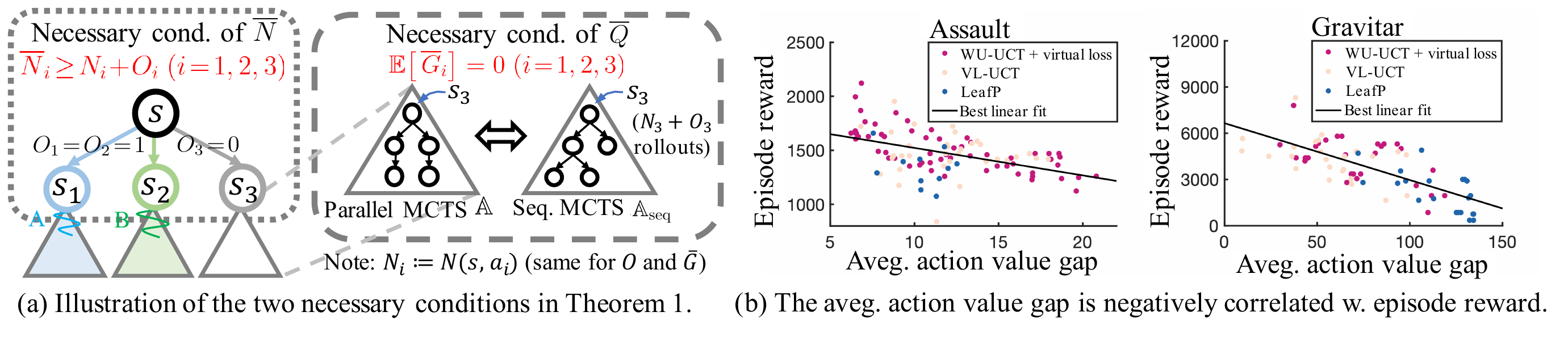}
    \vspace{-2.2em}
    \caption{The necessary conditions to achieve vanishing excess regret and their implications.}
    \label{fig: theoretical results demonstration}
    \vspace{-1.2em}
\end{figure}

\vspace{-0.3em}

\section{Parallel MCTS: Theory and Implications}
\label{On Parallelization in Monte Carlo Tree Search}

\vspace{-0.3em}

Following our aforementioned takeaways, we start this section with formalizing the evaluation criteria of MCTS parallelization before presenting the rigorous development of our theoretical results.

\vspace{-0.2em}

\subsection{What is Effective Parallel MCTS?}
\label{Evaluating Parallel MCTS Algorithms}

\vspace{-0.2em}

We analyze the performance of parallel MCTS algorithms by examining their \emph{performance loss} under a fixed \emph{speedup} requirement. To begin with, we define the following metrics.


\noindent \textbf{Speedup} $\;$ The speedup of a parallel MCTS algorithm $\mathbb{A}$ using $M$ workers\footnote{A worker refers to a computation unit in practical algorithms that performs simulation tasks \emph{sequentially}.} is defined as
    { \setlength{\abovedisplayskip}{0.3em}
      \setlength{\belowdisplayskip}{-0.7em}
    \begin{align*}
        \mathrm{speedup} = \frac{\mathrm{runtime~of~the~sequential~MCTS}}{\mathrm{runtime~of~algorithm}~\mathbb{A}~\mathrm{using}~M~\mathrm{workers}},
    \end{align*}}
    
\noindent where the runtime of both the sequential and the parallel algorithms is measured by the duration of performing the same fixed number of rollouts. Assuming simulation is much more time-consuming compared to other steps,\footnote{This holds in general since \emph{only} the simulation step requires massive interactions with the environment.} parallel MCTS algorithms have a speedup \emph{close to $M$} (see also Section \ref{Towards Optimal Parallel Tree Search}) since all $M$ workers will be occupied by simulation tasks most of the time.

\noindent \textbf{Performance loss} $\;$ We measure the performance of a parallel MCTS algorithm $\mathbb{A}$ by \emph{expected cumulative regret}, a common metric also used in related theoretical studies \citep{kocsis2006improved,auer2002finite,auer2002using}:
    { \setlength{\abovedisplayskip}{-0.8em}
      \setlength{\belowdisplayskip}{-0.8em}
    \begin{align}
        \mathrm{Regret}_{\mathbb{A}} (n) := \sum_{i = 1}^{n} \expectation \big [ V^{*}_{i} (\state_{0}) - V_{i} (\state_{0}) \big ],
        \label{eq: regret definition} 
    \end{align}}

\noindent where $\state_{0}$ is the root state of the search tree; $n$ is the number of rollouts; $V_{i} (\state_{0})$ is the value estimate of $\state_{0}$ obtained in the $i$th rollout of algorithm $\mathbb{A}$, which is computed according to Eq.~(\ref{eq: recursive N and V hat update}); similarly, $V^{*}_{i} (\state_{0})$ is the estimated value of $\state_{0}$ acquired in the $i$th rollout of an oracle algorithm that always select the highest-rewarded action; the expectation is performed to average over the randomness in the simulation returns. Intuitively, cumulative regret measures the expected regret of not having selected the optimal path. We measure the \emph{performance loss} of a parallel MCTS algorithm $\mathbb{A}$ by \emph{excess regret}, which is defined as the difference between the regret of $\mathbb{A}$ and its sequential counterpart $\mathbb{A}_{\mathrm{seq}}$ (i.e., $\mathrm{Regret}_{\mathbb{A}} (n) - \mathrm{Regret}_{\mathbb{A}_{\mathrm{seq}}} (n)$). We say algorithm $\mathbb{A}$ has \emph{vanishing excess regret} if and only if its excess regret converges to zero as $n$ goes to infinity. Roughly speaking, having vanishing excess regret means the parallel algorithm is almost as good as sequential MCTS under large $n$.\footnote{Note that with relatively small $n$, parallel MCTS is in general inferior to their sequential counterpart since they are not able to collect sufficient information for effective exploration-exploitation tradeoff during selection.}

Beside cumulative regret, simple regret is also widely used in related studies. While it is generally agreed that simple regret is preferable in the Multi-Armed Bandit (MAB) setting given only the final recommendation affect the performance, it is still debatable whether MCTS should seek to minimize simple or cumulative regret \citep{pepels2014minimizing}. Specifically, nodes in the search tree need both good final performance (corr. to simple regret) to make good recommendations and good any-time performance (corr. to cumulative regret) to back-propagate well-estimated values $V (\state)$. In fact, recently proposed (sequential) MCTS algorithms largely use hybrid approaches that seek to minimize \emph{both} simple regret and cumulative regret \citep{feldman2014simple,feldman2014mabs,kaufmann2017monte,liu2015regulation}.
Due to this lack of consensus, we in this paper are in particular focused on excess cumulative regret, and leave analysis revolving around simple regret to future work.

\vspace{-0.2em}

\subsection{When Will Excess Regret Vanish?}
\label{When Will the Excess Regret Vanish?}

\vspace{-0.2em}

This section examines \emph{what conditions should be satisfied for a parallel MCTS algorithm to achieve vanishing excess regret}. To perform a unified theoretical analysis of existing parallel MCTS algorithms, we introduce a general algorithm framework (formally introduced in Appendix~\ref{A General Framework for Parallel MCTS Algorithms}) that covers most existing parallel MCTS algorithms and their variants as its special cases. Specifically, Appendix~\ref{Specialization of Existing Parallel MCTS Algorithms into the General Framework} provides a rigorous justification of how the general framework can be specialized to LeafP, RootP, TreeP, VL-UCT, and WU-UCT. The following theorem gives two necessary conditions for any algorithm specialized from the general framework to achieve vanishing excess regret.

\begin{theorem}
\label{theorem: necessary condition for a good algorithm in the tree search setting}
Consider an algorithm $\mathbb{A}$ that is specified from the general parallel MCTS framework formally introduced in Appendix~\ref{A General Framework for Parallel MCTS Algorithms}. Choose $\widetilde{N} (\state, \action)$ as a function of $O (\state, \action)$. If there exists an edge $(\state, \action)$ in the search tree such that $\mathbb{A}$ violates any of the following conditions: \vspace{0.1em} \newline
$\bullet \;$ \textbf{Necessary cond. of $\, \overline{Q}$: $\,$} $\overline{G} (\state, \action) \!:=\! \big | \expectation [ \overline{Q} (\state, \action) ] \!-\! \expectation [ Q^{\mathbb{A}_{\mathrm{seq}}}_{m} (\state, \action) ] \big | \!=\! 0 \; \big ( m \!=\! N (\state, \action) \!+\! O (\state, \action) \big )$, \eqnum\label{eq: necessary condition 1} 

\vspace{-0.6em}

$\bullet \;$ \textbf{Necessary cond. of $\, \overline{N}$: $\,$} $\overline{N} (\state, \action) \geq N (\state, \action) + O (\state, \action)$, \eqnum\label{eq: necessary condition 2}

\vspace{-0.3em}

\noindent then there exists an MDP $\mathcal{M}$ such that the excess regret of running $\mathbb{A}$ on MDP $\mathcal{M}$ does not vanish.
\end{theorem}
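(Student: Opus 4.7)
The plan is to prove the theorem by contrapositive through an explicit MDP construction for each condition. Throughout, I would reduce to a bandit-like instance (a depth-$1$ or depth-$2$ tree), so that the entire excess regret can be attributed to action selection at a single node $\state$, and so that the behavior of the sequential baseline $\mathbb{A}_{\mathrm{seq}}$ is controlled by the classical UCT analysis of Kocsis and Szepesv\'ari. The common thread is that a violation of either condition forces the modified tree policy in Eq.~(\ref{eq: UCT tree policy}), evaluated with $(\overline{Q}, \overline{N})$ in place of $(Q, N)$, to become miscalibrated relative to the selection a sequential run would make once the $O(\state, \action)$ pending rollouts complete; the proof then quantifies how this miscalibration translates into a linear-in-$M$ or linear-in-$n$ gap in the root-level cumulative regret of Eq.~(\ref{eq: regret definition}).

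For the necessary condition on $\overline{N}$, suppose $\overline{N}(\state, \action) < N(\state, \action) + O(\state, \action)$ at some visited edge. Since the hypothesis states that $\widetilde{N}$ is a function of $O$, this means $\widetilde{N}(\state, \action) < O(\state, \action)$ deterministically, so the exploration bonus $c \sqrt{2 \ln \sum_{\action'} \overline{N}(\state, \action') / \overline{N}(\state, \action)}$ is strictly inflated relative to what it would read if every in-flight simulation were credited. I would construct a two-armed bandit at the root with a constant reward gap and argue that whenever a worker becomes idle while $O$ is under-counted, the tree policy is ``tricked'' into re-selecting an arm whose pending simulations have not yet registered. A direct comparison with the sequential UCT pull schedule then shows that each round of concurrent dispatch incurs $\Omega(M)$ additional suboptimal pulls, each contributing a positive constant to the cumulative regret, so the excess regret is bounded below by a positive constant independent of $n$.

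For the necessary condition on $\overline{Q}$, I would exploit the nonzero action-value gap $\overline{G}(\state, \action) > 0$ to flip the argmax of the tree policy. Concretely, I would consider a depth-$2$ MDP in which two sibling actions at the root have a true value gap strictly smaller than $\overline{G}$ and whose UCB exploration terms are, in expectation, outweighed by the bias. By the definition of $\overline{G}$ in Eq.~(\ref{eq: the action value gap}), the parallel algorithm's expected $\overline{Q}$ then differs from $\expectation [ Q^{\mathbb{A}_{\mathrm{seq}}}_m ]$ by enough to persistently invert the ranking of the two siblings. Since the sequential baseline still enjoys $\mathcal{O}(\ln n)$ regret, while the parallel variant under the inverted ranking keeps routing rollouts into the suboptimal subtree, its cumulative regret grows as $\Omega(n)$ and the excess regret accumulates linearly in $n$, in particular failing to vanish.

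The main obstacle will be \emph{self-consistency}: a condition is only violated on an edge that the algorithm actually visits, and the quantities $\overline{G}(\state, \action)$ and $O(\state, \action)$ themselves depend on the history of visits produced by $\mathbb{A}$ on the chosen MDP. The constructed instance therefore has to anchor the violation on an edge that is revisited infinitely often under $\mathbb{A}$'s own dynamics, and the reward design has to ensure the violation is neither transient nor masked by the other side-information channels $(\alpha, \beta, \widetilde{Q}, \widetilde{N})$ allowed by the general framework. I would address this by picking MDPs whose root has only a few arms with well-separated suboptimality gaps (so that every root action is forced to be visited infinitely often under both algorithms) and by instantiating the remaining pseudo-statistic degrees of freedom in their worst-case form, thereby isolating the effect of the single violated condition from the rest of the framework.
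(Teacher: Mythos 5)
Your overall strategy---collapsing the problem to a single-node bandit, invoking the regret decomposition over arms, and exhibiting an adversarial instance for each violated condition---is the same skeleton the paper uses (it calls the single-node reduction a ``mini-MAB'' and anchors it on a node of the optimal path, which is visited $\Omega(t)$ times; this is also how it answers your self-consistency worry). The genuine gap is in your argument for the condition on $\overline{Q}$. You only rule out biases that are \emph{large enough to invert the ranking} of two siblings whose true gap you choose to be smaller than $\overline{G}$; taken literally, this establishes a necessary condition of the form ``$\overline{G}(\state,\action)$ is at most some gap-dependent threshold,'' not $\overline{G}(\state,\action)=0$. Moreover, a bias that preserves the ordering of the arms (for instance a common multiplicative or additive perturbation applied to every arm, which the framework's $\alpha$, $\beta$, $\widetilde{Q}$ permit) is never ``flipped'' by your construction, yet it still violates the stated condition. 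The paper closes exactly this hole with an indistinguishability argument: it isolates the expected selection bias of arm $k$ as an expression of the form $\beta_{k}\widetilde{Q}_{k}+\alpha_{k}\mu_{k,n-o}-\mu_{k,n}$, observes that vanishing excess regret forces this expression to have expectation $\leq 0$ for the optimal arm and $\geq 0$ for every suboptimal arm, and---because the algorithm cannot know which arm is optimal and the expression has the identical functional form in $k$---concludes that the adversary can relabel which arm is optimal, so both inequalities are simultaneously satisfiable only if the expectation is exactly zero for every arm. It then decomposes $\widetilde{Q}$ into components correlated with the completed returns, the pending returns, and everything else, and matches coefficients to conclude $\expectation[\overline{Q}(\state,\action)]$ must equal the sequential average over $m=N(\state,\action)+O(\state,\action)$ returns. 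You need some version of this symmetry step; without it the \emph{equality} in the necessary condition does not follow from your construction.

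Two smaller points on the $\overline{N}$ condition. First, the paper proves it \emph{assuming} the $\overline{Q}$ condition already holds, which reduces the question to comparing Chernoff exponents: with $\widetilde{N}=f(O)$ and $f(n_{0})=n_{0}'<n_{0}$, the concentration bound for the under-counted arm degrades from $\tau^{-1}$ to roughly $\tau^{-(n_{0}+1)/(n_{0}'+1)}$ evaluated at a shifted time, and the discrepancy is argued not to vanish. Your version---``each round of concurrent dispatch incurs $\Omega(M)$ extra suboptimal pulls''---is the right intuition but needs care: the under-count is bounded by $M$, so its effect on the exploration radius $\sqrt{2\ln t/\overline{N}}$ decays as $N(\state,\action)$ grows, and you must show that the extra pulls accumulated while $N$ is still small already force a non-vanishing excess. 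Second, your claim that the excess regret is ``bounded below by a positive constant'' implicitly assumes the parallel algorithm never under-pulls the suboptimal arm relative to the sequential one at later times; this should either be argued or sidestepped by lower-bounding $\expectation[T_{k}(t)]$ for the parallel algorithm directly, which is how the paper structures both halves of its proof.
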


\vspace{-0.4em}

While the necessary condition of $\overline{N}$ is rather straightforward, suggesting that the modified visit count $\overline{N} (\state, \action)$ should be no less than the total number of simulations \emph{initiated} (regardless of completed or not) from offspring nodes of $(\state, \action)$ (i.e., $N (\state, \action) \!+\! O (\state, \action)$), the necessary condition of $\overline{Q}$ needs further elaboration. Intuitively, the action value gap $\overline{G} (\state, \action)$ measures \emph{how well the modified action value $\overline{Q} (\state, \action)$ of $\mathbb{A}$ approximates the action value computed by its sequential counterpart $\mathbb{A}_{\mathrm{seq}}$} (i.e., $Q^{\mathbb{A}_{\mathrm{seq}}}_{m} (\state, \action)$). There are two main obstacles toward lowering the action value gap and satisfy its necessary condition (i.e., Eq.~(\ref{eq: necessary condition 1})). First, as demonstrated in Sec.~\ref{Key Algorithmic Challenges And An Overview of Theoretical Findings} as well as previous studies \citep{chaslot2008parallel,liu2020watch}, the statistics used by the tree policy (Eq.~(\ref{eq: UCT tree policy})) in parallel MCTS algorithms tend to harm the effectiveness of the tree policy, which leads to suboptimal node selections and hence biases the simulation outcomes compared to that of the sequential algorithm. Second, as hinted by our notation, while the modified action value $\overline{Q} (\state, \action)$ incorporates information from $N (\state, \action)$ simulation returns, $Q^{\mathbb{A}_{\mathrm{seq}}}_{m} (\state, \action)$ is the average of $N (\state, \action) \!+\! O (\state, \action)$ simulation outcomes. This requires the modified action value $\overline{Q}$ to incorporate additional information that helps ``anticipate'' the outcomes of incomplete simulations through the pseudo action value $\widetilde{Q}$.

\vspace{-0.2em}

\subsection{Rethinking Existing Parallel MCTS Algorithms}
\label{Rethinking Existing Parallel MCTS Algorithms}

\vspace{-0.2em}

In retrospect, we examine which techniques proposed in existing algorithms should be retained in future parallel MCTS algorithms by inspecting whether they satisfy the two necessary conditions. First, regarding $\overline{Q}$, existing algorithms either modify how simulation returns from different workers aggregate to generate action values (e.g., LeafP and RootP) or use virtual loss \citep{segal2010scalability} to penalize action value of nodes with high incomplete visit count (e.g., VL-UCT), which not necessarily minimize the action value gap $\overline{G}$. Hence, based on our knowledge, the necessary condition of $\overline{Q}$ is not satisfied by any existing algorithms. Next, regarding $\overline{N}$, we found that WU-UCT satisfies its necessary condition by using the sum of complete and incomplete visit count as its modified visit count (i.e., $\overline{N} (\state, \action) \!:=\! N (\state, \action) \!+\! O (\state, \action)$). Now a natural question to ask is \emph{whether satisfying the necessary condition of $\overline{N}$ offers noticeable gain in WU-UCT's performance}, which can be answered in the affirmative. Specifically, besides its empirical success reported in the original paper, we demonstrate the superiority of WU-UCT from a theoretical perspective through the following theorem.

\begin{theorem}
\label{theorem: regret upper bound of WU-UCT in stochastic bandit setup}
Consider a tree search task $\mathbb{T}$ with maximum depth $D \!=\! 2$ (abbreviate as the depth-2 tree search task): it contains a root node $\state$ and $K$ feasible actions $\{ \action_{i} \}_{i = 1}^{K}$ at $\state$, which lead to terminal states $\{ \state_{i} \}_{i = 1}^{K}$, respectively. Let $\mu_{i} \!:=\! \expectation [ V (\state_{i}) ]$, $\mu^{*} \!:=\! \max_{i} \mu_{i}$ and $\Delta_{k} \!:=\! \mu^{*} \!-\! \mu_{k}$, and further assume: $\forall i, V (\state_{i}) \!-\! \mu_{i}$ is 1-subgaussian \citep{buldygin1980sub}. The cumulative regret of running WU-UCT \citep{liu2020watch} with $n$ rollouts on $\mathbb{T}$ is upper bounded by:
    { \setlength{\abovedisplayskip}{0.1em}
      \setlength{\belowdisplayskip}{-0.9em}
    \begin{align*}
        \underbrace{\sum_{k: \mu_{k} < \mu^{*}} \Big ( \frac{8}{\Delta_{k}} + 2 \Delta_{k} \Big ) \ln n + \Delta_{k}}_{\rewards_{\mathrm{UCT}} (n)} + \underbrace{4 M \sum_{k: \mu_{k} < \mu^{*}} \frac{\Delta_{k}^{2}}{\sqrt{\ln n}}}_{\mathrm{excess~regret}},
    \end{align*}}
    
\noindent where $\rewards_{\mathrm{UCT}} (n)$ is the cumulative regret of running the (sequential) UCT for $n$ steps on $\mathbb{T}$.
\end{theorem}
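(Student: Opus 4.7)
The plan is to adapt the classical UCB1 regret analysis of Auer, Cesa-Bianchi, and Fischer to the parallel setting by tracking how the mismatch between the observed sample count $N_t$ and the initiated count $\overline{N}_t = N_t + O_t$ (which equals the total pulls $T_t$ and which WU-UCT plugs into its exploration bonus) propagates through the concentration argument. Because the search tree has depth two, each rollout simply picks one of the $K$ arms at the root and simulates it, so the cumulative regret decomposes as $\expectation[\mathrm{Regret}_{\mathbb{A}}(n)] = \sum_{k:\mu_k<\mu^*} \Delta_k \cdot \expectation[T_n(k)]$, and it suffices to bound $\expectation[T_n(k)]$ for every suboptimal arm $k$, where $T_n(k)$ is the number of initiated pulls of arm $k$.

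Next I would pick a threshold $u_k = \lceil 8\ln n/\Delta_k^2\rceil$ and use the standard Auer template $T_n(k) \leq u_k + \sum_{t=u_k+1}^{n} \mathbbm{1}[a_t = k,\, T_t(k) > u_k]$. On the event $\{a_t=k,\, T_t(k)>u_k\}$, the WU-UCT selection rule $\hat{\mu}_{k,N_t(k)} + \sqrt{2\ln t/T_t(k)} \geq \hat{\mu}_{*,N_t(*)} + \sqrt{2\ln t/T_t(*)}$ forces at least one of two bad events: (A) $\hat{\mu}_{*,N_t(*)} \leq \mu^* - \sqrt{2\ln t/T_t(*)}$, or (B) $\hat{\mu}_{k,N_t(k)} \geq \mu_k + \sqrt{2\ln t/T_t(k)}$, because the third Auer event $\Delta_k < 2\sqrt{2\ln t/T_t(k)}$ is excluded by $T_t(k) > u_k \geq 8\ln t/\Delta_k^2$. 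The contribution $u_k \Delta_k$ already gives the leading $8\ln n/\Delta_k + \Delta_k$ piece of $\rewards_{\mathrm{UCT}}(n)$.

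The novel step is to control $\Pr(A_t)$ and $\Pr(B_t)$ in spite of the mismatch that the confidence radius uses $T_t$ while the 1-subgaussian Hoeffding inequality on $\hat{\mu}_{\cdot,N_t(\cdot)}$ only sees $N_t$ samples: concretely, for event (A) one gets $\Pr(A_t) \leq \exp(-N_t(*) \ln t / T_t(*)) = t^{-N_t(*)/T_t(*)} \leq t^{-1} \cdot t^{M/T_t(*)}$ since $T_t(*)-N_t(*)=O_t(*)\leq M$, and symmetrically for event (B). Summing the leading $t^{-1}$ factor over $t = 1, \ldots, n$ produces the familiar $\sum_t t^{-1} \leq \ln n + 1$ budget for each of the two events, which contributes $2\ln n$ to $\expectation[T_n(k)]$ and matches (after multiplying by $\Delta_k$) the $2\Delta_k\ln n$ coefficient in $\rewards_{\mathrm{UCT}}(n)$. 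The remaining piece $t^{-1}(t^{M/T_t}-1)$, which is strictly positive because of parallelism, is exactly what must be summed to produce the excess regret.

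The main obstacle is extracting the precise $1/\sqrt{\ln n}$ scaling from this excess piece. I would first use the deterministic lower bound $T_t(*) \geq t - \sum_{j\neq *}u_j$ together with the event-level lower bound $T_t(k) > u_k$ to rule out the catastrophic small-$s$ regime in which $t^{M/T_t}$ could be as large as $t^M$; then Taylor-expand $t^{M/T_t} - 1 \leq 2M\ln t/T_t$ in the regime $M\ln t/T_t \leq 1/2$; and finally bound $\sum_t t^{-1}\cdot M\ln t / T_t$ by balancing the range where $T_t$ is close to $u_k \sim \ln n/\Delta_k^2$ against the range where $T_t$ grows linearly in $t$. Carrying out this optimization yields excess pulls of $\Theta(M\Delta_k/\sqrt{\ln n})$ per suboptimal arm, which gives the claimed $4M\Delta_k^2/\sqrt{\ln n}$ excess regret after multiplication by $\Delta_k$; assembling the $u_k\Delta_k$ contribution, the two-event Hoeffding budget $2\Delta_k\ln n$, and this parallelism correction, and summing over all suboptimal $k$, produces the stated bound.
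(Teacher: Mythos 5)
Your setup matches the paper's: the regret-decomposition identity, the threshold $u_k = \lceil 8\ln n/\Delta_k^2\rceil$, the three-event split, and the correct identification that the only new difficulty is the mismatch between the $N_t$ samples actually seen by the Hoeffding bound and the count $T_t = N_t + O_t$ appearing in WU-UCT's exploration radius. The bound $\Pr(A_t) \le t^{-N_t(*)/T_t(*)} \le t^{-1}\, t^{M/T_t(*)}$ is also correct. However, the final step does not go through, and the failure is not a technicality. For the optimal arm, $T_t(*)$ grows linearly in $t$ (and even this is not the deterministic bound you claim, since $T_t(j)$ is only bounded by $u_j$ in expectation), so your excess piece $\sum_t t^{-1}\bigl(t^{M/T_t(*)}-1\bigr) \approx \sum_t M\ln t/t^2$ sums to a \emph{non-vanishing constant} of order $M$, which is strictly weaker than the claimed $O(M/\sqrt{\ln n})$ and in particular cannot establish vanishing excess regret. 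For the suboptimal arm, applying the penalty ``symmetrically'' is worse still: on the event $T_t(k) > u_k \sim \ln n/\Delta_k^2$ the factor $t^{M/T_t(k)}$ can be as large as $e^{\Theta(M\Delta_k^2)}$ at $t=n$, and your own Taylor bound yields $\sum_t t^{-1}\cdot M\ln t\,\Delta_k^2/\ln n = \Theta(M\Delta_k^2 \ln n)$ excess pulls --- a term that \emph{grows} with $n$. The correct move there, which the paper makes, is to relax the suboptimal arm's radius from $\sqrt{2\ln t/T_t(k)}$ up to $\sqrt{2\ln t/N_t(k)}$ (this only enlarges the selection event), so that no penalty arises on the suboptimal arm at all.

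The missing idea is the product structure in the paper's argument. A direct union bound over $t$ of the per-step penalty cannot produce a $1/\sqrt{\ln n}$ term; the paper instead charges the optimal-arm penalty (their ``Cost A,'' of order $\tau^{-1/2}$ per step) only on the event that the on-going count $O_\tau(\state_0,\action_{k^*})$ actually increases, and bounds the probability of that configuration (their ``Condition B'') by $O(1/\tau)$ via the same UCB concentration events --- intuitively, $O(k^*)$ can only grow when a recently returned simulation belonged to a suboptimal arm, which requires a suboptimal selection in a window of length $\tau_{\mathrm{sim}}$. The product $\tau^{-1}\cdot\tau^{-1/2}$ is summable, and summing the tail from $\tau = u_k \sim 8\ln n/\Delta_k^2$ onward is precisely what produces the $\Delta_k/\sqrt{\ln n}$ factor and hence the $4M\Delta_k^2/\sqrt{\ln n}$ excess regret. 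Without some such conditioning that couples the penalty to an $O(1/\tau)$-probability event, your summation cannot be rescued by optimizing over the ranges of $T_t$.
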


\vspace{-0.6em}

Before interpreting the theorem, we emphasize that this result only apply to tasks where the maximum depth of the search tree is 2, which closely resembles the Multi-Armed Bandit \citep{auer2002finite,auer2002using} setup. Therefore, although WU-UCT has some desirable properties that other existing algorithms do not, it is still far from optimal when considering MCTS tasks in general. 

Thm.~\ref{theorem: regret upper bound of WU-UCT in stochastic bandit setup} indicates that the regret upper bound of WU-UCT in the depth-2 tree search task consists of two terms: the cumulative regret of the sequential UCT algorithm (i.e., $\rewards_{\mathrm{UCT}}$) and an excess regret term that \emph{converges to zero as $n$ increases}. Apart from showing a desirable theoretical property of WU-UCT, this result suggests that designing algorithms that satisfy the necessary conditions in Thm.~\ref{theorem: necessary condition for a good algorithm in the tree search setting} can potentially offers empirical as well as theoretical benefits.

In conclusion, by looking back at existing parallel MCTS algorithms, the necessary conditions suggest that retaining WU-UCT's approach to augment $\overline{N}$ would be beneficial. This left us with the question \emph{how to make use of the necessary condition of $\overline{Q}$ to further improve existing parallel MCTS algorithms}, which will be addressed in the following section. 

\section{Theory in Practice: A Promising Study}
\label{Towards Optimal Parallel Tree Search}
\vspace{-0.2em}

In this section, we demonstrate that exploiting the proposed necessary conditions in Theorem \ref{theorem: necessary condition for a good algorithm in the tree search setting} can immediately lead to a more effective parallel MCTS algorithms: \textbf{B}alance the \textbf{U}nobserved in UCT (BU-UCT). The newly proposed BU-UCT, albeit preliminary, is shown to outperform strong baselines (including WU-UCT, the current state-of-the-art) on 11 out of 15 Atari games. We want to highlight that BU-UCT is only used as an illustrative example about how to use our theoretical results in practice, and we hope this encouraging result could inspire more future work to develop better parallel MCTS algorithms with our theory.

\textbf{Algorithm Design} $\;$ Thm.~\ref{theorem: necessary condition for a good algorithm in the tree search setting} suggests that parallel-MCTS algorithms should be designed to satisfy both necessary conditions. First, it is relatively easier to construct $\overline{N}$ to satisfy its necessary condition. For example, we can borrow wisdom from WU-UCT to choose $\overline{N} (\state, \action) \!:=\! N (\state, \action) \!+\! O (\state, \action)$. On the other hand, however, the necessary condition on $\overline{Q}$ (i.e., $\overline{G} (\state, \action) \!=\! 0$) is more difficult to satisfy strictly. Nevertheless, we find that the magnitude of the average action value gap $\overline{G}(\state, \action)$ has a strong negative correlation with the actual performance (i.e., episode reward) --- see Fig.~\ref{fig: theoretical results demonstration}(b).\footnote{Note that each game step requires a new search tree. Hence the action value gap is averaged w.r.t. (i) search trees built at different game steps and (ii) different nodes in a search tree. See Appendix~\ref{Details for the Action Value Gap vs. Performance Experiments} for more detail.} And the behavior holds true regardless of the algorithms (points with different colors represent different algorithms) as well as the hyperparameters (points of the same color denote results obtained from different hyperparameters). The phenomenon suggests that \emph{designing a parallel-MCTS algorithm that reduces $\overline{G}$ could lead to better performance in practice}. However, according to Eq.~(\ref{eq: necessary condition 1}), directly using the original gap $\overline{G}(s,a)$ for algorithm design is not practical because it requires running the sequential UCT algorithm to compute $Q^{\mathbb{A}_{\mathrm{seq}}}_{m}$. Therefore, a more realistic approach is to construct a \emph{surrogate gap} to approximate $\overline{G}(s,a)$ based on the available statistics. In the following, we give one example to show how to construct such a surrogate gap for designing a better parallel MCTS algorithms. Please refer to Appendix~\ref{Alternative Surrogate Statistics} for more potential options for the surrogate gap. 

Let $O_{i} (\state, \action)$ be the number of on-going simulations associated with the edge $(\state, \action)$ at the $i$th rollout step. We consider using the following statistics $\overline{G}^{*} (\state, \action)$ as a surrogate gap to approximate $\overline{G}(\state, \action)$:
    { \setlength{\abovedisplayskip}{0.1em}
      \setlength{\belowdisplayskip}{-0.9em}
    \begin{align*}
        \overline{G}^{*} \! (\state, \action) \!:=\! \max_{\action' \in \actions} \overline{O}(s',a')
        \!=\! \max_{\action' \in \actions}
        \Big\{ \frac{1}{n} \sum_{i = 1}^{n} \! O_{i} (\state', \action')
        \Big\}
        \; \text{(}\state'\text{~is~the~next~state~following~}(\state, \action)\text{)}, \numberthis \label{eq: surrogate statistics}
    \end{align*}}

\noindent where $n$ is the number of rollouts. Before discussing its key insights, we first examine the correlation between $\overline{G}^{*}$ and the action value gap $\overline{G}$. As shown in Fig.~\ref{fig: action value gap approximation}(a), except for a few outliers, $\overline{G}^{*} (\state, \action)$ and $\overline{G} (\state, \action)$ have a strong positive correlation.\footnote{Note that there are a few data points with $\overline{G} (\state, \action) \!>\! 10$ that the surrogate statistics cannot fit properly, which indicates that there could exist better surrogate gap that potentially leads to better parallel MCTS algorithms.} Motivated by this observation, we seek to design a better parallel MCTS algorithm by reducing the surrogate gap $\overline{G}^{*} (\state, \action)$. In the following, we introduce the proposed algorithm BU-UCT, and highlight how it lowers the surrogate gap $\overline{G}^{*} (\state, \action)$.

\begin{figure}[t]
    \centering
    \includegraphics[width=\columnwidth]{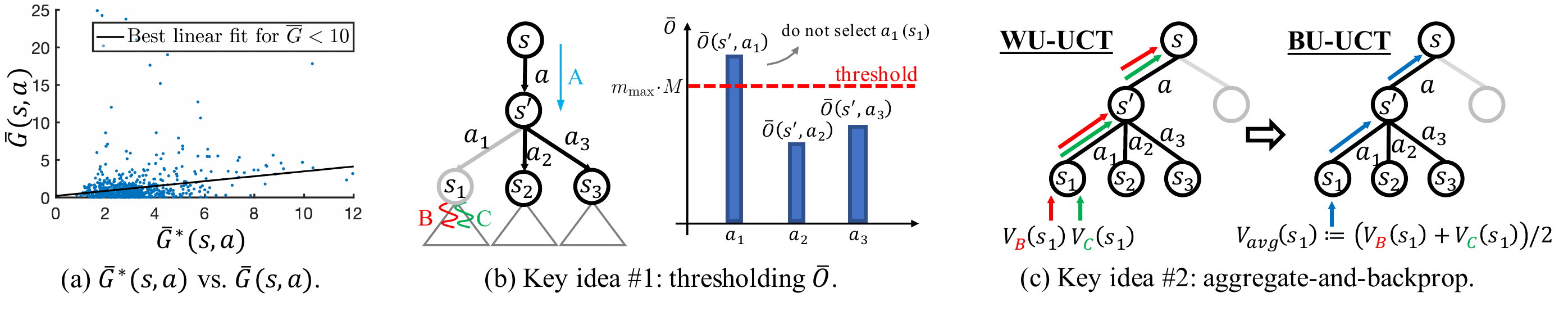}
    \vspace{-2.4em}
    \caption{The BU-UCT algorithm. (a) Motivation: The statistics $\overline{G}^{*}$ is strongly correlated with the original gap $\overline{G}$ , suggesting that it can be used as a surrogate gap to guide algorithm design. (b) Key idea \#1: reducing $\overline{G}^{*}$ by thresholding $\overline{O}$ --- only query nodes whose $\overline{O}$ is smaller than a threshold. (c) Key idea \#2: aggregate the simulation returns on a same node (e.g., $\state_{1}$) and then backpropagate.}
    \label{fig: action value gap approximation}
    \vspace{-1.0em}
\end{figure}

\textbf{Algorithm Details} $\;$ Built on top of WU-UCT, BU-UCT proposes to lower $\overline{G}^{*}$ through (i) thresholding $\overline{O}$, and (ii) aggregating-and-backpropagating simulation returns. The first idea, thresholding $\overline{O}$, seek to explicitly set an upper limit to $\overline{O}$ (and hence $\overline{G}^{*}$). Specifically, BU-UCT keeps record of the $\overline{O}$ values on all edges and assure edges whose $\overline{O}$ is above a threshold will not be selected by the tree policy. Concretely, this is achieved with the following modified action value $\overline{Q}$:
    { \setlength{\abovedisplayskip}{0.4em}
      \setlength{\belowdisplayskip}{-1.2em}
    \begin{align}
        \overline{Q} (\state, \action) := Q (\state, \action) + \mathbb{I} \big [ \, \overline{O} (\state, \action) \!<\! m_{\mathrm{max}} \!\cdot\! M \big ], \label{eq: BU-UCT modified action value}
    \end{align}}
    
\noindent where $m_{\mathrm{max}} \!\in\! (0, 1)$ is a hyperparameter and $M$ is the number of workers; the indicator function $\mathbb{I} [ \cdot ]$ is defined to be zero when the condition holds and $-\infty$ otherwise. Consider the example given in Fig.~\ref{fig: action value gap approximation}(b). Since $\overline{O} (\state', \action_{1})$ is above the threshold $m_{\mathrm{max}} \!\cdot\! M$, its corresponding $\overline{Q}(\state', \action_{1})$ becomes $-\infty$ due to the second term of Eq.~(\ref{eq: BU-UCT modified action value}) and hence the tree policy will not allow the new worker A to select $\action_{1}$, which will eventually decrease $\overline{O} (\state', \action_{1})$ (and hence lower $\overline{G}^{*} (\state, \action)$).

The second key idea, aggregating-and-backpropagating simulation returns, decreases $\overline{G}^{*}$ by reducing the maximum value in $\{ \overline{O} (\state', \action') \mid \action' \!\in\! \actions \}$. Intuitively, $\overline{G}^{*} (\state, \action)$ will be large only if some child nodes of $\state'$ are \emph{constantly} (reflected by the ``average'' operator in the definition of $\overline{O}$) selected by multiple workers. However, it is undesirable for such nodes to be extensively queried in earlier stages as it prevents the algorithm from exploring other nodes. Therefore, BU-UCT decrease the maximum $\overline{O}$ (and hence $\overline{G}^{*}$) by lowering $\overline{N}$ in earlier stages to encourage exploration of other nodes. Specifically, as shown in Fig.~\ref{fig: action value gap approximation}(c), BU-UCT aggregates the simulation returns originated from the same node ($V_{B} (\state_{1})$ and $V_{C} (\state_{1})$) into their mean value ($V_{\mathrm{avg}} (\state_{1})$) and then backpropagate it. Compared to backpropagating all simulation returns individually, backpropagate aggregated statistics lowers $\overline{N}$ at all children of $\state'$, which encourage exploration in earlier stages and hence lowers $\overline{G}^{*}$.

\textbf{Experiment setup} $\;$ We compare BU-UCT with four baselines (i.e., LeafP, RootP, VL-UCT, and WU-UCT) on 15 Atari games. We use a pretrained PPO policy as the default policy during simulation. All experiments are performed with 128 rollouts and 16 workers. See Appendix~\ref{Details of the Experiments} for more details.

\textbf{Experiment results} $\;$ First, we verify speedup. Across 15 Atari games, BU-UCT achieves an average per-step speedup of 14.33 using 16 workers, suggesting that BU-UCT achieves (approximately) linear speedup even with a large number of workers. Next, we compare the performance, measured by average episode reward, between BU-UCT and four baselines. On Each task, we repeat 5 times with the mean and standard deviation reported in Table~\ref{table:atari_benchmark}. Thanks to its efforts to lower the action value gap, BU-UCT outperforms all considered parallel alternatives in 11 out of 15 tasks. Pairwise student t-tests further show that BU-UCT performs significantly better ($p\text{-value} \!<\! 0.05$) than WU-UCT, TreeP, LeafP, and RootP in 2, 8, 12, and 12 tasks, respectively; note except in RoadRunner where WU-UCT tops the chart, in all other tasks BU-UCT performs statistically comparably to the baselines, which promisingly renders it as a potential default choice, if one wants to try one parallel MCTS algorithm.

\newcolumntype{R}{>{$}r<{$}}
\newcolumntype{L}{>{$}l<{$}}
\newcolumntype{M}{R@{}L}
\newcolumntype{C}{R@{}L@{$\;$}L@{}L@{}L@{}L|}

\begin{table*}[t]
\renewcommand\arraystretch{0.8}
\caption{Performance on 15 Atari games. Average episode return ($\pm$ standard deviation) over 5 trials are reported. The best average scores are highlighted in boldface.  According to t-tests, BU-UCT significantly outperforms or is comparable with the existing alternative on 14 games, except RoadRunner where WU-UCT is better. ``\textbf{*}'', ``$\dagger$'', ``$\ddagger$'', and ``$\mathsection$'' denote BU-UCT's large-margin superiority (p-value $<$ 0.05) over WU-UCT, VL-UCT, LeafP, and RootP, respectively.}
\label{table:atari_benchmark}
\vspace{0.2em}

\centering
{\fontsize{8}{8}\selectfont

\setlength{\tabcolsep}{0.8mm}{
\begin{tabular}{c|CMMMM}
\toprule
Environment & \multicolumn{6}{c|}{BU-UCT (ours)} & \multicolumn{2}{c}{WU-UCT} & \multicolumn{2}{c}{VL-UCT} & \multicolumn{2}{c}{LeafP} & \multicolumn{2}{c}{RootP} \\
\midrule

Alien & 5320&\pm231 & &\dagger&\ddagger& & \textbf{5938}&\pm1839 & 4200&\pm1086 & 4280&\pm1016& 5206&\pm282 \\
Boxing & \textbf{100}&\pm0 & &\dagger&\ddagger&\mathsection & \textbf{100}&\pm0 & 99&\pm0 & 95&\pm4 & 98&\pm1 \\
Breakout & \textbf{425}&\pm30 & & &\ddagger&\mathsection & 408&\pm21 & 390&\pm33& 331&\pm45& 281&\pm27 \\
Centipede & \textbf{1610419}&\pm338295 & &\dagger&\ddagger&\mathsection & 1163034&\pm403910 & 439433&\pm207601 & 162333&\pm69575 & 184265&\pm104405 \\
Freeway & \textbf{32}&\pm0& &&& & \textbf{32}&\pm0 & \textbf{32}&\pm0 & 31&\pm1 & \textbf{32}&\pm0 \\
Gravitar & \textbf{5130}&\pm499 & &&\ddagger& & 5060&\pm568 & 4880&\pm1162 & 3385&\pm155 & 4160&\pm1811 \\
MsPacman & 17279&\pm6136 & &&\ddagger&\mathsection & \textbf{19804}&\pm2232 & 14000&\pm2807 & 5378&\pm685 & 7156&\pm583 \\
NameThisGame & \textbf{47066}&\pm5911 &\textbf{*}&\dagger&\ddagger&\mathsection & 29991&\pm1608 & 23326&\pm2585 & 25390&\pm3659 & 27440&\pm9533 \\
RoadRunner & 44920&\pm1478 & &\dagger&\ddagger&\mathsection & \textbf{46720}&\pm1359 & 24680&\pm3316 & 25452&\pm2977 & 38300&\pm1191 \\
Robotank & \textbf{121}&\pm18 & &\dagger&\ddagger&\mathsection & 101&\pm19 & 86&\pm13 & 80&\pm11 & 78&\pm13 \\
Qbert & \textbf{15995}&\pm2635 & &&&\mathsection & 13992&\pm5596 & 14620&\pm5738 & 11655&\pm5373 & 9465&\pm3196 \\
SpaceInvaders & \textbf{3428}&\pm525 & &&&\mathsection & 3393&\pm292 & 2651&\pm828 & 2435&\pm1159 & 2543&\pm809 \\
Tennis & 3&\pm1 & &\dagger&\ddagger&\mathsection & \textbf{4}&\pm1 & -1&\pm0 & -1&\pm0 & 0&\pm1 \\
TimePilot & \textbf{111100}&\pm58919 &\textbf{*}&\dagger&\ddagger&\mathsection & 55130&\pm12474 & 32600&\pm2165 & 38075&\pm2307 & 45100&\pm7421 \\
Zaxxon & \textbf{42500}&\pm4725 & &&\ddagger&\mathsection & 39085&\pm6838 & 39579&\pm3942 & 12300&\pm821 & 13380&\pm769 \\

\bottomrule
\end{tabular}}
}
\vspace{-1.3em}
\end{table*}

\vspace{-0.8em}

\section{Related Works}
\label{Related Works}

\vspace{-0.2em}

MCTS has a profound track record of being adopted to achieve optimal planning and decision making in complex environments \citep{schafer2008uct,browne2012survey,silver2016mastering}. Recently, it has also been combined with learning methods to bring mutual improvements \citep{guo2014deep,shen2018m,silver2017mastering}. 
To maximize the power of MCTS and enable its usage in time-sensitive tasks, effective parallel algorithms are imperative \citep{bourki2010scalability,segal2010scalability}. 
Specifically, leaf parallelization \citep{cazenave2007parallelization,kato2010parallel} manages to collect better statistics by assigning multiple workers to query the same node, at the expense of reducing the tree search diversity. 
In root parallelization, multiple trees are built and statistics are periodically synchronized. 
It promises better performance in some real-world tasks \citep{bourki2010scalability}, while being inferior on Go \citep{soejima2010evaluating}.
In contrast, tree parallelization assigns workers to traverse the same tree. 
To increase search diversity, \citet{chaslot2008parallel} proposes a virtual loss. 
Though having been adopted in some high-profile applications \citep{powley2011determinization}, virtual loss punishes performance under even four workers \citep{mirsoleimani2017analysis}.
So far, WU-UCT \citep{liu2020watch} achieves the best tradeoff (i.e., linear speedup with small performance loss) by introducing statistics to track on-going simulations.
Another related line of works focus on \emph{distributed} multi-armed bandits (MAB) \citep{liu2010distributed,hillel2013distributed,lai1985asymptotically,martinez2019decentralized}, which is similar to parallel MCTS; in both multiple workers collaborate to improve the planning performance. 
Though inspiring, this line shares an overarching theme that highlights inter-agent communication, making their results not directly adaptable to our setting. 

\vspace{-0.8em}

\section{Conclusion}
\label{Conclusion}

\vspace{-0.2em}

In this paper, we established the first theoretical foundation for parallel MCTS algorithm. In particular, we derived two necessary conditions for the algorithms to achieve a desired performance. The conditions can be used to diagnose existing algorithms and guide future algorithm design. We justify the first benefit (i.e., diagnosing existing algorithms) by identifying the key design wisdom inherent in existing algorithms. The second benefit (i.e., inspiring future algorithms) is demonstrated by constructing a new parallel MCTS algorithm, BU-UCT, based on our theoretical guidelines.

\bibliography{references}

\begin{thebibliography}{36}
\providecommand{\natexlab}[1]{#1}
\providecommand{\url}[1]{\texttt{#1}}
\expandafter\ifx\csname urlstyle\endcsname\relax
  \providecommand{\doi}[1]{doi: #1}\else
  \providecommand{\doi}{doi: \begingroup \urlstyle{rm}\Url}\fi

\bibitem[Auer(2002)]{auer2002using}
Peter Auer.
\newblock Using confidence bounds for exploitation-exploration trade-offs.
\newblock \emph{Journal of Machine Learning Research}, 3\penalty0
  (Nov):\penalty0 397--422, 2002.

\bibitem[Auer et~al.(2002)Auer, Cesa-Bianchi, and Fischer]{auer2002finite}
Peter Auer, Nicolo Cesa-Bianchi, and Paul Fischer.
\newblock Finite-time analysis of the multiarmed bandit problem.
\newblock \emph{Machine learning}, 47\penalty0 (2-3):\penalty0 235--256, 2002.

\bibitem[Bourki et~al.(2010)Bourki, Chaslot, Coulm, Danjean, Doghmen, Hoock,
  H{\'e}rault, Rimmel, Teytaud, Teytaud, et~al.]{bourki2010scalability}
Amine Bourki, Guillaume Chaslot, Matthieu Coulm, Vincent Danjean, Hassen
  Doghmen, Jean-Baptiste Hoock, Thomas H{\'e}rault, Arpad Rimmel, Fabien
  Teytaud, Olivier Teytaud, et~al.
\newblock Scalability and parallelization of monte-carlo tree search.
\newblock In \emph{International Conference on Computers and Games}, pp.\
  48--58. Springer, 2010.

\bibitem[Browne et~al.(2012)Browne, Powley, Whitehouse, Lucas, Cowling,
  Rohlfshagen, Tavener, Perez, Samothrakis, and Colton]{browne2012survey}
Cameron~B Browne, Edward Powley, Daniel Whitehouse, Simon~M Lucas, Peter~I
  Cowling, Philipp Rohlfshagen, Stephen Tavener, Diego Perez, Spyridon
  Samothrakis, and Simon Colton.
\newblock A survey of monte carlo tree search methods.
\newblock \emph{IEEE Transactions on Computational Intelligence and AI in
  games}, 4\penalty0 (1):\penalty0 1--43, 2012.

\bibitem[Buldygin \& Kozachenko(1980)Buldygin and Kozachenko]{buldygin1980sub}
Valerii~V Buldygin and Yu~V Kozachenko.
\newblock Sub-gaussian random variables.
\newblock \emph{Ukrainian Mathematical Journal}, 32\penalty0 (6):\penalty0
  483--489, 1980.

\bibitem[Cazenave \& Jouandeau(2007)Cazenave and
  Jouandeau]{cazenave2007parallelization}
Tristan Cazenave and Nicolas Jouandeau.
\newblock On the parallelization of uct.
\newblock In \emph{proceedings of the Computer Games Workshop}, pp.\  93--101.
  Citeseer, 2007.

\bibitem[Chaslot et~al.(2008)Chaslot, Winands, and van
  Den~Herik]{chaslot2008parallel}
Guillaume MJ-B Chaslot, Mark~HM Winands, and H~Jaap van Den~Herik.
\newblock Parallel monte-carlo tree search.
\newblock In \emph{International Conference on Computers and Games}, pp.\
  60--71. Springer, 2008.

\bibitem[Cou{\"e}toux et~al.(2017)Cou{\"e}toux, M{\"u}ller, and
  Teytaud]{couetoux2017monte}
Adrien Cou{\"e}toux, Martin M{\"u}ller, and Olivier Teytaud.
\newblock Monte carlo tree search in go, 2017.

\bibitem[Devlin et~al.(2016)Devlin, Anspoka, Sephton, Cowling, and
  Rollason]{devlin2016combining}
Sam Devlin, Anastasija Anspoka, Nick Sephton, Peter~I Cowling, and Jeff
  Rollason.
\newblock Combining gameplay data with monte carlo tree search to emulate human
  play.
\newblock In \emph{Twelfth Artificial Intelligence and Interactive Digital
  Entertainment Conference}, 2016.

\bibitem[Feldman \& Domshlak(2014{\natexlab{a}})Feldman and
  Domshlak]{feldman2014mabs}
Zohar Feldman and Carmel Domshlak.
\newblock On mabs and separation of concerns in monte-carlo planning for mdps.
\newblock In \emph{ICAPS}, 2014{\natexlab{a}}.

\bibitem[Feldman \& Domshlak(2014{\natexlab{b}})Feldman and
  Domshlak]{feldman2014simple}
Zohar Feldman and Carmel Domshlak.
\newblock Simple regret optimization in online planning for markov decision
  processes.
\newblock \emph{Journal of Artificial Intelligence Research}, 51:\penalty0
  165--205, 2014{\natexlab{b}}.

\bibitem[Gelly \& Wang(2006)Gelly and Wang]{gelly2006exploration}
Sylvain Gelly and Yizao Wang.
\newblock Exploration exploitation in go: Uct for monte-carlo go.
\newblock In \emph{NIPS: Neural Information Processing Systems Conference
  On-line trading of Exploration and Exploitation Workshop}, 2006.

\bibitem[Guo et~al.(2014)Guo, Singh, Lee, Lewis, and Wang]{guo2014deep}
Xiaoxiao Guo, Satinder Singh, Honglak Lee, Richard~L Lewis, and Xiaoshi Wang.
\newblock Deep learning for real-time atari game play using offline monte-carlo
  tree search planning.
\newblock In \emph{Advances in neural information processing systems}, pp.\
  3338--3346, 2014.

\bibitem[Hillel et~al.(2013)Hillel, Karnin, Koren, Lempel, and
  Somekh]{hillel2013distributed}
Eshcar Hillel, Zohar~S Karnin, Tomer Koren, Ronny Lempel, and Oren Somekh.
\newblock Distributed exploration in multi-armed bandits.
\newblock In \emph{Advances in Neural Information Processing Systems}, pp.\
  854--862, 2013.

\bibitem[Kato \& Takeuchi(2010)Kato and Takeuchi]{kato2010parallel}
Hideki Kato and Ikuo Takeuchi.
\newblock Parallel monte-carlo tree search with simulation servers.
\newblock In \emph{2010 International Conference on Technologies and
  Applications of Artificial Intelligence}, pp.\  491--498. IEEE, 2010.

\bibitem[Kaufmann \& Koolen(2017)Kaufmann and Koolen]{kaufmann2017monte}
Emilie Kaufmann and Wouter~M Koolen.
\newblock Monte-carlo tree search by best arm identification.
\newblock In \emph{Advances in Neural Information Processing Systems}, pp.\
  4897--4906, 2017.

\bibitem[Kocsis et~al.(2006)Kocsis, Szepesv{\'a}ri, and
  Willemson]{kocsis2006improved}
Levente Kocsis, Csaba Szepesv{\'a}ri, and Jan Willemson.
\newblock Improved monte-carlo search.
\newblock \emph{Univ. Tartu, Estonia, Tech. Rep}, 1, 2006.

\bibitem[Lai \& Robbins(1985)Lai and Robbins]{lai1985asymptotically}
Tze~Leung Lai and Herbert Robbins.
\newblock Asymptotically efficient adaptive allocation rules.
\newblock \emph{Advances in applied mathematics}, 6\penalty0 (1):\penalty0
  4--22, 1985.

\bibitem[Liu et~al.(2020)Liu, Chen, Yu, Zhai, Zhou, and Liu]{liu2020watch}
Anji Liu, Jianshu Chen, Mingze Yu, Yu~Zhai, Xuewen Zhou, and Ji~Liu.
\newblock Watch the unobserved: A simple approach to parallelizing monte carlo
  tree search.
\newblock In \emph{International Conference on Learning Representations}, April
  2020.
\newblock URL \url{https://openreview.net/forum?id=BJlQtJSKDB}.

\bibitem[Liu \& Zhao(2010)Liu and Zhao]{liu2010distributed}
Keqin Liu and Qing Zhao.
\newblock Distributed learning in multi-armed bandit with multiple players.
\newblock \emph{IEEE Transactions on Signal Processing}, 58\penalty0
  (11):\penalty0 5667--5681, 2010.

\bibitem[Liu \& Tsuruoka(2015)Liu and Tsuruoka]{liu2015regulation}
Yun-Ching Liu and Yoshimasa Tsuruoka.
\newblock Regulation of exploration for simple regret minimization in
  monte-carlo tree search.
\newblock In \emph{2015 IEEE Conference on Computational Intelligence and Games
  (CIG)}, pp.\  35--42. IEEE, 2015.

\bibitem[Mart{\'\i}nez-Rubio et~al.(2019)Mart{\'\i}nez-Rubio, Kanade, and
  Rebeschini]{martinez2019decentralized}
David Mart{\'\i}nez-Rubio, Varun Kanade, and Patrick Rebeschini.
\newblock Decentralized cooperative stochastic bandits.
\newblock In \emph{Advances in Neural Information Processing Systems}, pp.\
  4531--4542, 2019.

\bibitem[Mazumdar et~al.(2017)Mazumdar, Dong, Royo, Tomlin, and
  Sastry]{mazumdar2017multi}
Eric Mazumdar, Roy Dong, Vicen{\c{c}}~R{\'u}bies Royo, Claire Tomlin, and
  S~Shankar Sastry.
\newblock A multi-armed bandit approach for online expert selection in markov
  decision processes.
\newblock \emph{arXiv preprint arXiv:1707.05714}, 2017.

\bibitem[Mirsoleimani et~al.(2017)Mirsoleimani, Plaat, van~den Herik, and
  Vermaseren]{mirsoleimani2017analysis}
S~Ali Mirsoleimani, Aske Plaat, H~Jaap van~den Herik, and Jos Vermaseren.
\newblock An analysis of virtual loss in parallel mcts.
\newblock In \emph{ICAART (2)}, pp.\  648--652, 2017.

\bibitem[Pepels et~al.(2014)Pepels, Cazenave, Winands, and
  Lanctot]{pepels2014minimizing}
Tom Pepels, Tristan Cazenave, Mark~HM Winands, and Marc Lanctot.
\newblock Minimizing simple and cumulative regret in monte-carlo tree search.
\newblock In \emph{Workshop on Computer Games}, pp.\  1--15. Springer, 2014.

\bibitem[Poromaa(2017)]{poromaa2017crushing}
Erik~Ragnar Poromaa.
\newblock Crushing candy crush: predicting human success rate in a mobile game
  using monte-carlo tree search, 2017.

\bibitem[Powley et~al.(2011)Powley, Whitehouse, and
  Cowling]{powley2011determinization}
Edward~J Powley, Daniel Whitehouse, and Peter~I Cowling.
\newblock Determinization in monte-carlo tree search for the card game dou di
  zhu.
\newblock \emph{Proc. Artif. Intell. Simul. Behav}, pp.\  17--24, 2011.

\bibitem[Sch{\"a}fer et~al.(2008)Sch{\"a}fer, Buro, and
  Hartmann]{schafer2008uct}
Jan Sch{\"a}fer, Michael Buro, and Knut Hartmann.
\newblock The uct algorithm applied to games with imperfect information.
\newblock \emph{Diploma, Otto-Von-Guericke Univ. Magdeburg, Magdeburg,
  Germany}, 11, 2008.

\bibitem[Schrittwieser et~al.(2019)Schrittwieser, Antonoglou, Hubert, Simonyan,
  Sifre, Schmitt, Guez, Lockhart, Hassabis, Graepel,
  et~al.]{schrittwieser2019mastering}
Julian Schrittwieser, Ioannis Antonoglou, Thomas Hubert, Karen Simonyan,
  Laurent Sifre, Simon Schmitt, Arthur Guez, Edward Lockhart, Demis Hassabis,
  Thore Graepel, et~al.
\newblock Mastering atari, go, chess and shogi by planning with a learned
  model.
\newblock \emph{arXiv preprint arXiv:1911.08265}, 2019.

\bibitem[Schulman et~al.(2017)Schulman, Wolski, Dhariwal, Radford, and
  Klimov]{schulman2017proximal}
John Schulman, Filip Wolski, Prafulla Dhariwal, Alec Radford, and Oleg Klimov.
\newblock Proximal policy optimization algorithms.
\newblock \emph{arXiv preprint arXiv:1707.06347}, 2017.

\bibitem[Segal(2010)]{segal2010scalability}
Richard~B Segal.
\newblock On the scalability of parallel uct.
\newblock In \emph{International Conference on Computers and Games}, pp.\
  36--47. Springer, 2010.

\bibitem[Shen et~al.(2018)Shen, Chen, Huang, Guo, and Gao]{shen2018m}
Yelong Shen, Jianshu Chen, Po-Sen Huang, Yuqing Guo, and Jianfeng Gao.
\newblock M-walk: Learning to walk over graphs using monte carlo tree search.
\newblock In \emph{Advances in Neural Information Processing Systems}, pp.\
  6786--6797, 2018.

\bibitem[Silver et~al.(2016)Silver, Huang, Maddison, Guez, Sifre, Van
  Den~Driessche, Schrittwieser, Antonoglou, Panneershelvam, Lanctot,
  et~al.]{silver2016mastering}
David Silver, Aja Huang, Chris~J Maddison, Arthur Guez, Laurent Sifre, George
  Van Den~Driessche, Julian Schrittwieser, Ioannis Antonoglou, Veda
  Panneershelvam, Marc Lanctot, et~al.
\newblock Mastering the game of go with deep neural networks and tree search.
\newblock \emph{nature}, 529\penalty0 (7587):\penalty0 484, 2016.

\bibitem[Silver et~al.(2017)Silver, Schrittwieser, Simonyan, Antonoglou, Huang,
  Guez, Hubert, Baker, Lai, Bolton, et~al.]{silver2017mastering}
David Silver, Julian Schrittwieser, Karen Simonyan, Ioannis Antonoglou, Aja
  Huang, Arthur Guez, Thomas Hubert, Lucas Baker, Matthew Lai, Adrian Bolton,
  et~al.
\newblock Mastering the game of go without human knowledge.
\newblock \emph{nature}, 550\penalty0 (7676):\penalty0 354--359, 2017.

\bibitem[Soejima et~al.(2010)Soejima, Kishimoto, and
  Watanabe]{soejima2010evaluating}
Yusuke Soejima, Akihiro Kishimoto, and Osamu Watanabe.
\newblock Evaluating root parallelization in go.
\newblock \emph{IEEE Transactions on Computational Intelligence and AI in
  Games}, 2\penalty0 (4):\penalty0 278--287, 2010.

\bibitem[Yoshizoe et~al.(2011)Yoshizoe, Kishimoto, Kaneko, Yoshimoto, and
  Ishikawa]{yoshizoe2011scalable}
Kazuki Yoshizoe, Akihiro Kishimoto, Tomoyuki Kaneko, Haruhiro Yoshimoto, and
  Yutaka Ishikawa.
\newblock Scalable distributed monte-carlo tree search.
\newblock In \emph{Fourth Annual Symposium on Combinatorial Search}, 2011.

\end{thebibliography}
\bibliographystyle{iclr2021_conference}

\appendix
\onecolumn
\section*{\centering \huge \bf Supplementary Material}

In this supplementary material, we first give a more detailed review of existing parallel MCTS algorithms in Appendix~\ref{Existing Parallel MCTS Algorithms}. We then formally introduce the general algorithm framework for parallel MCTS algorithms (see Figure~\ref{fig: general parallel MCTS framework} as well as Algorithm~\ref{alg: general framework of parallel MCTS algs}) in Appendix~\ref{A General Framework for Parallel MCTS Algorithms}, especially showing how existing algorithms fall into our general framework (Appendix~\ref{Specialization of Existing Parallel MCTS Algorithms into the General Framework}). Then we provide the detailed proofs of the parallel algorithms in Appendix~\ref{Appendix:MCTS}. The supplementary ends up with more details on the proposed algorithm BU-UCT (Appendix~\ref{Additional Details for BU-UCT}), the surrogate statistics introduced in Section~\ref{Towards Optimal Parallel Tree Search} (Appendix~\ref{Alternative Surrogate Statistics}), and additional details for the Atari experiments (Appendix~\ref{Details of the Experiments}).

\section{Existing parallel MCTS algorithms}
\label{Existing Parallel MCTS Algorithms}

Leaf Parallelization (LeafP) \citep{cazenave2007parallelization}, Root Parallelization (RootP) \citep{cazenave2007parallelization}, and Tree Parallelization (TreeP) \citep{chaslot2008parallel} develop different ways to cooperate among the workers.\footnote{LeafP and RootP are originally called ``single-run'' parallelization \citep{cazenave2007parallelization}.} 
As shown in Figure~\ref{fig: typical parallel MCTS algs}, \emph{LeafP} and \emph{RootP} parallelize MCTS from the leaf nodes and the root node, respectively. Specifically, in LeafP, only a main process performs sequential rollouts. However, during the simulation step, $M$ workers simultaneously query the same node choosed in the selection and expansion steps, and after all simulations complete, the simulation returns are backpropagated to update node statistics along the selected path. In RootP, $M$ workers independently run sequential MCTS and maintain different search trees, each with a predefined rollout budget. After all workers complete their jobs, the statistics are aggregated to make the final decision (i.e. which action to take). On the other hand, in TreeP, the workers independently perform sequential rollouts on a shared search tree. Node statistics updated by any worker are immediately observable by other workers.
    
TreeP with Virtual Loss (VL-UCT) \citep{segal2010scalability,silver2016mastering} and Watch the Unobserved in UCT (WU-UCT) \citep{liu2020watch} pre-adjust the node statistics with side information before the respective simulation tasks are initiated. As shown in Figure~\ref{fig: typical parallel MCTS algs}, to encourage different workers to explore different nodes, VL-UCT penalizes the action value (i.e., $\overline{Q}$) of nodes that are currently being simulated by some workers so that other workers tend not to query this same set of nodes. Specifically, it has the following two variants. The \emph{hard penalty} version \citep{chaslot2008parallel} adds fixed virtual rewards $r_{\mathrm{VL}}$ directly to the average return and uses the following expression in the tree policy:
    \begin{align*}
        \overline{Q} (\state, \action) := Q (\state, \action) - O (\state, \action) \cdot r_{\mathrm{VL}}. \quad \text{(VL-UCT hard penalty)}
    \end{align*}
Instead of directly penalizing $\overline{Q}$, when a node is being simulated by a worker, the \emph{soft penalty} version \citep{silver2016mastering} adds $n_{\mathrm{VL}}$ virtual simulation returns each with reward $ - r_{\mathrm{VL}}$: 
    \begin{align*}
        \overline{Q} (\state, \action) &:= \frac{Q (\state, \action) \cdot N (\state, \action) - r_{\mathrm{VL}} \cdot n_{\mathrm{VL}} \cdot O (\state, \action)}{N (\state, \action) + n_{\mathrm{VL}} \cdot O (\state, \action)}, \\
        \overline{N} (\state, \action) &:= N (\state, \action) + n_{\mathrm{VL}} \cdot O (\state, \action). \quad \text{(VL-UCT soft penalty)}
    \end{align*}
Intuitively, the hard version of VL-UCT aggressively encourages different workers to explore different nodes, while the soft version has diminishing effect as the visit count grows to infinity. 

In anticipation that the confidence in our estimates of $Q (\state, \action)$ will eventually increase if some child nodes of $(\state, \action)$ are currently being simulated, \cite{liu2020watch} proposes to only adjust the visit count $\overline{N}$ by
    \begin{align*}
        \overline{N} (\state, \action) := N (\state, \action) + O (\state, \action). \quad {(\text{WU-UCT})}
    \end{align*}

\section{A General Framework for Parallel MCTS Algorithms}
\label{A General Framework for Parallel MCTS Algorithms}

This section formally introduce a general algorithm framework for parallel MCTS algorithms, which is a critical component of Theorem~\ref{theorem: necessary condition for a good algorithm in the tree search setting}. Specifically, since the necessary conditions stated in Theorem~\ref{theorem: necessary condition for a good algorithm in the tree search setting} only apply to algorithms that are specialized from the general algorithm framework, it is important that the framework covers all major existing parallel MCTS algorithms and their variants. In the following, we first introduce the general framework for parallel MCTS (Appendix~\ref{Formal Introduction of The General Algorithm Framework}) and provide additional details (Appendix~\ref{Additional Details of the General Algorithm Framework}). Appendix~\ref{Specialization of Existing Parallel MCTS Algorithms into the General Framework} then explains how existing approaches fit in the general algorithm framework.

\subsection{Formal Introduction of The General Algorithm Framework}
\label{Formal Introduction of The General Algorithm Framework}

In the following, we first provide an overview of the general framework for parallel MCTS algorithms, highlighting its two key modules, \emph{statistics collection} and \emph{statistics augmentation}, which allow it to represent various existing methods. We then discuss both modules in detail.

\noindent \textbf{Overview} $\;$ The general framework consists of a master process and $M$ simulator processes. Simulators perform simulations and return the outcomes (i.e. $V (\state)$) back to the master. All simulators communicate \emph{only} with the master and perform one simulation at a time. $M$ search trees $\{\mathcal{T}_{m}\}_{m = 1}^{M}$ are maintained to mimic $M$ distinct sets of statistics stored in existing algorithms. For example, in RootP (Figure~\ref{fig: typical parallel MCTS algs}), each of the $M$ workers maintain a search tree locally with different statistics, which can be represented by the $M$ search trees in the general algorithm framework, respectively. As illustrated by the block diagram in Figure~\ref{fig: general parallel MCTS framework}, the master performs \emph{rollouts} repeatedly to gradually build the $M$ search trees and the statistics in them.\footnote{A \emph{rollout} represents the process of executing \emph{all} steps in the block diagram illustrated in Figure~\ref{fig: general parallel MCTS framework} once, while a \emph{simulation} refers to a step in the rollout process that queries a node's value (i.e. $V (\state)$).} During this process, \emph{statistics collection} and \emph{statistics augmentation} are two crucial modules in the rollout process that make the general framework flexible enough to represent various algorithms. Specifically, statistics collection consists of the \emph{tree selection}, \emph{simulation}, and \emph{tree sync} steps, which characterize how the master employs the simulators to obtain simulation results and use them to update the $M$ search trees. Statistics augmentation includes the \emph{pseudo statistics pre-update} and \emph{backpropagation} steps, both aiming to improve node statistics in individual search trees with additional side information to achieve better exploration-exploitation tradeoff during \emph{node selection}.

We briefly go through the rollout process, where the important steps will be further discussed later. In Figure~\ref{fig: general parallel MCTS framework}, at the beginning of each rollout, a search tree $\mathcal{T}_{m}$ is selected using the function $f_{\mathrm{sel}}$ in the \emph{tree selection} step. Then, during \emph{node selection}, $\mathcal{T}_{m}$ is traversed using a modified tree policy, where a set of modified statistics ($\overline{Q}_{m}$ and $\overline{N}_{m}$) are adopted. The modified statistics are defined as follows:
    \begin{align}
        \overline{Q}_{m} (\state, \action) & := \alpha_{m} (\state, \action) Q_{m} (\state, \action) + \beta_{m} (\state, \action) \widetilde{Q}_{m} (\state, \action), \label{eq: modified value definition --} \\
        \overline{N}_{m} (\state, \action) & := N_{m} (\state, \action) + \widetilde{N}_{m} (\state, \action), \label{eq: modified visit count definition --}
    \end{align}
\noindent where $Q_{m}$ and $N_{m}$ are the original statistics used in the sequential MCTS algorithm (Eq.~(\ref{eq: UCT tree policy})); $\widetilde{Q}_{m}$ and $\widetilde{N}_{m}$ are a set of pseudo statistics that incorporate additional side information; $\alpha_{m}$ and $\beta_{m}$ controls the ratio between $Q_{m}$ and $\widetilde{Q}_{m}$. Note that Eqs.~(\ref{eq: modified value definition --}) and (\ref{eq: modified visit count definition --}) resemble Eq.~(\ref{eq: modified value and visit count definition}) in the main text. Then, after \emph{expanding} a new node in a similar manner to the sequential algorithm, the \emph{pseudo statistics pre-update} step\footnote{The statistics $O_{m} (\state, \action)$ in Figure~\ref{fig: general parallel MCTS framework} will be introduced in the ``statistics augmentation'' paragraph.} adjusts the pseudo statistics using the functions $f_{\widetilde{Q}}$ and $f_{\widetilde{N}}$. Afterwards, it assigns the \emph{simulation} task to an idle simulator. Rollouts are started over again here unless all simulators are occupied or have completed task not yet processed by the master. Otherwise, the master \emph{waits} for a completed simulation result and performs \emph{backpropagation}, which consists of the traditional update (i.e. Eqs.~(\ref{eq: recursive N and V hat update})-(\ref{eq: recursive Q update})) and a \emph{pseudo statistics post-update} step. In the post-update step, pseudo statistics are adjusted with $g_{\widetilde{Q}}$ and $g_{\widetilde{N}}$. Finally, information from different search trees are synchronized every $\tau_{\mathrm{syn}}$ rollouts, where $\tau_{\mathrm{syn}}$ is defined as the \emph{synchronization interval}.

\noindent \textbf{Statistics collection} $\;$ By choosing different $f_{\mathrm{sel}}$ and $\tau_{\mathrm{syn}}$ (in the tree selection and tree sync step, respectively), the simulators cooperate in different collaboration models that appear in various existing algorithms. Specifically, if tree sync happens in all rollout steps (i.e. $\tau_{\mathrm{syn}} \!=\! 1$), then the $M$ search trees are always identical during node selection, which can be regarded as $M$ workers performing sequential rollouts (Section~\ref{MCTS and Its Parallelization}) on a shared search tree, representing TreeP. On the other hand, when having no communication between the search trees until finishing the last rollout (i.e. $\tau_{\mathrm{syn}} \!=\! n_{\mathrm{max}}$, the total number of rollouts), and letting $f_{\mathrm{sel}}$ choose the search tree that is updated in the backpropagation step of the previous rollout (i.e. $\mathcal{T}_{\hat{m}}$), then the $M$ search trees can be regarded as search trees maintained by $M$ independent sequential MCTS algorithms, which resembles RootP.\footnotemark[1]

\begin{figure}[t]
    \centering
    \includegraphics[width=\columnwidth]{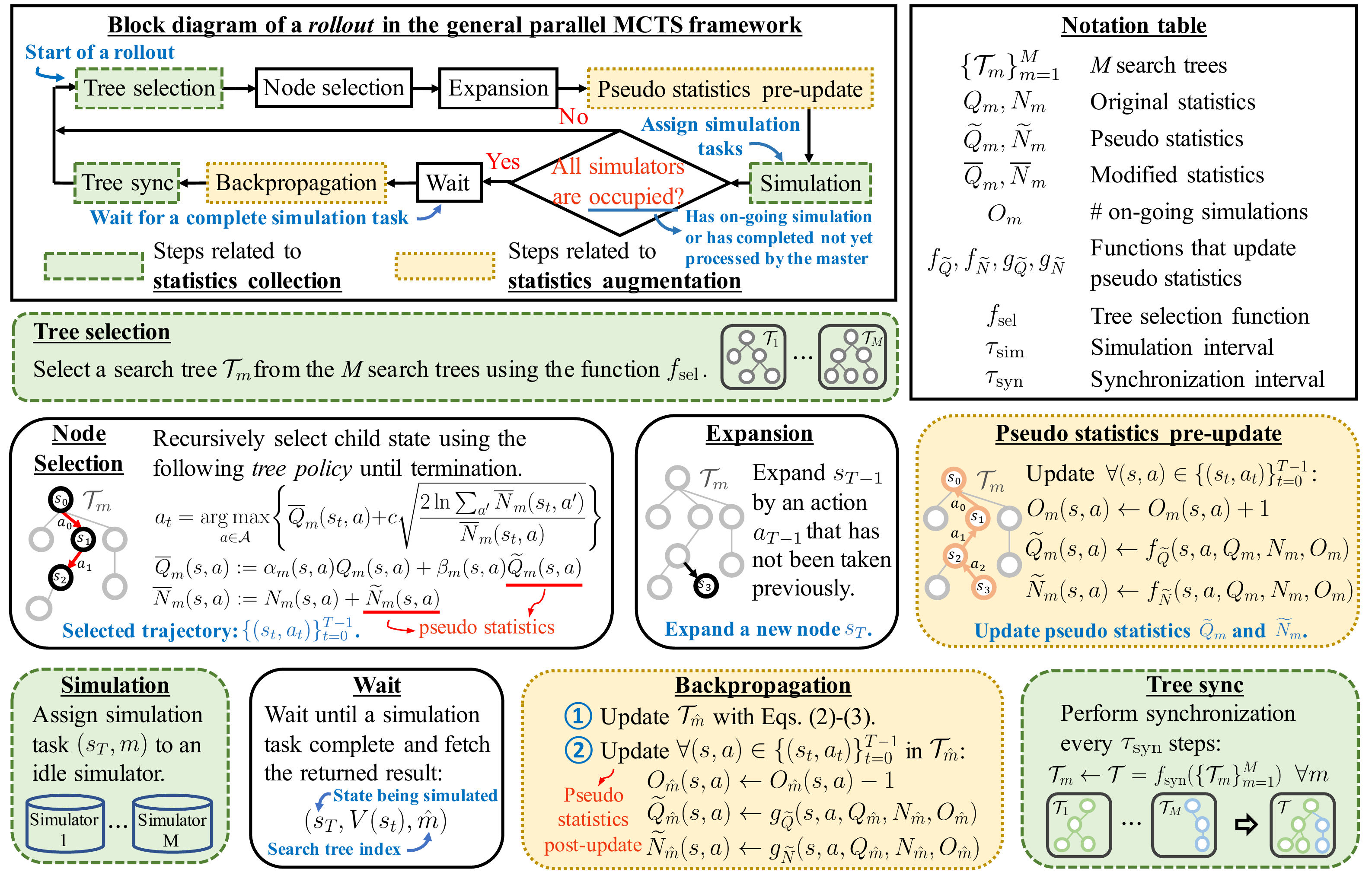}
    \vspace{-2em}
    \caption{The proposed general framework that covers existing parallel MCTS algorithms. The overall diagram is on the top-left, the notation is on the top-right, and the details are in other boxes.}
    \label{fig: general parallel MCTS framework}
    \vspace{-2.0mm}
\end{figure}

\noindent \textbf{Statistics augmentation} $\;$ Statistics augmentation focuses on using extra side information besides the simulation returns to improve the tree policy. Specifically, besides the statistics extracted from completed simulations (i.e. $Q_{m}$ and $N_{m}$), the general framework also uses pseudo statistics (i.e. $\widetilde{Q}_{m}$ and $\widetilde{N}_{m}$) to incorporate information from on-going simulations into its tree policy. Central to the pseudo statistics is the \emph{incomplete sample count} $O_{m}$ that keeps track of the number of initiated but not yet completed simulations for each node \cite{liu2020watch}. It is used to construct pseudo statistics through the \emph{pseudo statistics pre-/post-update} steps. For example, by choosing $\alpha_{m} (\state, \action) \!=\! 1$, $\beta_{m} (\state, \action) \!=\! 0$, $f_{\widetilde{N}} (\state, \action) \!=\! g_{\widetilde{N}} (\state, \action) \!=\! O_{m} (\state, \action)$, the general framework is specialized into WU-UCT.\footnotemark[1]

Finally, Table~\ref{table: parallel MCTS algorithms specification extended} summarizes how different hyperparameter choices in the general parallel MCTS framework specialize the general algorithm to various existing parallel MCTS algorithms. Please refer to Appendix~\ref{Specialization of Existing Parallel MCTS Algorithms into the General Framework} for rigorous justifications for such specializations.

\begin{table}[t]
\setlength{\textfloatsep}{1.2em}
\caption{Different choices of the parameters in the general parallel MCTS algorithm framework correspond to various existing parallel MCTS algorithms. $N_m$ and $O_m$ are abbreviation of $N_m (\state, \action)$ and $O_m (\state, \action)$, respectively. $n_{\mathrm{max}}$ is the total number of rollouts. $r_{\mathrm{VL}}$ and $n_{\mathrm{VL}}$ are hyperparameters specific to VL-UCT. $m'$ and $\hat{m}$ are the index of the search tree selected in the previous tree selection step and updated in the previous backpropagation step, respectively.}
\label{table: parallel MCTS algorithms specification extended}

\centering
{\fontsize{8}{8}\selectfont

\setlength{\tabcolsep}{2.6mm}{
\begin{tabular}{c|cc|cccc}
     \toprule
     Algorithm & $f_{\mathrm{sel}} (m', \hat{m})$ & $\syncrate$ & $\alpha_m (\state, \action)$ & $\beta_m (\state, \action)$ & $\widetilde{Q}_m (\state, \action)$ & $\widetilde{N}_m (\state, \action)$ \\
     \midrule
     UCT & $1$ & $1$ & 1 & $0$ & $0$ & $0$ \\
     \midrule
     LeafP & $(m' + 1) \% M$ & $M$ & $1$ & $0$ & $0$ & $0$ \\
     RootP & $\hat{m}$ & $n_{\mathrm{max}}$ & $1$ & $0$ & $0$ & $0$ \\
     TreeP & $\mathrm{randint} (M)$ & $1$ & $1$ & $0$ & $0$ & $0$ \\
     WU-UCT & $\mathrm{randint} (M)$ & $1$ & $1$ & $0$ & $0$ & $O_m$ \\
     VL-UCT (hard) & $\mathrm{randint} (M)$ & $1$ & $1$ & $O_m$ & $-r_{\mathrm{VL}}$ & $0$ \\
     VL-UCT (soft) & $\mathrm{randint} (M)$ & $1$ & $\frac{N_m}{N_m + n_{\mathrm{VL}} \!\cdot O_m}$ & $\frac{n_{\mathrm{VL}} \!\cdot O_m}{N_m + n_{\mathrm{VL}} \!\cdot O_m}$ & $-r_{\mathrm{VL}}$ & $n_{\mathrm{VL}} \!\cdot O_m$ \\
     \bottomrule
\end{tabular}
}
}
\vspace{-0.4em}
\end{table}

\begin{figure}[t]
\begin{algorithm}[H]
\caption{A general framework of parallel MCTS algorithms.}
\label{alg: general framework of parallel MCTS algs}
{\fontsize{10}{10} \selectfont
\begin{algorithmic}[1]

\STATE{\textbf{input:} Environment $\mathcal{M}$; number of simulator processes $M$; number of rollouts $N$; functions $\alpha_{m} (\forall m)$, $\beta_{m} (\forall m)$, $f_{\mathrm{sel}}$, $f_{\widetilde{Q}}$, $f_{\widetilde{N}}$, $g_{\widetilde{Q}}$, $g_{\widetilde{N}}$; synchronization interval $\syncrate$; initial state $\state_{0}$; maximum depth $d_{\mathrm{max}}$.\footnotemark}

\STATE{\textbf{initialize:} number of completed simulations $n_{\mathrm{complete}} \!\leftarrow\! 0$; search tree No. $m' \!\leftarrow\! M$ and $\hat{m} \!\leftarrow\! 1$; $M$ search trees $\{ \mathcal{T}_{m} \}_{m = 1}^{M}$, each with node set $\mathcal{V}_{m} \!\leftarrow\! \{ \state_{0} \}$ and edge set $\mathcal{E}_{m} \!\leftarrow\! \emptyset$:
    { \setlength{\abovedisplayskip}{0.2em}
      \setlength{\belowdisplayskip}{-0.4em}
    \begin{align*}
        \mathcal{T}_{m} := \langle (\mathcal{V}_{m}, \mathcal{E}_{m}), \{Q_{m},  \widetilde{Q}_{m}, N_{m}, \widetilde{N}_{m}, O_{m} \} \rangle,
    \end{align*}}
    
\noindent where the statistics $\{Q_{m},  \widetilde{Q}_{m}, N_{m}, \widetilde{N}_{m}, O_{m} \}$ are initialized to zero.
\vspace{1.0em}
}

\WHILE{$n_{\mathrm{complete}} < N$}

\STATE{{\bf (Tree selection)} Select a search tree $\mathcal{T}_{m}$ where $m = f_{\mathrm{sel}} (m', \hat{m}) \in \{1,\ldots, M\}$.}

\vspace{0.2em}

\STATE{{\bf (Node selection)} Traverse over $\mathcal{T}_{m}$ according to the following tree policy and collect a sequence of traversed state action pair $\{ (\state_{t}, \action_{t}) \}_{t = 0}^{T - 1}$, where $\state_{0}$ is the root node and $\state_{T - 1}$ is the state that satisfies one of the following conditions: (i) it contains unexpanded child nodes, (ii) its depth exceed $d_{\mathrm{max}}$:
    { \setlength{\abovedisplayskip}{-0.2em}
      \setlength{\belowdisplayskip}{-0.0em}
    \begin{align}
        \action_{t} = \argmax_{\action \in \actions} \left \{ \overline{Q}_{m} (\state_{t}, \action) + c \sqrt{\frac{2 \ln \sum_{\action'} \overline{N}_{m} (\state_{t}, \action')}{\overline{N}_{m} (\state_{t}, \action)}} \right \},
        \label{eq: modified UCT}
    \end{align}}
    
\noindent where the adjusted statistics $\overline{Q}_{m}$ and $\overline{N}_{m}$ are given by
    { \setlength{\abovedisplayskip}{0.2em}
      \setlength{\belowdisplayskip}{-0.4em}
    \begin{align}
        \overline{Q}_m (\state, \action) & := \alpha_m (\state, \action) Q_m (\state, \action) + \beta_m (\state, \action) \widetilde{Q}_m (\state, \action), \label{eq: adjusted Q} \\
        \overline{N}_m (\state, \action) & := N_m (\state, \action) + \widetilde{N}_m (\state, \action). \label{eq: adjusted N}
    \end{align}}
}
\STATE{{\bf (Expansion)}
Pick an expandable action $\action_{T - 1}$ at $\state_{T - 1}$ and add node $\state_{T}$ (the next state following $(\state_{T - 1}, \action_{T - 1})$) to tree $\mathcal{T}_{m}$.
}

\vspace{0.2em}

\STATE{{\bf (Pseudo statistics pre-update)} \emph{Pre-update} pseudo statistics for all $(\state, \action) \in \{ ( \state_{t}, \action_{t} ) \}_{t = 0}^{T - 1}$: 
    { \setlength{\abovedisplayskip}{0.2em}
      \setlength{\belowdisplayskip}{0.2em}
    \begin{align*}
        O_{m} (\state, \action) & \leftarrow O_{m} (\state, \action) + 1, \\
        \widetilde{Q}_{m} (\state, \action) & \leftarrow f_{\widetilde{Q}} (\state, \action, Q_{m}, N_{m}, O_{m}), \\ \widetilde{N}_{m} (\state, \action) & \leftarrow f_{\widetilde{N}} (\state, \action, Q_{m}, N_{m}, O_{m}).
    \end{align*}}
\vspace{-1.0em}
}

\STATE{{\bf (Simulation)} Assign simulation task $(\state_{T}, m)$ to a simulator process. \vspace{0.5em}}

\STATE{\textbf{if} there exist simulators without an assigned task \textbf{then continue} \vspace{0.5em}}

\STATE{{\bf (Wait)} Wait until a simulation task completes and fetch the simulation return $(\state_{T}, V (\state_{T}), \hat{m})$.}

\vspace{0.2em}

\STATE{{\bf (Backpropagation)} Update $Q_{\hat{m}}$ and $N_{\hat{m}}$ in the search tree $\mathcal{T}_{\hat{m}}$ using the same rule as Eqs.~(\ref{eq: recursive N and V hat update}) and (\ref{eq: recursive Q update}); perform \emph{pseudo-statistics post-update} on the search tree $\mathcal{T}_{\hat{m}}$ for all $(\state, \action) \in \{ ( \state_{t}, \action_{t} ) \}_{t = 0}^{T - 1}$: 
    { \setlength{\abovedisplayskip}{0.2em}
      \setlength{\belowdisplayskip}{0.2em}
    \begin{align*}
        O_{\hat{m}} (\state, \action) & \leftarrow O_{\hat{m}} (\state, \action) - 1, \\
        \widetilde{Q}_{\hat{m}} (\state, \action) & \leftarrow g_{\widetilde{Q}} (\state, \action, Q_{\hat{m}}, N_{\hat{m}}, O_{\hat{m}}), \\
        \widetilde{N}_{\hat{m}} (\state, \action) & \leftarrow g_{\widetilde{N}} (\state, \action, Q_{\hat{m}}, N_{\hat{m}}, O_{\hat{m}}).
    \end{align*}}
\vspace{-1.0em}
}

\IF{ $n_{\mathrm{complete}} \equiv \syncrate - 1 \; (\mathrm{mod} \; \syncrate)$ \vspace{0.2em}}

\STATE{{\bf (Tree sync)} Synchronize the statistics in different search trees such that: 
\vspace{-0.5em}
\begin{align*}
    \mathcal{T}_m \leftarrow \mathcal{T} = f_{\mathrm{syn}} ( \{ \mathcal{T}_{m} \}_{m = 1}^{M}) \quad m = 1, \dots, M.
\end{align*}
\vspace{-1.5em}
}

\ENDIF

\STATE{$n_{\mathrm{complete}} \leftarrow n_{\mathrm{complete}} + 1$; $\; m' \leftarrow m$ \vspace{0.4em}}

\ENDWHILE

\vspace{0.2em}

\STATE{\textbf{return $\mathcal{T} = f_{\mathrm{syn}} ( \{ \mathcal{T}_{m} \}_{m = 1}^{M} )$} (or return the ``best'' action for the initial state $\state_{0}$)}

\end{algorithmic}
}
\end{algorithm}
\vspace{-2em}
\end{figure}

\footnotetext{The functions $\alpha$, $\beta$, $f_{\mathrm{sel}}$, $f_{\widetilde{Q}}$, $f_{\widetilde{N}}$, $g_{\widetilde{Q}}$, $g_{\widetilde{N}}$ and the synchronization interval $\tau_{\mathrm{syn}}$ are hyperparameters. When set differently, the algorithm can be specialized to different parallel MCTS algorithms.}

\subsection{Additional Details of the General Algorithm Framework}
\label{Additional Details of the General Algorithm Framework}

In this subsection, we provide additional details for the general framework of parallel MCTS algorithm. Specifically, we introduce the general framework using Algorithm~\ref{alg: general framework of parallel MCTS algs}, highlighting details that are not stated clearly enough in the main text. We proceed by introducing each of the steps shown in the block diagram in Figure~\ref{fig: general parallel MCTS framework}.

\noindent \textbf{Tree selection} $\;$ The tree selection function $f_{\mathrm{sel}}$ takes $m'$ and $\hat{m}$ as input. According to Line 15, $m'$ denotes the index of the search tree selected in the previous rollout. $\hat{m}$ is the index of the search tree being updated in the backpropagation step during the previous rollout (see Lines 10 and 11).

\noindent \textbf{Node selection} $\;$ Note that the terminal conditions can be customized. Here we adopt a widely used set of terminal conditions: either the node contains unexpanded child nodes or its depth exceed $d_{\mathrm{max}}$.

\noindent \textbf{Expansion} $\;$ Identical to the expansion step in sequential MCTS.

\noindent \textbf{Pseudo statistics pre-update} $\;$ Although explicitly written here, $\widetilde{Q}_{m}$ and $\widetilde{N}_{m}$ may not need to be explicitly stored during implementation since this computation may be done during node selection.

\noindent \textbf{Simulation} $\;$ The search tree index $m$ is passed to the simulator as record. Recall that the $M$ search trees maintained by the master respectively mimic the ``search trees'' maintained by the $M$ workers in practical algorithms, the index $m$ helps Algorithm~\ref{alg: general framework of parallel MCTS algs} to mimic the activation of different ``workers''.

\noindent \textbf{Wait} $\;$ Similarly, the search tree index $\hat{m}$ is returned so that the algorithm knows which search tree to update the statistics.

\noindent \textbf{Backpropagation} $\;$ Additional to the updation of $Q_{m}$ and $N_{m}$, pseudo-statistics are also updated.

\noindent \textbf{Tree sync} $\;$ We provide a formal definition of the synchronization function $f_{\mathrm{syn}}$. Note that the following descriptions are only for rigorous purpose, practical algorithms do not need to actually implement the following algorithm.

The input of $f_{\mathrm{syn}}$ is a set of $M$ search trees $\{ \mathcal{T}_{m} \}_{m = 1}^{M}$ and the output is a synchronized search tree $\mathcal{T}$. Intuitively, $f_{\mathrm{syn}}$ performs union of the $M$ individual search trees and aggregate their newly acquired statistics after the previous synchronization (see Algorithm \ref{alg: f_syn}). Therefore, it can be divided into two steps: \emph{topology construction phase} and the \emph{statistics aggregation phase}. The topology construction phase generates a new tree topology for $\mathcal{T}$ by taking the union of the topologies from $\{\mathcal{T}_m\}_{m=1}^M$.
It can be implemented by the following steps. We begin with a search tree $\mathcal{T}$ with only one root node representing the initial state (i.e., the input $\state_{0}$ in Algorithm~\ref{alg: general framework of parallel MCTS algs}). In addition, we initialize a set of node, $\mathcal{V}_{\mathrm{syn}}$, with the root node $\state_{0}$. We then repeat the following steps until $\mathcal{V}_{\mathrm{syn}}$ is empty: (i) (randomly) take out an element $\state$ from $\mathcal{V}_{\mathrm{syn}}$ and delete it from $\mathcal{V}_{\mathrm{syn}}$, (ii) for all $\action \in \actions$, if the edge $(\state, \action)$ exists in at least one of the $M$ search trees $\{ \mathcal{T}_{m} \}_{m = 1}^{M}$, we grow the tree $\mathcal{T}$ by attaching this edge $(\state, \action)$ along with its next state $\state'$ to node $s$, and (iii) add node $\state'$ to the set $\mathcal{V}_{\mathrm{syn}}$.

To explain the \emph{statistics aggregation phase} of $f_{\mathrm{syn}}$ (i.e., the second phase), we have to introduce two sets of additional statistics associated with each edge $(\state, \action)$ at the $M$ input search trees $\{\mathcal{T}_m\}_{m=1}^M$. Specifically, for each edge $(\state, \action)$ in the search tree $\mathcal{T}_{m}$, let $\mathcal{R}_{m} (\state, \action)$ be a set that consists of elements in the following form and is constructed in a recursive manner (to be explained later):
    \begin{align}
        \mathcal{R}_{m} (\state, \action) &= \{ (V_{\state, \action}, \xi_{\state, \action}): \; V_{\state, \action} := V (\state'),\; \xi_{\state, \action} \in \{0, 1\} \}, \label{eq: the additional statistics R}
    \end{align}
\noindent where $\state'$ is the next state of $(\state, \action)$, and $V (\state')$ is recursively defined (over $\mathcal{T}_m$) according to Eq.~(\ref{eq: recursive N and V hat update}) starting from the simulation return $V (\state_{T})$.\footnote{We drop the dependency on $m$ in the above set of $\mathcal{R}_{m} (\state, \action)$ for simplicity of notation.} $\xi_{\state, \action}=1$ means that $V_{\state, \action}$ at this edge $(s,a)$ has been synchronized in the previous synchronization cycles and 0 otherwise. When an edge $(\state, \action)$ is initialized (e.g., expanded), an empty set $\mathcal{R}_{m} (\state, \action)$ will be initialized accordingly. During the \emph{backpropagation} phase of Algorithm~\ref{alg: general framework of parallel MCTS algs}, for each traversed edge corresponding to the complete simulation with return $(\state_{T}, V (\state_{T}), \hat{m})$ (assume the traversed edges are $\{ (\state_{t}, \action_{t}) \}_{t = 1}^{T - 1}$), we update the sets $\mathcal{R}_{\hat{m}} (\state_{t}, \action_{t}) \; (t = 0, \dots, T)$ by recursively computing $V (\state_{t})$ using Eq.~(\ref{eq: recursive N and V hat update}) and add the element $(V (\state_{t + 1}), 0)$ into the set $\mathcal{R}_{\hat{m}} (\state_{t}, \action_{t})$.

During the statistics aggregation phase, for each edge $(\state, \action) \in \mathcal{T}$, we perform the following steps to construct the set $\mathcal{R} (\state, \action)$: (i) initialize an empty set $\mathcal{R} (\state, \action)$, (ii) traverse all elements $( V_{\state, \action}, \xi_{\state, \action} ) \in \mathcal{R}_{1} (\state, \action)$ and add it to $\mathcal{R} (\state, \action)$ if $\xi_{\state, \action} \!=\! 1$, (iii) traverse all elements $( V_{\state, \action}, \xi_{\state, \action} ) \in \displaystyle \cup_{m = 1}^{M} \mathcal{R}_{m} (\state, \action)$\footnote{$\cup$ refers to the set union.} and add $(V_{\state, \action}, 1)$ to $\mathcal{R} (\state, \action)$ if $\xi_{\state, \action} \!=\! 0$. The intuition of the above procedure is that both the synchronized elements ($\xi_{s,a} = 1$) and elements that have not been synchronized yet ($\xi_{s,a} \!=\! 0$) are added to $\mathcal{R} (\state, \action)$ only once. We then calculate the statistics $Q$ and $N$ at the output search tree $\mathcal{T}$ as follows:
    \begin{align}
        Q (\state, \action) & := \frac{1}{| \mathcal{R} (\state, \action) |} \sum_{\langle V, \xi \rangle \in \mathcal{R} (\state, \action)} V, \label{eq: update during synchronization f_syn 1} \\
        N (\state, \action) & := | \mathcal{R} (\state, \action) |, \label{eq: update during synchronization f_syn 2}
    \end{align}
\noindent where $| \mathcal{R} (\state, \action) |$ denotes the cardinality of the set $\mathcal{R} (\state, \action)$. Finally, the synchronization of the on-going simulation count $O (\state, \action)$ is performed in the following manner: for each edge $(\state, \action) \in \mathcal{T}$,
    \begin{align}
        O (\state, \action) \leftarrow \sum_{m = 1}^{M} O_{m} (\state, \action), \label{eq: update during synchronization f_syn 3}
    \end{align}
\noindent where $O_{m} (\state, \action)$ is set to zero if this particular edge $(s,a)$ does not appear in $\mathcal{T}_m$. The details for the implementation of $f_{\mathrm{syn}}$ are summarized in Algorithm \ref{alg: f_syn}.

\begin{algorithm}[t]
\caption{The synchronization function $f_{\mathrm{syn}}$}
\label{alg: f_syn}

{\fontsize{10}{10} \selectfont
\begin{algorithmic}[1]

\STATE{\textbf{input:} $M$ search trees $\{ \mathcal{T}_{m} \}_{m = 1}^{M}$.}

\STATE{\textbf{initialize:} a search trees $\mathcal{T} := \langle (\mathcal{V}, \mathcal{E}), \{Q, N\} \rangle$, where $\mathcal{V} \leftarrow \{ \state_{0} \}$ is the set of nodes, and $\mathcal{E} \leftarrow \emptyset$ is the set of edges ($\state_{0}$ is the root node of $\mathcal{T}$).}

\vspace{1.0em}

\STATE\texttt{\# Phase 1:~Topology construction}

\STATE{ Initialize $\mathcal{V}_{\mathrm{syn}} := \{ \state_{0} \}$.}

\WHILE{$\mathcal{V}_{\mathrm{syn}}$ not empty}

\STATE{$\state \leftarrow \mathrm{pop} (\mathcal{V}_{\mathrm{syn}})$}

\FOR{$\action \in \actions$}

\IF{$(\state, \action)$ exists in at least one of the $M$ search trees $\{ \mathcal{T}_{m} \}_{m = 1}^{M}$}

\STATE{$\state' \leftarrow \mathrm{the~next~state~following~} (\state, \action)$ }

\STATE{Add edge $(\state, \action)$ and node $\state'$ to $\mathcal{T}$}

\STATE{Add $\state'$ to the set $\mathcal{V}_{\mathrm{syn}}$}

\ENDIF
    
\ENDFOR

\ENDWHILE

\vspace{1.0em}

\STATE \texttt{\# Phase 2:~Statistics aggregation}
\STATE{ For all trees $\mathcal{T}_{m}$ and edges $(\state, \action)$, define $\mathcal{R}_{m} (\state, \action)$ according to equation (\ref{eq: the additional statistics R}). $\mathcal{R}_{m} (\state, \action)$ is maintained during rollouts as described in Section~\ref{Additional Details of the General Algorithm Framework}.}


\FORALL{edges $(\state, \action)$ in $\mathcal{T}$}

\STATE{$\mathcal{R} (\state, \action) := \emptyset$}


\FORALL{$(V_{\state, \action}, \xi_{\state, \action}) \in \mathcal{R}_{1} (\state, \action)$}

\IF{$\xi_{\state, \action} = 1$}

\STATE{Add $(V_{\state, \action}, \xi_{\state, \action})$ to $\mathcal{R} (\state, \action)$}

\ENDIF

\ENDFOR


\FOR{$m = 1, \dots, M$}

\FORALL{$(V_{\state, \action}, \xi_{\state, \action}) \in \mathcal{R}_{m} (\state, \action)$}

\IF{$\xi_{\state, \action} = 0$}

\STATE{Add $(V_{\state, \action}, 1)$ to $\mathcal{R} (\state, \action)$}

\ENDIF

\ENDFOR

\ENDFOR

\STATE{Update $Q (\state, \action)$, $N (\state, \action)$, and $O (\state, \action)$ according to equations (\ref{eq: update during synchronization f_syn 1})-(\ref{eq: update during synchronization f_syn 3}), respectively.}

\ENDFOR

\vspace{1.0em}

\STATE{\textbf{return} search tree $\mathcal{T}$}

\end{algorithmic}
}

\end{algorithm}

\subsection{Specialization of the General Framework into Existing Parallel MCTS Algorithms}
\label{Specialization of Existing Parallel MCTS Algorithms into the General Framework}

In this subsection, we show how the existing algorithms introduced in Appendix~\ref{Existing Parallel MCTS Algorithms} could be viewed as special cases of Algorithm~\ref{alg: general framework of parallel MCTS algs}. Table~\ref{table: parallel MCTS algorithms specification extended} demonstrates how different choices of the hyperparameters in Algorithm~\ref{alg: general framework of parallel MCTS algs} could lead to different parallel algorithms. The functions $f_{\widetilde{Q}}$, $f_{\widetilde{N}}$, $g_{\widetilde{Q}}$, and $g_{\widetilde{N}}$ are omitted in Table~\ref{table: parallel MCTS algorithms specification extended} since they can be inferred from $\widetilde{Q}_{m}$ and $\widetilde{N}_{m}$. Note that for some methods the equivalence exists only when the simulation phase takes much more time than the other phases. Nevertheless, this holds in general \citep{liu2020watch,chaslot2008parallel} and therefore does not affect our analysis.

\noindent \textbf{LeafP} $\;$ Consider the following identification in Algorithm~\ref{alg: general framework of parallel MCTS algs}: $f_{\mathrm{sel}} (m', \hat{m}) := (m' + 1) \% M$, where $\%$ denotes the modulo operator, $\alpha_{m} (\state, \action) = 1$, and $\beta_{m} (\state, \action) = \widetilde{Q}_{m} (\state, \action) = \widetilde{N}_{m} (\state, \action) = 0$. If we further choose $\syncrate = M$, Algorithm~\ref{alg: general framework of parallel MCTS algs} will be equivalent to LeafP for the following reasons. First, since synchronization happens at time steps $\syncrate, 2 \syncrate, \dots$ (i.e., $M, 2M, \dots$), the search trees $\{ \mathcal{T}_{m} \}_{m = 1}^{M}$ are identical at the end of these time steps. We now show that the algorithm status at the ends of the rollouts $M, 2M, \dots$ in Algorithm~\ref{alg: general framework of parallel MCTS algs} is equivalent to the algorithm status of LeafP at the ends of the rollouts $1, 2, \dots$, respectively (note that in each rollout of LeafP, $M$ simulation returns of the same node is acquired).
Specifically, during the $M$ rollouts in the general framework (i.e., Algorithm~\ref{alg: general framework of parallel MCTS algs}), each search tree is selected only once due to the specific setting of $f_{\mathrm{sel}}$ (i.e., sequentially select all search trees). Since the $M$ trees are identical and the tree policy (Eq.~(\ref{eq: modified UCT})) is deterministic, each of the $M$ rollouts will independently expand and simulate one unique search tree among the $M$ trees at the \emph{same leaf node position}, which keeps all the $M$ trees having an identical topology. Finally, the synchronization step aggregates the $M$ simulation returns into a single search tree. As a result, it becomes equivalent to having $M$ workers to simulate the same node in the simulation step of LeafP. Figure~\ref{fig: general framework LeafP} illustrates the above equivalence between LeafP and the general framework under this identification.

\begin{figure}[t]
    \centering
    \subfigure[LeafP.]{
        \includegraphics[height=2.7in]{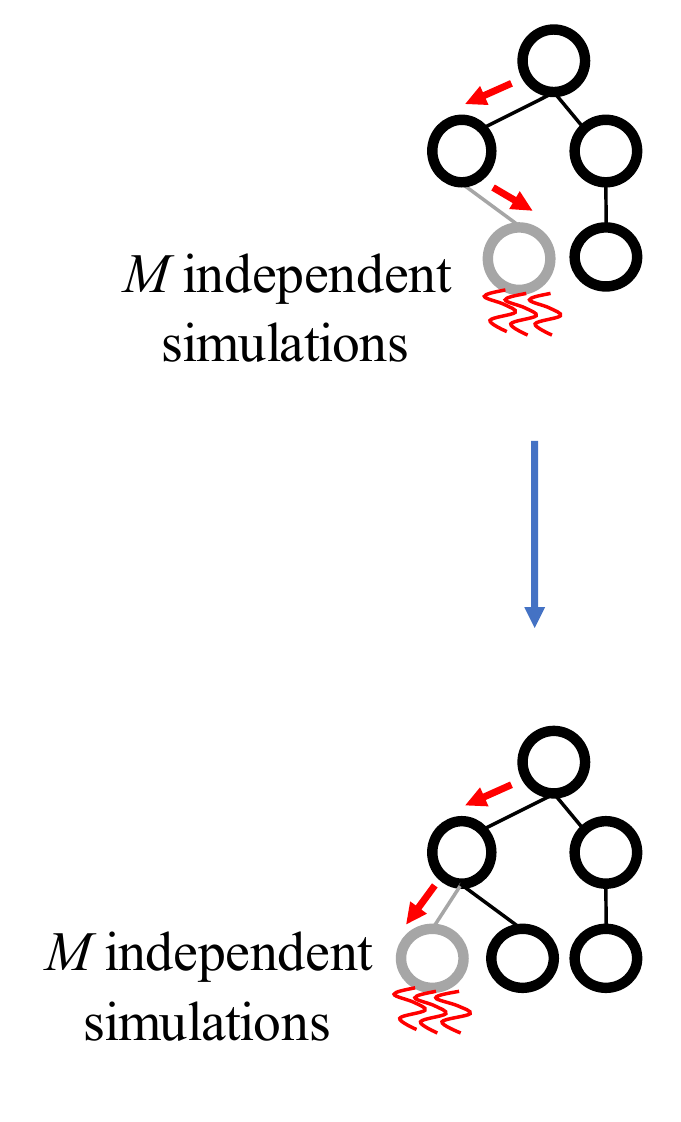}
        \label{fig: general framework LeafP A}
    }
    \hfil
    \subfigure[Equivalent LeafP by Algorithm~\ref{alg: general framework of parallel MCTS algs}.]{
        \includegraphics[height=2.7in]{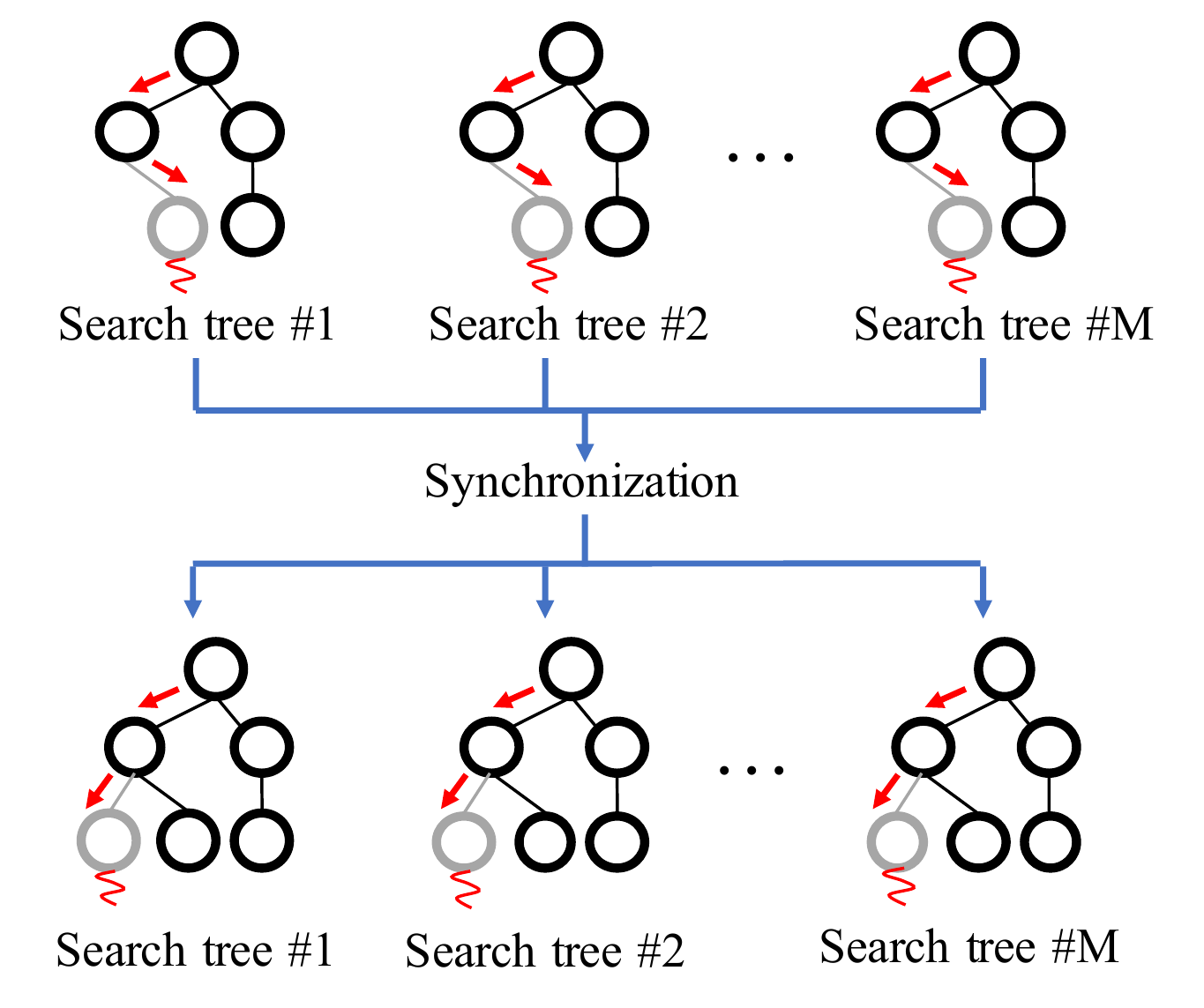}
        \label{fig: general framework LeafP B}
    }
    \vspace{-1em}
    \caption{Illustration of how LeafP can be viewed as a special case of Algorithm \ref{alg: general framework of parallel MCTS algs}. In (b), each of the $M$ trees initializes an identical simulation task to the simulator processes and synchronization happens after all $M$ simulation tasks are completed. This is analogous to (a), where $M$ workers are assigned to simulate a same node independently.}
    \label{fig: general framework LeafP}
\end{figure}

\noindent \textbf{TreeP} $\;$ Consider the choice of $\alpha_{m} (\state, \action) \!=\! 1$ and $\beta_{m} (\state, \action) \!=\! \widetilde{Q}_{m} (\state, \action) \!=\! \widetilde{N}_{m} (\state, \action) \!=\! 0$, and let the synchronization be excuted at each rollout cycle in Algorithm~\ref{alg: general framework of parallel MCTS algs} (i.e., $\syncrate = 1$, also see Table~\ref{table: parallel MCTS algorithms specification extended}).
We now show that this resembles the TreeP algorithm. First, since synchronization happens at every rollout cycle, the $M$ search trees are identical at the beginning of each rollout cycle in Algorithm~\ref{alg: general framework of parallel MCTS algs}, and can be regarded as a global search tree since all simulation returns are gathered immediately at the end of each rollout cycle (according to the definition of $f_{\mathrm{syn}}$). 
Second, the simulator processes are independent, and whenever a simulator completes, its simulation return will be updated to the global search tree (in the backpropagation phase) by the synchronization step performed at every time step, which resembles TreeP. Finally, whenever a worker is idle, the algorithm will traverse the global search tree to assign a new simulation task to it, which mimics the setting in TreeP that each worker individually perform rollouts and update the global statistics. See Figure~\ref{fig: general framework TreeP} for an illustration of the intuition for this equivalence.

\begin{figure}[t]
    \centering
    \subfigure[TreeP.]{
        \includegraphics[height=1.125in]{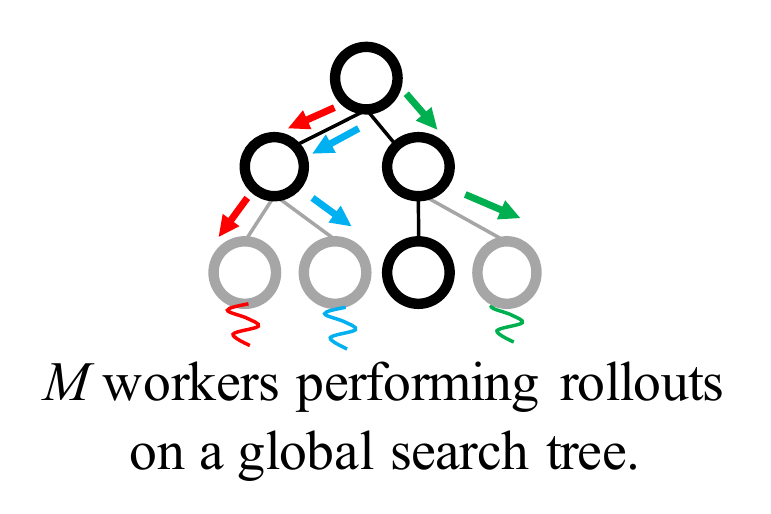}
        \label{fig: general framework TreeP A}
    }
    \hfil
    \subfigure[Equivalent TreeP by Algorithm~\ref{alg: general framework of parallel MCTS algs}.]{
        \includegraphics[height=1.125in]{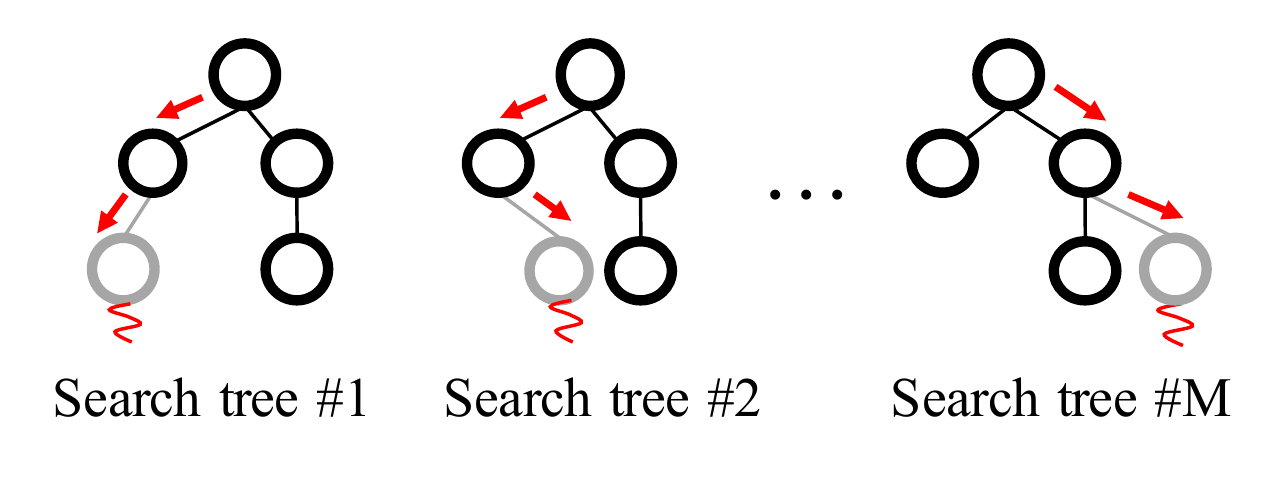}
        \label{fig: general framework TreeP B}
    }
    \caption{Illustration of how TreeP can be viewed as a special case of Algorithm \ref{alg: general framework of parallel MCTS algs}. Performing $M$ independent rollouts on $M$ search trees and then synchronizing the statistics per rollout cycle ($\syncrate = 1$) is equivalent to having $M$ workers independently performing rollouts and updating the statistics on one global search tree in TreeP. This equivalence holds in general, regardless of whether virtual loss or pseudo-statistics are used. However, without them, the vanilla TreeP normally will quicly collapse into a mode that is similar to LeafP.}
    \label{fig: general framework TreeP}
\end{figure}

\noindent \textbf{RootP} $\;$ Consider the following choice of hyperparameters: $f_{\mathrm{sel}} (m', \hat{m}) := \hat{m}$, (i.e., always select the search tree updated in the backpropagation step in the most recently completed rollout),  $\alpha_{m} (\state, \action) = 1$, $\beta_{m} (\state, \action) = \widetilde{Q}_{m} (\state, \action) = \widetilde{N}_{m} (\state, \action) = 0$, and $\syncrate = N_{\mathrm{max}}$ (i.e., synchronize after all the jobs at all the workers are totally completed). This setting is equivalent to RootP for the following reasons. First, since $\syncrate = N_{\mathrm{max}}$, all the $M$ search trees act independently (i.e., building their own search trees) and will not be aggregated by $f_{\mathrm{syn}}$ until all rollouts are completed. Second, we can show that the rollout cycles in Algorithm~\ref{alg: general framework of parallel MCTS algs} will preserve the independence of the operations at these $M$ search trees under the above identification. To see this, note that, by $f_{\mathrm{sel}} (m, \hat{m}) \!:=\! \hat{m}$, Algorithm \ref{alg: general framework of parallel MCTS algs} at the current rollout cycle will always select the search tree $\mathcal{T}_{\hat{m}}$ that has returned its simulation in the previous rollout cycle. This means that, in the current rollout cycle, Algorithm \ref{alg: general framework of parallel MCTS algs} will continue to perform rollouts and employ another worker to simulate this same search tree $\mathcal{T}_{\hat{m}}$. For this reason, it can be viewed as if we have $M$ virtual ``designated'' workers to perform rollouts and simulations for these $M$ search trees independently, which is exactly what RootP does. Since we assume other phases consume much less time than the simulation phase, these $M$ virtual ``designated'' workers are almost bound to continuously performe rollouts and simulation process without long waits. Finally, different variants of RootP (e.g., certain workers only operate on some child nodes of the search tree) can also be modeled by Algorithm \ref{alg: general framework of parallel MCTS algs} by setting $\widetilde{Q}_{m}$ at these nodes. For instance, $\widetilde{Q}_{m}$ can be chosen to be big enough such that at the root node the algorithm will always choose these same child nodes. Figure~\ref{fig: general framework RootP} illustrates the equivalence between RootP and Algorithm~\ref{alg: general framework of parallel MCTS algs} under the above identification.

\begin{figure}[t]
    \centering
    \subfigure[RootP.]{
        \includegraphics[height=0.975in]{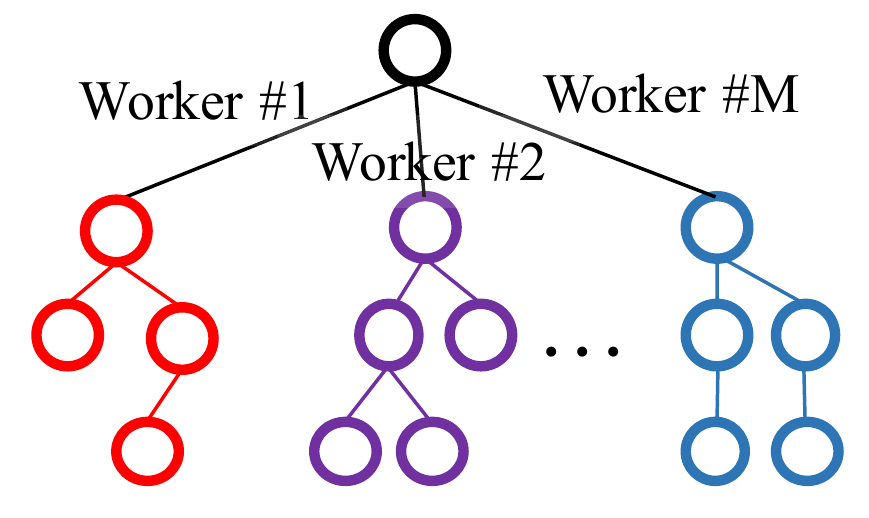}
        \label{fig: general framework RootP A}
    }
    \hfil
    \subfigure[Equivalent RootP by Algorithm~\ref{alg: general framework of parallel MCTS algs}.]{
        \includegraphics[height=1.17in]{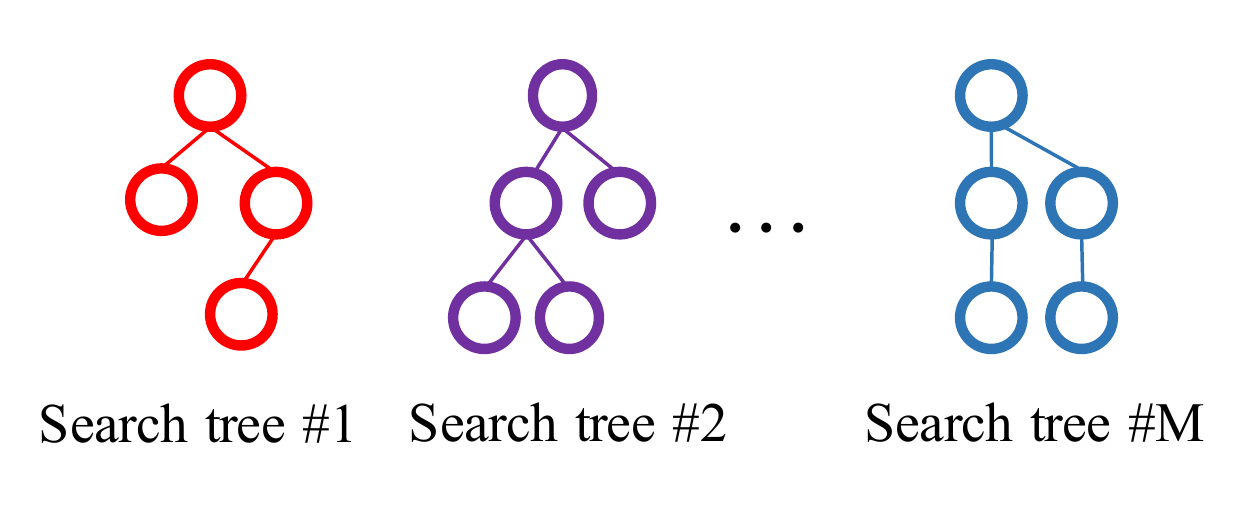}
        \label{fig: general framework RootP B}
    }
    \caption{Illustration of how RootP could be viewed as a special case of Algorithm \ref{alg: general framework of parallel MCTS algs}. Each subtree in RooP corresponds to one of the $M$ search trees in Algorithm~\ref{alg: general framework of parallel MCTS algs}. Under a particular identification, Algorithm \ref{alg: general framework of parallel MCTS algs} can be viewed as having $M$ virtual ``designated'' workers that operate independently on these $M$ search trees, which is equivalent to what RootP does.}
    \label{fig: general framework RootP}
\end{figure}

\noindent \textbf{VL-UCT} $\;$ Since it is a variant of TreeP, the workers' collaboration model in VL-UCT is identical to that of TreeP. Therefore, we can follow the same setting in $\syncrate \!=\! 1$ and $f_{\mathrm{sel}}$. On the other hand, we choose the pseudo statistics as shown in Table~\ref{table: parallel MCTS algorithms specification extended}. Specifically, for VL-UCT with hard penalty, we select (also see Table~\ref{table: parallel MCTS algorithms specification extended})
    \begin{gather*}
        \alpha_{m} (\state, \action) = 1, \quad \beta_{m} (\state, \action) = O_{m} (\state, \action), \\
        \widetilde{Q}_m (\state, \action) = - r_{\mathrm{VL}}, \quad \widetilde{N}_m (\state, \action) = 0.
    \end{gather*}
And for VL-UCT with soft penalty, we choose 
    \begin{gather*}
        \alpha_{m} (\state, \action) = \frac{N_{m} (\state, \action)}{N_{m} (\state, \action) + n_{\mathrm{VL}} \cdot O_{m} (\state, \action)}, \\
        \beta_{m} (\state, \action) = \frac{n_{\mathrm{VL}} \cdot O_{m} (\state, \action)}{N_{m} (\state, \action) + n_{\mathrm{VL}} \cdot O_{m} (\state, \action)}, \\
        \widetilde{Q}_{m} (\state, \action) = - r_{\mathrm{VL}}, \\
        \widetilde{N}_{m} (\state, \action) = n_{\mathrm{VL}} \cdot O_{m} (\state, \action).
    \end{gather*}

\noindent \textbf{WU-UCT} $\;$ Although not exactly based on TreeP, WU-UCT follows the same master-worker architecture as in Algorithm~\ref{alg: general framework of parallel MCTS algs}. We now show that WU-UCT can also be viewed as a special case of Algorithm \ref{alg: general framework of parallel MCTS algs} under the identification to be explained below. Similar to TreeP, we set $\syncrate=1$, i.e., the statistics from the $M$ search trees are synchronized at the end of each rollout cycle. Likewise, we set $f_{\mathrm{sel}}(m',\hat{m})=\mathrm{randint}(M)$; that is, it selects a random search tree in the selection phase.\footnote{In the original paper, WU-UCT also parallelizes the expansion step. However, since we assume the simulation phase is much more time-consuming then other phases, we ignore this detail.} In addition, we make the following choices (see Table~\ref{table: parallel MCTS algorithms specification extended})
    \begin{gather*}
        \alpha_{m} (\state, \action) = 1, \quad \beta_{m} (\state, \action) = \widetilde{Q}_{m} (\state, \action) = 0, \\
        \widetilde{N}_{m} (\state, \action) = O_{m} (\state, \action).
    \end{gather*}

\section{Proofs: Parallel Algorithms for Monte Carlo Tree Search}
\label{Appendix:MCTS}

This section provides proofs for Theorems~\ref{theorem: necessary condition for a good algorithm in the tree search setting}, and \ref{theorem: regret upper bound of WU-UCT in stochastic bandit setup}, which locate in Sections~\ref{The Necessary Conditions} and \ref{Theoretical Justification of WU-UCT}, respectively.

\subsection{The Necessary Conditions}
\label{The Necessary Conditions}

To help elaboration, we first introduce the following additional definitions. Define $\mathbb{A}_{\mathrm{seq}}$ as the sequential MCTS algorithm introduced in Section~\ref{MCTS and Its Parallelization} \citep{kocsis2006improved}. $\mathcal{T}^{\mathbb{A}}_{\state, n}$ is defined as the search tree with root node $\state$ and is constructed by a (parallel) MCTS algorithm $\mathbb{A}$ with $n$ rollouts. Whenever it is clear from context, we omit the subscript $n$ for notation simplicity. Let $V^{\mathbb{A}}_{\state, n} (\state')$ be the cumulative reward $V (\state')$ obtained in the backpropagation step (i.e. computed by Eq.~(\ref{eq: recursive N and V hat update})) when performing a rollout using algorithm $\mathbb{A}$ on the search tree $\mathcal{T}^{\mathbb{A}}_{\state, n}$ (if $\state'$ is not selected during the rollout, $V^{\mathbb{A}}_{\state, n} (\state') \!:=\! 0$). Note that $V^{\mathbb{A}}_{\state, n} (\state')$ is indeed a random variable due to the stochasticity in the simulation returns.

Following the above definitions as well as the terminology in the general algorithm framework (Appendix~\ref{Formal Introduction of The General Algorithm Framework}), we give a formal version of Theorem~\ref{theorem: necessary condition for a good algorithm in the tree search setting}.

\begin{theorem}[A formal version of Theorem~\ref{theorem: necessary condition for a good algorithm in the tree search setting}]
\label{theorem: necessary condition for a good algorithm in the tree search setting (formal)}
Consider an algorithm $\mathbb{A}$ that is specified from the general parallel MCTS framework by choosing $\widetilde{N}_m (\state, \action) \!=\! f( O_{m} (\state, \action) )$ ($m=1,\ldots,M$), where $f (\cdot): \mathbb{Z}^{+}_{0} \!\rightarrow\! \mathbb{R}$ is a function. If there exists an edge $(\state, \action)$ in any of the $M$ search trees $\{\mathcal{T}_{m}\}$ such that algorithm $\mathbb{A}$ violates any of the following conditions (with $\state'$ defined as the next state following $(\state, \action)$): \vspace{0.1em} \newline
$\bullet \;$ \textbf{Necessary cond. of $\overline{Q}$:} $\expectation [ \overline{Q}_{m} (\state, \action) ] \!=\! \frac{1}{n}\sum_{n' = 1}^{n} \expectation [ V_{\state', n'}^{\mathbb{A}_{\mathrm{seq}}} (\state') ]$ ($n \!=\! N_{m} (\state, \action) \!+\! O_{m} (\state, \action)$), \eqnum\label{eq: necessary condition 1 --} 

\vspace{-0.6em}

$\bullet \;$ \textbf{Necessary cond. of $\overline{N}$:} $f(x) \geq x \; (\forall x \in \mathbb{Z}^{+}_{0})$, \eqnum\label{eq: necessary condition 2 --}

\vspace{-0.3em}

\noindent then there exists an MDP $\mathcal{M}$ such that the excess regret of running $\mathbb{A}$ on MDP $\mathcal{M}$ does not vanish.
\end{theorem}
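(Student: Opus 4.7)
The plan is to prove the contrapositive: for each of the two conditions, I would construct an explicit MDP on which violating the condition causes the excess regret $\mathrm{Regret}_{\mathbb{A}}(n) - \mathrm{Regret}_{\mathbb{A}_{\mathrm{seq}}}(n)$ to remain bounded away from zero as $n \to \infty$. In both constructions I would restrict attention to depth-$2$ $K$-armed tree-search MDPs (the same family used in Theorem~\ref{theorem: regret upper bound of WU-UCT in stochastic bandit setup}), so that the only nontrivial choice is the root-level action and the behavior of $\mathbb{A}$ and $\mathbb{A}_{\mathrm{seq}}$ can be compared directly through the modified UCB tree policy in Eq.~(\ref{eq: modified UCT}). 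This localizes the effect of the violated condition to a single UCB comparison and makes the final regret bookkeeping tractable.

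For the condition on $\overline{N}$, assume $f(x_0) < x_0$ for some $x_0 \in \mathbb{Z}^{+}_{0}$. I would take a two-armed MDP whose arm means $\mu_1 > \mu_2$ have a small gap $\Delta$ to be tuned, and argue in two steps. First, along any typical trajectory the occupancy $O_m(s, a) = x_0$ on some edge is realized in $\Omega(n)$ rollouts; this follows because $O$ oscillates in $\{0, 1, \dots, M\}$ and the scheduling of the $M$ workers forces it to return to $x_0$ regularly. Second, whenever this occupancy holds, the deficit $x_0 - f(x_0) > 0$ shrinks $\overline{N}_m(s,a)$ below $N_m(s,a) + O_m(s,a)$, inflating the UCB exploration bonus on that edge. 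A standard UCB regret-decomposition argument, adapted to the modified bonus, then shows the suboptimal arm is over-selected by at least a constant factor times $\ln n$ more than under the sequential algorithm, yielding an excess regret of $\Omega(\ln n)$, which does not vanish.

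For the condition on $\overline{Q}$, assume that on some edge the equality in Eq.~(\ref{eq: necessary condition 1 --}) fails, so that $\mathbb{E}[\overline{Q}_m(s,a)]$ is biased by some $b \ne 0$ relative to the sequential average. I would again choose a two-armed MDP in which this bias favors the suboptimal arm (by placing the violated edge at that arm), and tune the gap $\Delta$ to satisfy $\Delta < |b|$. The modified tree policy then systematically prefers the suboptimal arm once the UCB exploration term drops below $|b| - \Delta$, which happens after $\Omega(1)$ expected pulls. From that point on, the algorithm selects the wrong arm at a linear rate, producing an excess regret of $\Omega(n)$, while the sequential counterpart incurs only $O(\ln n)$ regret on the same MDP.

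The main obstacle I expect is verifying that the violated condition is actually realized with sufficient frequency on the constructed MDP to dominate the aggregate regret. For the $\overline{N}$ case, I must ensure that the occupancy event $O_m(s,a) = x_0$ recurs linearly often and that interfering mechanisms (tree-sync, pseudo-statistics post-update) do not cancel the under-counting before it biases a selection; the worker-scheduling analysis needed here is the most delicate piece. For the $\overline{Q}$ case, the difficulty is showing that the bias $b$ persists as $N_m + O_m$ grows, i.e., that it is a structural consequence of the algorithm's design rather than a transient artifact of initialization; this should follow by repeatedly applying the violated equality at the values of $(N_m, O_m)$ visited along the trajectory and choosing the MDP so that those visits occur infinitely often.
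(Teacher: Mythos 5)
Your overall strategy---prove the contrapositive by exhibiting a depth-2, bandit-like MDP on which each violated condition forces non-vanishing excess regret through the modified UCB comparison---is the same route the paper takes (the paper phrases the reduction in terms of ``mini-MABs'' on the optimal path, but the substance is identical). Your $\overline{Q}$ branch matches the paper's argument: a persistent bias $b$ in $\expectation[\overline{Q}]$ relative to the sequential average, on an MDP with gap $\Delta < |b|$ arranged so the bias favors a suboptimal arm, yields $\Omega(t)$ pulls of that arm and hence linear excess regret. The piece you defer (``showing that the bias persists\ldots rather than a transient artifact'') is exactly what the paper supplies: it decomposes $\widetilde{Q}_{k,n,o}$ into a component with mean $\mu_{k,n-o}$ (completed simulations), a component with mean $\mu_{k,n-o+1,n}$ (incomplete simulations), and an independent remainder, and then uses a symmetry argument---the algorithm cannot tell the optimal arm from a suboptimal one, and the bias expression has the same form for every arm---to force the three coefficients to equal $\frac{n-o}{n}$, $\frac{o}{n}$, and $0$ exactly. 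Without some such structural argument, ``$b \neq 0$ at one $(N,O)$ configuration'' does not by itself give a bias that recurs at the decision times that matter.

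The concrete error is in the $\overline{N}$ branch: the claimed $\Omega(\ln n)$ over-selection is false. If $f(x_0) = x_0 - d$ with $d > 0$, the exploration bonus becomes $\sqrt{2\ln t/(m-d)}$ in place of $\sqrt{2\ln t/m}$, where $m = N + O$; since $\sqrt{m/(m-d)} \to 1$ and $d$ is bounded by a constant depending only on $M$, the inflation merely shifts the UCB stopping threshold $m \lesssim 8\ln t/\Delta_k^2$ by the additive constant $d$, so the suboptimal arm receives $O(1)$ extra pulls, not a constant multiple of $\ln n$. This still suffices for the theorem---an excess regret bounded away from zero does not vanish, which is all the paper itself claims for this branch (its Chernoff-exponent computation likewise produces a non-vanishing constant term rather than a growing one)---but your proof as written would fail at the step asserting logarithmic over-selection, and you would then need to verify, as you note, that the occupancy $O = x_0$ actually occurs at decision times in the critical window so that the expected number of extra pulls is bounded below by a positive constant; the paper handles this by examining early rollouts $\tau_0 < \tau_{\mathrm{sim}}$ in which all but one of an arm's initiated simulations are still outstanding.
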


In the following, we provide the formal proof of Theorem~\ref{theorem: necessary condition for a good algorithm in the tree search setting (formal)}, which states two necessary conditions for having vanishing excess regret in parallel MCTS algorithms. Before delving into the proof, we use Figure~\ref{fig: mini MAB} to introduce the concept of \emph{mini-MAB}. Specifically, in a search tree, each node and its child nodes represent a two-layer search tree that resembles a MAB with the same number of children. We define this two-layer search tree as mini-MAB. Note that one core difference between mini-MABs and MABs is that the reward acquired by a child node of mini-MABs are rewards obtained from a sub-tree rooted at the child node (Figure~\ref{fig: mini MAB (B)}), while for MAB all child nodes produce i.i.d. rewards following a pre-defined distribution.

\begin{figure}
    \centering
    
    \subfigure[MAB with $K = 3$.]{
        \includegraphics[height=1.0in]{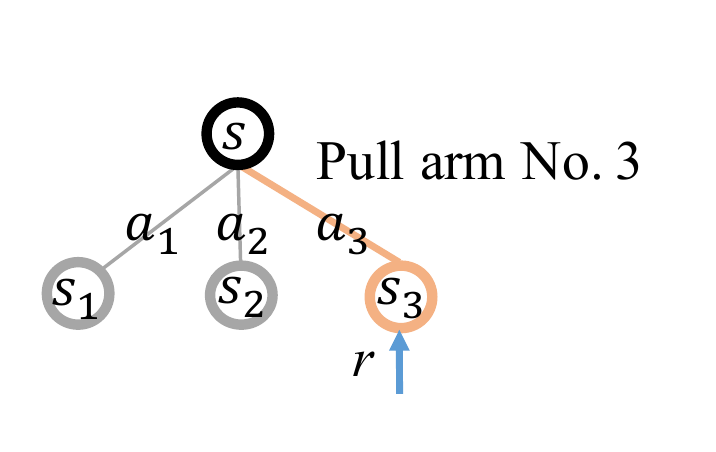}
        \label{fig: mini MAB (A)}
    }
    \subfigure[Equivalent mini-MAB in a search tree.]{
        \includegraphics[height=1.0in]{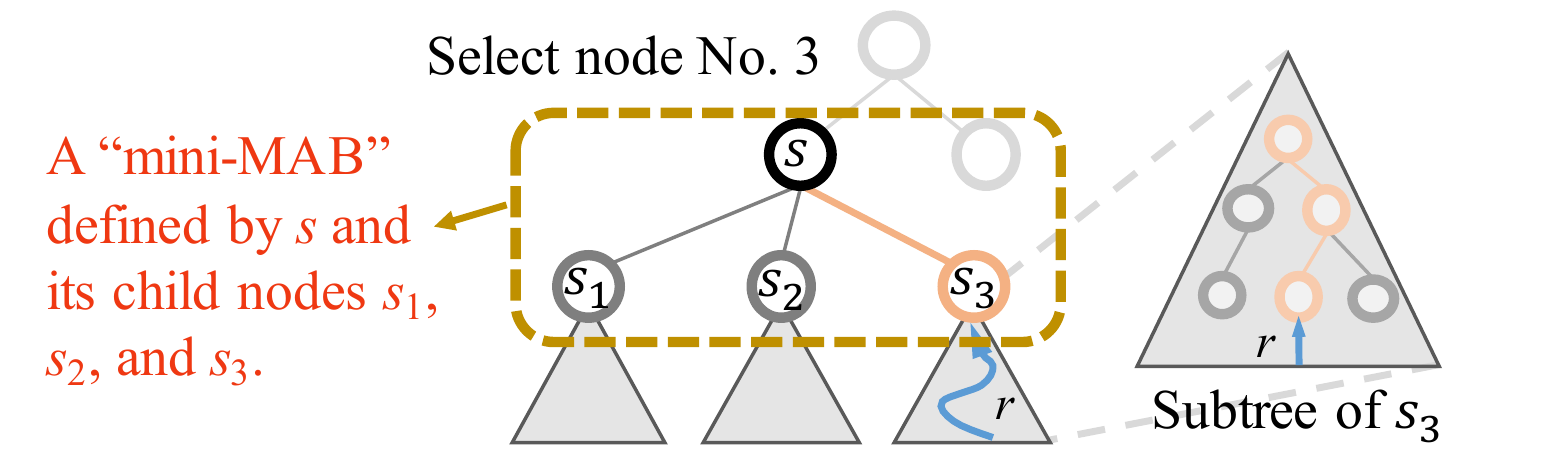}
        \label{fig: mini MAB (B)}
    }
    
    \caption{Demonstration of the mini-MABs in MCTS search trees that resembles a multi-armed bandit (MAB). (a): a MAB with three arms. (b): $\state$, $\state_{1}$, $\state_{2}$, and $\state_{3}$ define a \emph{mini-MAB} that resembles the MAB in (a).}
    \label{fig: mini MAB}
\end{figure}

\begin{proof}[Proof of Theorem~\ref{theorem: necessary condition for a good algorithm in the tree search setting} (Theorem~\ref{theorem: necessary condition for a good algorithm in the tree search setting (formal)}]
To obtain vanishing excess regret, it is necessary to show the following: \emph{the excess regret of the mini-MABs that represent nodes on the optimal path in the search tree should decrease as $t$ increases.} This necessary condition holds since all nodes on the optimal path will be visited $\Omega ( t )$ times when $t$ is sufficiently large (see \citet{kocsis2006improved}), and if any of the nodes have nonvanishing excess regret, the tree search algorithm will suffer from nonvanishing regret. In the following, we derive the necessary conditions for algorithms that have vanishing excess regret in \emph{mini-MABs}.

Consider a \emph{mini-MAB} whose root node is $\state$ (assume it is on the optimal path). The actions are defined as $\{ \action_{k} \}_{k = 1}^{K}$ and the next state following $(\state, \action_{k})$ is defined as $\state_{k}$. 
In order to achieve vanishing excess regret during parallel, it is necessary to have vanishing excess regret when this mini-MAB is parallelized while rollouts its child nodes are performed sequentially. That is, assume we use the sequential algorithm $\mathbb{A}_{\mathrm{seq}}$ to produce reward for all child nodes of the mini-MAB rooted at $\state$. Correspondingly, we define $\mu_{k, n}$ as the expected reward obtained by executing action $\action_{k}$ for the $n$th time. That is, 
    \begin{align}
        \mu_{k, n} := \frac{1}{n} \sum_{n' = 1}^{n} \expectation [ V^{\mathbb{A}_{\mathrm{seq}}}_{\tau_{\state, \action_{k}} (n')} (\state') ]. \label{eq: mu_k,n}
    \end{align}
Similarly, define $\mu^{*}_{n} := \max_{k} \mu_{k, n}$.

Define $\overline{Q}_{k, n, o}$ as the estimated value of the $k$th child node of the \emph{mini-MAB} when there are $n$ initialized simulations and $o$ on-going simulations (which means that there are $n - o$ completed simulations). Formally, $\overline{Q}_{k, n, o}$ can be written as (reflects Eq.~(\ref{eq: modified visit count definition --})/Eq.~(\ref{eq: modified value and visit count definition}) in the general framework)
    \begin{align*}
        \overline{Q}_{k, n, o} := \alpha_{k} Q_{k, n - o} + \beta_{k} \widetilde{Q}_{k, n, o}, \numberthis \label{eq: definition of the new pseudo value}
    \end{align*}
\noindent where $\alpha_{k} := \alpha (\state, \action_{k})$ and $\beta_{k} := \beta (\state, \action_{k})$ (assume $\state$ as the root node of the \emph{mini-MAB}); $\widetilde{Q}_{k, n, o} := \widetilde{Q} (\state, \action_{k})$ is the pseudo value. Note that $\alpha_{k}$ and $\beta_{k}$ might also depend on $n$ and $o$.

We define $T_{k} (t)$ as the number of times action $\action_{k}$ is selected in the first $t$ rollouts. According to the regret decomposition identity \cite{mazumdar2017multi}, $\cumulateregret{t}$ can be decomposed with respect to different arms:
    \begin{align*}
        \cumulateregret{t} = \sum_{k \in \{1, \dots, K\}, k \neq k^{*}} \Delta_{k} \expectation \left [ T_{k} (t) \right ]
    \end{align*}
\noindent where $\Delta_{k} := \max_{k'} \expectation [ Q_{t} (\state, \action_{k'}) ] - \expectation [ Q_{t} (\state, \action_{k}) ]$ is the expected regret of selecting action $\action_{k}$ instead of the best action in the mini-MAB. Therefore, to achieve vanishing excess regret, it is necessary to show that the number of times a suboptimal action is chosen (i.e. $\expectation [T_{k} (t)]$) for a parallel MCTS algorithm should be the number of times such action is taken in sequential MCTS plus a term that vanishes as $t$ goes to infinity.

Define $\overline{e}_{t, n, o} := \sqrt{(2 \ln t) / (n + f (o))}$, where $f(\cdot)$ is defined in Theorem~\ref{theorem: necessary condition for a good algorithm in the tree search setting (formal)}. We lower bound $T_{k} (t)$ by ($k^{*}$ is the index of the optimal action)
    \begin{align*}
        T_{k} (t) & = \sum_{\tau = 1}^{t} [ \text{Action}_{\tau} = \action_{k} ] \\
        & \geq \sum_{\tau = 1}^{t} \min_{n, n' \in [0, \tau]; o, o' \in [0, M - 1]}  \mathbbm{1} \Bigg [ \overline{Q}_{k^{*}, n, o} \!\!+\! \overline{e}_{\tau, n, o} \!\!\leq\! \overline{Q}_{k, n', o'} \!\!+\! \overline{e}_{\tau, n', o'} \Bigg ]
    \end{align*}
We lower bound the probability of the event $\overline{Q}_{k^{*}, n, o} \!\!+\! \overline{e}_{\tau, n, o} \!\!\leq\! \overline{Q}_{k, n', o'} \!\!+\! \overline{e}_{\tau, n', o'}$ using one minus the sum of the probability of the two following events:
    \begin{gather}
        \overline{Q}_{k^{*}, n, o} \geq \mu^{*}_{n} - \overline{e}_{\tau, n, o}, \label{eq: tree with pseudo statistics term 1} \\
        \overline{Q}_{k, n', o'} \leq \mu_{k, n'} + \overline{e}_{\tau, n', o'}, \label{eq: tree with pseudo statistics term 2}
    \end{gather}
\noindent where $\mu_{k, n} := \frac{1}{n} \expectation [\sum_{t = 1}^{n} Q_{t} (\state, \action_{k})]$ is defined as the average value return of the first $n$ times $\action_{k}$ is taken; $\mu^{*}_{n}$ denotes the same quantity defined for the optimal action $\action^{*} := \argmax_{\action'} \expectation [\sum_{t = 1}^{n} Q_{t} (\state, \action')]$. Note that this bound (i.e. Eqs.~(\ref{eq: tree with pseudo statistics term 1}) and (\ref{eq: tree with pseudo statistics term 2})) holds since by definition $\mu^{*}_{n - o} > \mu_{k, n - o}$. 

We first give an outline of the proof regarding the necessary condition of $\overline{Q}$. We shall first show that if $\expectation [ \overline{Q}_{k^{*}, n, O_{k^{*}, t}} ] \geq \mu_{n}^{*}$ and $\expectation [ \overline{Q}_{k, n', O_{k, t}} ] \leq \mu_{k, n'}$ are not satisfied, there exists a mini-MAB task, $n_{0}$, and $p_{\epsilon} \in (0, 1)$ such that for any $n > n_{0}$, the probability of both Eq.~(\ref{eq: tree with pseudo statistics term 1}) and Eq.~(\ref{eq: tree with pseudo statistics term 2}) are smaller than $p_{\epsilon}$. Hence $T_{k} (t)$ will be lower bounded by $(1 - 2 p_{\epsilon}) \cdot t$, meaning that the suboptimal arm $k$ will be pulled $\Omega (t)$ times. Therefore, the algorithm cannot achieve vanishing excess regret. Next, given that (i) $\expectation [ \overline{Q}_{k^{*}, n, O_{k^{*}, t}} ] \geq \mu_{n}^{*}$ and $\expectation [ \overline{Q}_{k, n', O_{k, t}} ] \leq \mu_{k, n'}$ should be satisfied, and (ii) the algorithm does not know which arm is optimal (i.e., it cannot distinguish between $k$ and $k^{*}$), the algorithm has to satisfy $\expectation [ \overline{Q}_{k^{*}, n, O_{k^{*}, t}} ] = \mu_{n}^{*}$ and $\expectation [ \overline{Q}_{k, n', O_{k, t}} ] = \mu_{k, n'}$, which gives the necessary condition of $\overline{Q}$. Details are provided as follows.

Define $\mu_{k, n_{1}, n_{2}}$ as the average reward of the $k$th arm of the \emph{mini-MAB} from its $n_{1}$th rollout to its $n_{2}$th rollout ($n_{1} \leq n_{2}$). Similarly $\mu^{*}_{n_{1}, n_{2}}$ defines the same quantity for the optimal arm $k^{*}$. We have the following results (for any integer $0 \leq o \leq n$):
    \begin{align*}
        \mu^{*}_{n} & = \frac{n - o}{n} \mu^{*}_{n - o} + \frac{o}{n} \mu^{*}_{n - o + 1, n}, \numberthis \label{eq: decomposition of the expected values 1} \\
        \mu_{k, n} & = \frac{n - o}{n} \mu_{k, n - o} + \frac{o}{n} \mu_{k, n - o + 1, n}. \numberthis \label{eq: decomposition of the expected values 2}
    \end{align*}
Using the above results, Eqs.~(\ref{eq: tree with pseudo statistics term 1}) and (\ref{eq: tree with pseudo statistics term 2}) can be equivalently written as
    \begin{gather*}
        \alpha_{k^{*}} \left ( Q_{k^{*}, n - o} - \mu^{*}_{n - o} \right ) + \beta_{k^{*}} \widetilde{Q}_{k^{*}, n, o} + \alpha_{k^{*}} \mu^{*}_{n - o} - \mu^{*}_{n} \geq - \overline{e}_{\tau, n, o}, \numberthis \label{eq: tree with pseudo statistics term 4} \\
        \alpha_{k} \left ( Q_{k, n' - o'} - \mu_{k, n' - o'} \right ) + \beta_{k} \widetilde{Q}_{k, n', o'} + \alpha_{k} \mu_{k, n' - o'} - \mu_{k, n'} \leq \overline{e}_{\tau, n', o'}, \numberthis \label{eq: tree with pseudo statistics term 5}
    \end{gather*}
\noindent where $\phi_{k}$ is a variable that depend on $k$, $n$, and $o$. By definition, we have $\expectation [ Q_{k^{*}, n - o} ] = \mu^{*}_{n - o}$ and $\expectation [ Q_{k, n' - o'} ] = \mu_{k, n' - o'}$. We then focus on the following terms in the above equations:
    \begin{gather*}
        \beta_{k^{*}} \widetilde{Q}_{k^{*}, n, o} \!+\! \alpha_{k^{*}} \mu^{*}_{n \!-\! o} \!\!-\!\! \mu^{*}_{n}, \numberthis \label{eq: tree necessary condition result term 1} \\
        \beta_{k} \widetilde{Q}_{k, n'\!, o'} \!+\! \alpha_{k} \mu_{k, n' \!-\! o'} \!-\! \mu_{k, n'}. \numberthis \label{eq: tree necessary condition result term 2}
    \end{gather*}
We show that vanishing excess regret cannot be achieved unless Eqs.~(\ref{eq: tree necessary condition result term 1}) and (\ref{eq: tree necessary condition result term 2}) have $\leq 0$ and $\geq 0$ expectation value, respectively. Otherwise, there exists $n_{0}$ such that for any $n > n_{0}$ and $o < M$ (by definition), $\overline{e}_{\tau, n, o}$ and $\overline{e}_{\tau, n', o'}$ will have smaller absolute value than Eqs.~(\ref{eq: tree necessary condition result term 1}) and (\ref{eq: tree necessary condition result term 2}), respectively. For Eq.~(\ref{eq: tree with pseudo statistics term 5}), this means that when $n > n_{0}$ its left-hand side has higher expectation value than its right-hand side, which means there exists $p_{\epsilon} \in (0, 1)$ such that the probability of Eq.~(\ref{eq: tree with pseudo statistics term 5}) is smaller than $p_{\epsilon}$. This argument similarly applies to Eq.~(\ref{eq: tree with pseudo statistics term 4}). As mentioned before, Eqs.~(\ref{eq: tree with pseudo statistics term 4}) and (\ref{eq: tree with pseudo statistics term 5}) have probability upper bound means the suboptimal arm $k$ will be pulled $\Omega (t)$ times, which makes the parallel MCTS algorithm fail to achieve vanishing excess regret.

Given that the parallel MCTS algorithm belongs to the general framework (Algorithm~\ref{alg: general framework of parallel MCTS algs}), there are three types of pseudo statistics that can be added to $\widetilde{Q}$, which are (i) statistics related to all complete simulations, (ii) statistics related to all incomplete simulations, and (iii) statistics non-related to simulation returns. Given this, we decompose the pseudo value $\widetilde{Q}_{k, n, o}$ into three terms:
    \begin{align*}
        \widetilde{Q}_{k, n, o} := \widetilde{Q}^{\mu_{k, n - o}}_{k, n, o} + \phi_{k} \cdot \widetilde{Q}^{\mu_{k, n - o + 1, n}}_{k, n, o} + \widetilde{Q}^{R}_{k, n, o},
    \end{align*}
\noindent where $\expectation [ \widetilde{Q}^{\mu_{k, n - o}}_{k, n, o} ] = \mu_{k, n - o}$, $\expectation [ \widetilde{Q}^{\mu_{k, n - o + 1, n}}_{k, n, o} ] = \mu_{k, n - o + 1, n}$, and $\widetilde{Q}^{R}_{k, n, o}$ is independent of both $\mu_{k, n - o}$ and $\mu_{k, n - o + 1, n}$. For the optimal arm $k^{*}$ (the following holds for other arms as well), we have
    \begin{align*}
        & \beta_{k^{*}} \widetilde{Q}_{k^{*}, n, o} + \alpha_{k^{*}} \mu^{*}_{n - o} - \mu^{*}_{n} \\
        \overset{(a)}{=} & \beta_{k^{*}} ( \widetilde{Q}^{\mu_{k^{*}, n - o}}_{k^{*}, n, o} \!\!+\! \phi_{k} \cdot \widetilde{Q}^{\mu_{k^{*}, n - o + 1, n}}_{k^{*}, n, o} + \widetilde{Q}^{R}_{k^{*}, n, o}) \!+\! (\alpha_{k^{*}} - \frac{n - o}{n}) \mu^{*}_{n - o} - \frac{o}{n} \mu^{*}_{n - o + 1, n} \\
        =\! & \left ( \beta_{k^{*}} \widetilde{Q}^{\mu_{k^{*}, n - o}}_{k^{*}, n, o} \!+\! (\alpha_{k^{*}} \!-\! \frac{n - o}{n}) \mu^{*}_{n - o} \right ) \!+\! \left ( \beta_{k^{*}} \phi_{k^{*}} \widetilde{Q}^{\mu_{k^{*}, n - o + 1, n}}_{k^{*}, n, o} \!\!-\! \frac{o}{n} \mu^{*}_{n - o + 1, n} \right ) \!+\! \widetilde{Q}^{R}_{k^{*}, n, o}, \numberthis \label{eq: three terms for proving the necessary conditions on tree search}
    \end{align*}
\noindent where $(a)$ uses the result of Eq.~(\ref{eq: decomposition of the expected values 1}). The expectation value of Eq.~(\ref{eq: three terms for proving the necessary conditions on tree search}) is
    \begin{align*}
        & \left ( \beta_{k^{*}} + \alpha_{k^{*}} - \frac{n - o}{n} \right) \mu^{*}_{n - o} + \left ( \beta_{k^{*}} \phi_{k^{*}} - \frac{o}{n} \right ) \mu^{*}_{n - o + 1, n} + \expectation [ \widetilde{Q}^{R}_{k^{*}, n, o} ]. \numberthis \label{eq: three terms for proving the necessary conditions on tree search expectation}
    \end{align*}
Similarly, for arm $k$ we have
    \begin{align*}
        & \beta_{k} \widetilde{Q}_{k, n', o'} + \alpha_{k} \mu_{k, n' - o'} - \mu_{k, n'} \\
        \overset{(a)}{=} & \beta_{k} ( \widetilde{Q}^{\mu_{k, n' - o'}}_{k, n', o'} \!\!+\! \phi_{k} \cdot \widetilde{Q}^{\mu_{k, n' - o' + 1, n'}}_{k, n', o'} + \widetilde{Q}^{R}_{k, n', o'}) \!+\! (\alpha_{k} - \frac{n' - o'}{n'}) \mu_{k, n' - o'} - \frac{o'}{n'} \mu_{k, n' - o' + 1, n'} \\
        =\! & \left ( \beta_{k} \widetilde{Q}^{\mu_{k, n' - o'}}_{k, n', o'} \!+\! (\alpha_{k} \!-\! \frac{n' - o'}{n'}) \mu_{k, n' - o'} \right ) \!+\! \left ( \beta_{k} \phi_{k} \widetilde{Q}^{\mu_{k, n' - o' + 1, n'}}_{k, n', o'} \!\!-\! \frac{o'}{n'} \mu_{k, n' - o' + 1, n'} \right ) \!+\! \widetilde{Q}^{R}_{k, n', o'}, \numberthis \label{eq: three terms for proving the necessary conditions on tree search 2}
    \end{align*}
and its expectation value is 
    \begin{align*}
        & \left ( \beta_{k} + \alpha_{k} - \frac{n - o}{n} \right) \mu_{k, n - o} + \left ( \beta_{k} \phi_{k} - \frac{o}{n} \right ) \mu_{k, n - o + 1, n} + \expectation [ \widetilde{Q}^{R}_{k, n, o} ]. \numberthis \label{eq: three terms for proving the necessary conditions on tree search expectation 2}
    \end{align*}

We now argue that in order for the \emph{mini-MAB} to achieve vanishing excess regret, the expectation of the three terms in the above equation should all be 0. Specifically, previously we have shown that Eqs.~(\ref{eq: tree necessary condition result term 1}) and (\ref{eq: tree necessary condition result term 2}) should have $\leq 0$ and $\geq 0$ expected values. However, as suggested by Eqs.~(\ref{eq: three terms for proving the necessary conditions on tree search expectation}) and (\ref{eq: three terms for proving the necessary conditions on tree search expectation 2}), since the algorithm does not know which arm is optimal, and both equations have the same form, it is impossible to have Eq.~(\ref{eq: three terms for proving the necessary conditions on tree search expectation}) $< 0$ while Eq.~(\ref{eq: three terms for proving the necessary conditions on tree search expectation 2}) $> 0$. Hence, both equations should be equal to zero. Since $\mu^{*}_{n - o}$ and $\mu^{*}_{n - o + 1, n}$ ($\mu_{k, n - o}$ and $\mu_{k, n - o + 1, n}$) are task-specific, to make Eqs.~(\ref{eq: three terms for proving the necessary conditions on tree search expectation}) and (\ref{eq: three terms for proving the necessary conditions on tree search expectation 2}) equals to zero, we should have ($\forall k$):
    \begin{gather*}
        \beta_{k} + \alpha_{k} - \frac{n - o}{n} = 0, \\
        \beta_{k} \phi_{k} - \frac{o}{n} = 0, \\
        \expectation [ \widetilde{Q}^{R}_{k, n, o} ] = 0.
    \end{gather*}
Plug in the above results into Eq.~(\ref{eq: definition of the new pseudo value}), we conclude that one necessary condition for having vanishing excess regret in the \emph{mini-MAB} is ($\forall k$)
    \begin{align*}
        \expectation [ \overline{Q}_{k, n, o} ] \!&=\! \alpha_{k} \mu_{k, n - o} + \beta_{k} \Big ( \widetilde{Q}^{\mu_{k, n - o}}_{k, n, o} + \phi_{k} \widetilde{Q}^{\mu_{k, n - o + 1, n}}_{k, n, o} + \widetilde{Q}^{R}_{k, n, o} \Big ) \\
        & = (\alpha_{k} + \beta_{k}) \mu_{k, n \!-\! o} + \beta_{k} \phi_{k} \mu_{k, n - o + 1, n} + \expectation [ \widetilde{Q}^{R}_{k, n, o} ] \\
        & = \frac{n - o}{n} \mu_{k, n \!-\! o} + \frac{o}{n} \mu_{k, n - o + 1, n} \\
        & \overset{(a)}{=} \mu_{k, n},
    \end{align*}
where $(a)$ uses the result in Eq.~(\ref{eq: decomposition of the expected values 1}).

According to the definition of $\mu_{k, n}$, the necessary condition for having vanishing excess regret is
    \begin{align*}
        \expectation [ \overline{Q}_{k, n, o'} ] = \mu_{k, n} = \frac{1}{n} \sum_{n' = 1}^{n} \expectation [ V^{\mathbb{A}_{\mathrm{seq}}}_{\tau_{\state, \action_{k}} (n')} (\state') ].
    \end{align*}
Equivalently, it can be written as 
    \begin{align*}
        \expectation [ \overline{Q} (\state, \action_{k}) ] = \mu_{k, n'} = \frac{1}{n} \sum_{n' = 1}^{n} \expectation [ V^{\mathbb{A}_{\mathrm{seq}}}_{\tau_{\state, \action_{k}} (n')} (\state') ],
    \end{align*}
\noindent where $n = N (\state, \action_{k}) + O (\state, \action_{k})$. This completes the proof of the first necessary condition in Theorem~\ref{theorem: necessary condition for a good algorithm in the tree search setting}.

Assuming the first necessary condition is satisfied, we proceed to prove the second necessary condition. Note that according to the assumption on $\widetilde{N}$ made in the theorem, we have:
    \begin{align*}
        \overline{N} (\state_{0}, \action_{k}) = N (\state_{0}, \action_{k}) + f ( O (\state_{0}, \action_{k}) ),
    \end{align*}
\noindent where $f$ can be any function whose domain is $\{ x \mid 0 \leq x \leq M - 1, x \in \mathbb{Z} \}$ and whose range is $[ 0, + \infty )$. 

Suppose at time step $\tau_{0}$ ($\tau_{0} < \simudelay$), arm $k$ has been visited $n_{0} + 1$ times (one of them is the done at the initialization phase of the corresponding edge $(\state, \action_{k})$). We consider the quantity $\Pr ( \overline{Q}_{k, n_{0} + 1, n_{0}} \leq \mu_{k} + \overline{e}_{\tau_{0}, n_{0} + 1, n_{0}} )$, which represents the probability of Eq.~(\ref{eq: tree with pseudo statistics term 2}) in the circumstance specified by $k$, $\tau_{0}$, and $n_{0}$:
    \begin{align*}
        & \Pr ( \overline{Q}_{k, n_{0} + 1, n_{0}} \leq \mu_{k, n_{0} + 1} + \overline{e}_{\tau_{0}, n_{0} + 1, n_{0}} ) \\
        \overset{(a)}{=} & \Pr ( \overline{Q}_{k, n_{0} + 1, 0} \geq \mu_{k, n_{0} + 1} + \overline{e}_{\tau_{0}, n_{0} + 1, n_{0}} ) \\
        \overset{(b)}{\leq} & \exp ( - \frac{n_{0} + 1}{2} \overline{e}_{\tau_{0}, n_{0} + 1, n_{0}}^{2} ) \\
        \overset{(c)}{=} & \exp ( - \frac{n_{0} + 1}{2} \frac{2 \log g(\tau_{0})}{f(n_{0}) + 1} ) \\
        = & \frac{1}{g(\tau_{0})^{\frac{n_{0} + 1}{f(n_{0}) + 1}}}, \numberthis \label{eq: another partial result from the necessary condition proof 1}
    \end{align*}
\noindent where $(a)$ uses the assumption that the first necessary condition is satisfied (i.e. $\overline{Q}_{k, n_{0} + 1, n_{0}} = Q_{k, n_{0} + 1}$), $(b)$ follows the Chernoff-Hoeffding bound on the $\frac{1}{n_{0} + 1}$-subgaussian random variable $Q_{k, n_{0} + 1}$, and $(c)$ expands the definition of $\overline{e}_{\tau_{0}, n_{0} + 1, n_{0}}$ ($g (\cdot)$ is defined as follows). Suppose we have $f(n_{0}) = n'_{0} < n_{0}$ (without loss of generality assume $f(x) = x \; (x \neq n_{0})$). We first show that in this case $g (\tau_{0}) = \tau_{0} - n_{0} + n'_{0}$. Specifically, according to the tree policy (Eq.~(\ref{eq: modified UCT})), we have
    \begin{align*}
        g(\tau) & = \sum_{k} \overline{N} (\state, \action_{k}) \\
        & = \sum_{k} N (\state, \action_{k}) + f ( O (\state, \action_{k}) ) \\
        & = \sum_{k} N (\state, \action_{k}) + O (\state, \action_{k}) + \Big ( f( O(\state, \action_{k}) ) - O(\state, \action_{k}) \Big ).
    \end{align*}
First, notice that at rollout step $\tau_{0}$, we have $\sum_{k} N (\state, \action_{k}) + O (\state, \action_{k}) = \tau_{0}$. The reason is that at rollout step $\tau_{0}$, Algorithm~\ref{alg: general framework of parallel MCTS algs} has initialized $\tau_{0}$ simulations in total, and each simulation is either observed (will be counted by $N (\state, \action_{k})$) or unobserved (will be counted by $O (\state, \action_{k})$). Next, we look at the last term $f( O(\state, \action_{k}) ) - O(\state, \action_{k})$. By assumption, it equals to $n'_{0} - n_{0}$ if and only if $O (\state, \action_{k}) = n_{0}$, and is otherwise zero. Therefore, at rollout step $\tau_{0}$, as long as no other edges have $n_{0}$ on-going simulations, we can conclude that $g(\tau_{0}) = \tau_{0} - n_{0} + n'_{0}$. Therefore, Eq.~(\ref{eq: another partial result from the necessary condition proof 1}) can be further simplified as
    \begin{align*}
        \Pr ( \overline{Q}_{k, n_{0} + 1, n_{0}} \leq \mu_{k, n_{0} + 1} + \overline{e}_{\tau_{0}, n_{0} + 1, n_{0}} ) 
        \leq & (\tau_{0} - n_{0} + n'_{0})^{-\frac{n_{0} + 1}{n'_{0} + 1}}. \numberthis \label{eq: another partial result from the necessary condition proof 2}
    \end{align*}
We focus on the condition of having nonvanishing excess regret. Specifically, we focus on the condition of Eq.~(\ref{eq: another partial result from the necessary condition proof 2}) being greater than the upper bound in the sequential case (i.e. when all $n_{0}$ unobserved samples are observed), which is $\frac{1}{\tau_{0}}$ Chernoff-Hoeffding inequality of subgaussian variables:
    \begin{align*}
        (\tau_{0} - n_{0} + n'_{0})^{-\frac{n_{0} + 1}{n'_{0} + 1}} > \frac{1}{\tau_{0}} \Leftrightarrow \tau_{0} > (\tau_{0} - n_{0} + n'_{0})^{\frac{n_{0} + 1}{n'_{0} + 1}}.
    \end{align*}
For any $n_{0}$ and $n'_{0}$, there exists $t_{0}$ such that when $\tau_{0} > t_{0}$, we have
    \begin{align*}
        \tau_{0} - (\tau_{0} - n_{0} + n'_{0})^{\frac{n_{0} + 1}{n'_{0} + 1}} > n_{0} - n'_{0}, \numberthis \label{eq: a somehow important result}
    \end{align*}
\noindent where $n_{0} - n'_{0}$ is nonvanishing as $\tau_{0}$ increases. Therefore, it will incur a nonvanishing term in the probability of Eq.~(\ref{eq: tree with pseudo statistics term 2}), which will result in a nonvanishing regret term. Therefore, to have vanishing cumulative regret, we should not have $f(n_{0}) = n'_{0} < n_{0}$. This confirms the necessary condition $f(x) > x$. 
\end{proof}

\subsection{Theoretical Justification of WU-UCT}
\label{Theoretical Justification of WU-UCT}

This section provides formal proof of Theorem~\ref{theorem: regret upper bound of WU-UCT in stochastic bandit setup}, which indicates WU-UCT achieves vanishing excess regret under the depth-2 setup. In the following, we first justify the statement ``$R_{UCT} (n)$ is the cumulative regret of running the (sequential) UCT for $n$ steps on $\mathbb{T}$'', i.e., the expected cumulative regret of the UCT algorithm under the depth-2 setup.

\noindent \textbf{Cumulative regret upper bound of UCT in the depth-2 case} $\;$ Define $\action_{k^{*}}$ as the optimal action that leads to the highest expected reward. According to the regret decomposition identity \cite{mazumdar2017multi}, $\cumulateregret{t}$ can be decomposed with respect to different arms:
    \begin{align*}
        \cumulateregret{t} = \sum_{k \in \{1, \dots, K\}, k \neq k^{*}} \Delta_{k} \expectation \left [ T_{k} (t) \right ], \numberthis \label{eq: re-decomposed regret}
    \end{align*}
\noindent where $\Delta_{k} := \mu^{*} - \mu_{k}$, $\mu^{*} := \max_{k} \mu_{k}$, $k^{*} := \argmax_{k} \mu_{k}$, and $T_{k} (t)$ is defined as the number of times arm $k$ is selected in the first $t$ rollouts.
This suggests that we only need to bound the expected visit counts of all suboptimal arms (i.e., $\expectation \left [ T_{k} (t) \right ] \; (k \neq k^{*})$). $Q_{t} (\state_{0}, \action_{k})$ is defined as the reward estimate for arm $k$ at the end of the $t$th rollout, and $N_{t} (\state_{0}, \action_{k})$ denotes the visit count of arm $k$ at the end of rollout step $t$. To simplify notation, we additionally define $Q_{k, n}$ as the (empirical) average reward of arm $k$ after the $n$th observation of that arm (i.e., $n$ simulation returns have been obtained).

The event $\text{Arm}_{\tau} = k$ means the $k$th arm is pulled at time $t$. According to the definition of $T_{k} (t)$, we have (define $l$ as an arbitrary positive integer; $e_{t, n} := \sqrt{(2 \ln t) / n}$)
    \begin{align*}
        T_{k} (t) & = 1 + \sum_{\tau = K + 1}^{t} \indicatorfunc{\text{Arm}_{\tau} = k} \\
        & \leq l + \sum_{\tau = K + 1}^{t} \indicatorfunc{\text{Arm}_{\tau} = k, T_{k} (\tau - 1) \geq l} \\
        & \overset{(a)}{\leq} l + \sum_{\tau = K + 1}^{t} \mathbbm{1} \Bigg [Q_{\tau - 1} (\state_{0}, \action_{k^{*}}) + e_{\tau, N_{\tau - 1} (\state_{0}, \action_{k^{*}})} \leq Q_{\tau - 1} (\state_{0}, \action_{k}) + e_{\tau, N_{\tau - 1} (\state_{0}, \action_{k})} \Bigg ] \\
        & \leq l + \sum_{\tau = K + 1}^{t} \mathbbm{1} \Bigg [ \min_{0 < n < \tau} Q_{k^{*}, n} + e_{\tau, n} \leq \max_{l \leq n' < \tau} Q_{k, n'} + e_{\tau, n'} \Bigg ] \\
        & \leq l + \sum_{\tau = 1}^{t} \max_{n \in [1, \tau - 1]} \max_{n' \in [1, \tau - 1]} \mathbbm{1} \Bigg [ Q_{k^{*}, n} + e_{\tau, n} \leq Q_{k, n'} + e_{\tau, n'} \Bigg ]. \numberthis \label{eq: bound for sub-optimal arm visit count}
    \end{align*}
\noindent where $(a)$ uses the fact that the necessary condition of choosing arm $k$ at rollout step $\tau$ is that the upper confidence bound of the $k$th arm is greater than or equal to that of the optimal arm $k^{*}$.
    
We bound the probability of the event $Q_{k^{*}, n} + e_{\tau, n} \leq Q_{k, n'} + e_{\tau, n'}$ using the sum of the following three events' probability:
    \begin{align}
        Q_{k^{*}, n} \leq \mu^{*} - e_{\tau, n}, \label{eq: UCT bound condition 1} \\
        Q_{k, n'} \geq \mu_{k} + e_{\tau, n'}, \label{eq: UCT bound condition 2} \\
        \mu^{*} < \mu_{k} + 2 e_{\tau, n'}. \label{eq: UCT bound condition 3}
    \end{align}
Since the rewards received from arm $k$ minus its expectation (i.e., $R_{k} - \mu_{k}$) are independent 1-subgaussian random variables (by the assumption made in Theorem~\ref{theorem: regret upper bound of WU-UCT in stochastic bandit setup}), we can show that $Q_{k, n}$ is $1/n$-subgaussian (since it is the average of $n$ 1-subgaussian random variables \cite{buldygin1980sub}). The Chernoff-Hoeffding bound for subgaussian random variables state that if random variable $X$ is $\sigma^{2}$-subgaussian, we have $\Pr \left ( X \geq \epsilon \right ) \leq \exp \left ( - \epsilon^{2} / (2 \sigma^{2}) \right )$. Plug in Eqs.~(\ref{eq: UCT bound condition 1}) and (\ref{eq: UCT bound condition 2}), we have 
    \begin{align}
        & \Pr \left ( Q_{k^{*}, n} \leq \mu^{*} - e_{\tau, n} \right ) \leq 1 / \tau, \label{eq: UCT bound chernoff 1} \\
        & \Pr \left ( Q_{k, n'} \geq \mu_{k} + e_{\tau, n'} \right ) \leq 1 / \tau. \label{eq: UCT bound chernoff 2}
    \end{align}
Next, we focus on Eq.~(\ref{eq: UCT bound condition 3}):
    \begin{align*}
        \mu_{k} + 2 e_{t, n} > \mu^{*} \quad \Leftrightarrow \quad \mu_{k} + 2 \sqrt{\frac{2 \ln t}{s_{k}}} > \mu^{*} \quad \Leftrightarrow \quad \sqrt{\frac{2 \ln t}{s_{k}}} > \frac{\Delta_{k}}{2} \quad \Leftrightarrow \quad s_{k} < \frac{8 \ln t}{\Delta_{k}^{2}}.
    \end{align*}
Therefore, when $n' \geq \left \lceil \frac{8 \ln t}{\Delta_{k}^{2}} \right \rceil$, Eq.~(\ref{eq: UCT bound condition 3}) is guaranteed to be false. So we have
    \begin{align*}
        \expectation \left [ T_{k} (t) \right ] & \leq \left \lceil \frac{8 \ln t}{\Delta_{k}^{2}} \right \rceil + \sum_{\tau = 1}^{t} \left ( \Pr \left ( Q_{k^{*}, n} \leq \mu^{*} - e_{\tau, n} \right ) + \Pr \left ( Q_{k, n'} \geq \mu_{k} + e_{\tau, n'} \right ) \right ) \\
        & \leq \left \lceil \frac{8 \ln t}{\Delta_{k}^{2}} \right \rceil + \sum_{\tau = 1}^{t} \frac{2}{\tau} \\
        & \leq \left ( \frac{8}{\Delta_{k}^{2}} + 2 \right ) \ln t + 1.
    \end{align*}
Plugging this result in Eq.~(\ref{eq: re-decomposed regret}) gives the regret upper bound
    \begin{align*}
        \cumulateregret{t} \leq R_{UCT} := \sum_{k \in \{1, \dots, K\}, k \neq k^{*}} \left [ \left ( \frac{8}{\Delta_{k}} + 2 \Delta_{k} \right ) \ln t + \Delta_{k} \right ].
    \end{align*}
    
Next, we justify the cumulative regret upper bound of WU-UCT.

\noindent \textbf{Formal proof of Theorem~\ref{theorem: regret upper bound of WU-UCT in stochastic bandit setup}}

\vspace{-0.8em}

\begin{proof}[Proof of Theorem~\ref{theorem: regret upper bound of WU-UCT in stochastic bandit setup}]
Before delving into the proof, we briefly review WU-UCT \cite{liu2020watch}. WU-UCT constructs a global search tree that is operated only by the main/master process. The master process repeatedly perform rollouts and assign simulation and expansion tasks to the workers and collect results from them. Specifically, the main process performs selection with the modified tree policy (\ref{eq: modified UCT}) (with the hyperparameter specified according to Table~\ref{table: parallel MCTS algorithms specification extended}), where an \emph{incomplete update} process increments the \emph{incomplete visit count} $O (\state_{0}, \action_{k})$ of the traversed nodes by one. Expansions and simulations are done in parallel by the workers, and we refer readers interested in the details to Liu et al. \cite{liu2020watch}. During backpropagation, an additional \emph{complete update} process decrements $O (\state, \action)$ of the traversed nodes by one.

On the high level, WU-UCT has a parallel architecture similar to TreeP, where all statistics are globally available (thus $\syncrate = 1$). We start the proof by a high-level demonstration, and then follow the key intuitions to formalize it.

\begin{figure}[t]
    \centering
    \includegraphics[width=0.9\columnwidth]{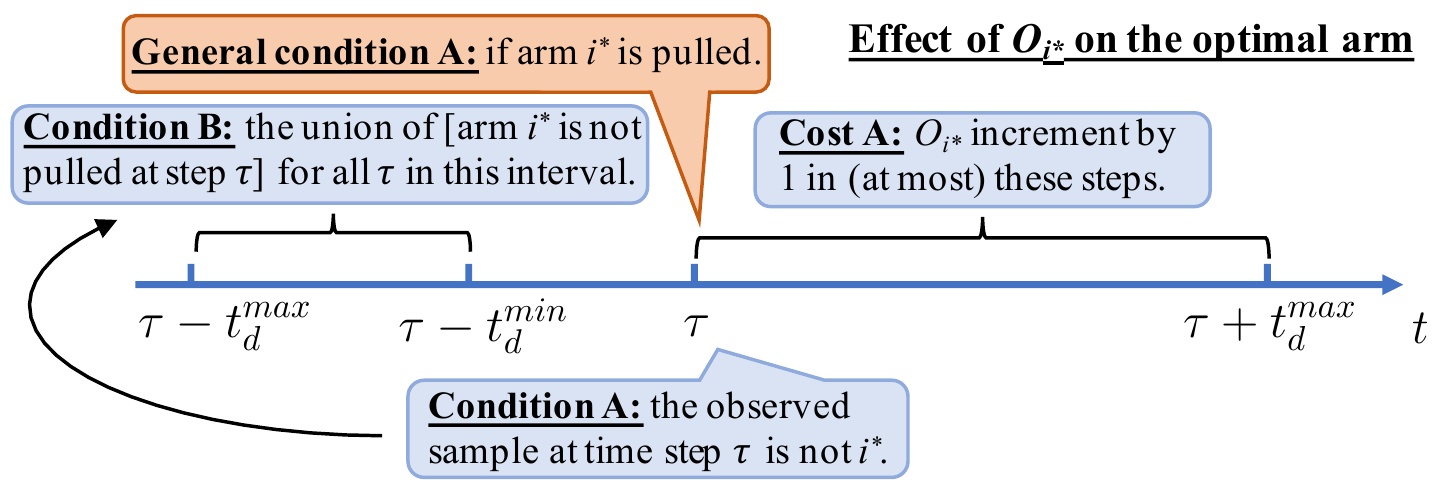}
    
    \vspace{-0.5em}
    \caption{The influence of WU-UCT on the expected cumulative regret (comparing to the sequential case) by pre-updating $O_{i^{*}}$.}
    \label{fig: WU-UCT asymptotic proof}
\end{figure}

We argue that when dealing with the MAB problem, WU-UCT can be treated as a sequential UCT where some of the \emph{observed samples} are replaced by \emph{unobserved samples} without actual simulation return. First, note that with the help of the adjustment on the visit count (i.e., $\overline{N} (\state_{0}, \action_{k}) := N (\state_{0}, \action_{k}) + O (\state_{0}, \action_{k})$), at time step $\tau$, we can upper bound $T_{k} (t)$ by 
    \begin{align*}
        T_{k} (t) & \leq l + \sum_{\tau = K + 1}^{t} \mathbbm{1} \Bigg [Q_{\tau - 1} (\state_{0}, \action_{k^{*}}) + e_{\rho (\tau), \overline{N}_{\tau - 1} (\state_{0}, \action_{k^{*}})} \leq Q_{\tau - 1} (\state_{0}, \action_{k}) + e_{\rho (\tau), \overline{N}_{\tau - 1} (\state_{0}, \action_{k})} \Bigg ], \numberthis \label{eq: WU-UCT proof partial result 1}
    \end{align*}
\noindent where $\rho (\tau) := \sum_{k = 1}^{K} \overline{N} (\state_{0}, \action_{k})$ according to the tree policy defined by Eq.~(\ref{eq: modified UCT}). By the definition $\overline{N} (\state_{0}, \action_{k}) := N (\state_{0}, \action_{k}) + O (\state_{0}, \action_{k})$ we can easily verify that $\rho (\tau) = \tau$: note that at the end of the $\tau$th rollout, there are $\tau$ assigned simulation tasks, and each task is either observed (is recorded in $N$) or unobserved (is recorded in $O$). Therefore, we can rewrite Eq.~(\ref{eq: WU-UCT proof partial result 1}) as 
    \begin{align*}
        T_{k} (t) & \leq l + \sum_{\tau = K + 1}^{t} \mathbbm{1} \Bigg [\overline{Q}_{\tau - 1} (\state_{0}, \action_{k^{*}}) + e_{\tau, \overline{N}_{\tau - 1} (\state_{0}, \action_{k^{*}})} \leq \overline{Q}_{\tau - 1} (\state_{0}, \action_{k}) + e_{\tau, \overline{N}_{\tau - 1} (\state_{0}, \action_{k})} \Bigg ]. \numberthis \label{eq: WU-UCT proof partial result 2}
    \end{align*}
The key observation we want to emphasize here is that with the help of the adjustment on the visit count (i.e., $\overline{N} (\state_{0}, \action_{k}) := N (\state_{0}, \action_{k}) + O (\state_{0}, \action_{k})$), the time step represented by $\rho (\tau)$ has been calibrated to be the same with the sequential case, i.e., $\rho (\tau) := \tau$. 
In this way, as we shall proceed to show, though according to Table~\ref{table: parallel MCTS algorithms specification extended}, WU-UCT has $\simudelay = M$, it has vanishing regret. Under this observation, the main difference between WU-UCT and the sequential UCT is that its value estimates $\{ Q_{\tau} (\state_{0}, \action_{k}) \}_{k = 1}^{K}$ are less informative compared to UCT. Specifically, for the \emph{incomplete simulations}, though $\overline{N} (\state_{0}, \action_{k})$ is adjusted by $O (\state_{0}, \action_{k})$ and resemble the sequential case, these simulation returns $\hat{V} (\state_{k})$ are not available and the variance of the estimate is relatively high compared to the sequential algorithm. Keep in mind this similarity between WU-UCT and UCT. In the following, we analyze the excess regret caused by the \emph{inaccurate} $Q_{\tau} (\state_{0}, \action_{k})$.

Following Eq.~(\ref{eq: WU-UCT proof partial result 2}), we have 
    \begin{align*}
        T_{k} (t) & \leq l + \sum_{\tau = K + 1}^{t} \mathbbm{1} \Bigg [\overline{Q}_{\tau - 1} (\state_{0}, \action_{k^{*}}) + e_{\tau, \overline{N}_{\tau - 1} (\state_{0}, \action_{k^{*}})} \leq \overline{Q}_{\tau - 1} (\state_{0}, \action_{k}) + e_{\tau, \overline{N}_{\tau - 1} (\state_{0}, \action_{k})} \Bigg ] \\
        & \overset{(a)}{\leq} l + \sum_{\tau = K + 1}^{t} \mathbbm{1} \Bigg [Q_{\tau - 1} (\state_{0}, \action_{k^{*}}) + e_{\tau, \overline{N}_{\tau - 1} (\state_{0}, \action_{k^{*}})} \leq Q_{\tau - 1} (\state_{0}, \action_{k}) + e_{\tau, \overline{N}_{\tau - 1} (\state_{0}, \action_{k})} \Bigg ] \\
        & \overset{(b)}{\leq} l + \sum_{\tau = 1}^{t} \max_{n \in [1, \tau - 1]} \max_{n' \in [1, \tau - 1]} \mathbbm{1} \Bigg [ Q_{k^{*}, n} + e_{\tau, n + O_{\tau} (\state_{0}, \action_{k^{*}})} \leq Q_{k, n'} + e_{\tau, n' + O_{\tau}\! (\state_{0}, \action_{k})} \Bigg ] \\
        & \overset{(c)}{\leq} l + \sum_{\tau = 1}^{t} \max_{n \in [1, \tau - 1]} \max_{n' \in [1, \tau - 1]} \mathbbm{1} \Bigg [ Q_{k^{*}, n} + e_{\tau, n + O_{\tau} (\state_{0}, \action_{k^{*}})} \leq Q_{k, n'} + e_{\tau, n'} \Bigg ].
    \end{align*}
\noindent where $(a)$ uses the fact that WU-UCT do not adjust the value (i.e. $\widetilde{Q} (\state, \action) = 0$), which results in $\overline{Q}_{\tau - 1} (\state_{0}, \action_{k^{*}}) = Q_{\tau - 1} (\state_{0}, \action_{k^{*}})$; $(b)$ largely follows Eq.~(\ref{eq: bound for sub-optimal arm visit count}), and $(c)$ is based on the fact that $e_{\tau, n_{1}} > e_{\tau, n_{2}} \; (n_{1} < n_{2})$.

The main difference between the above upper bound and the corresponding upper bound of UCT (Eq.~(\ref{eq: bound for sub-optimal arm visit count})) is the potential lag in $Q$, i.e., it has $O_{\tau} (\state_{0}, \action_{k})$ less observed value estimates $\hat{V} (\state_{k^{*}})$ compared to that expected by the confidence interval $c$. Similar to Eqs.~(\ref{eq: UCT bound condition 1})-(\ref{eq: UCT bound condition 3}), the probability of the event in the indicator function $\mathbbm{1} [ \cdot ]$ can be bounded by the sum of the probability of the three following events:
    \begin{align*}
        Q_{k^{*}, n} & \leq \mu^{*} - e_{\tau, n + O_{\tau} (\state_{0}, \action_{k^{*}})}, \numberthis \label{eq: UCT bound condition 2-1} \\
        Q_{k, n'} & \geq \mu_{k} + e_{\tau, n'}, \numberthis \label{eq: UCT bound condition 2-2} \\
        \mu^{*} & < \mu_{k} + 2 e_{\tau, n'}. \numberthis \label{eq: UCT bound condition 2-3}
    \end{align*}
Therefore, we only need to analysis the extra regret caused by $O_{\tau} (\state_{0}, \action_{k^{*}})$ in Eq.~(\ref{eq: UCT bound condition 2-1}).
Specifically, Figure~\ref{fig: WU-UCT asymptotic proof} illustrate the affection on the regret caused by the existence of \emph{incomplete simulations} (i.e., ongoing simulations whose return is currently unavailable) of the optimal arm. The remainder of the proof uses the following definition.

\begin{definition}[Simulation interval $\tau_{\mathrm{sim}}$]
\label{definition: simulation interval}
Between the period of a simulation task $(\state, m)$ being assigned to a worker in the simulation step and being returned in the wait step (i.e. the simulation completes), there are at most $\maxsimudelay \!-\! 1$ and at least $\minsimudelay \!-\! 1$ other returned simulation result $(\state', V, \hat{m})$ where $\hat{m} \!=\! m$. 
\end{definition}

In the case of WU-UCT, since the algorithm contains only one global search tree, $\maxsimudelay$ and $\minsimudelay$ measure the maximum and minimum rollout steps taken from a simulation task being assigned and being returned.

As demonstrated by Figure~\ref{fig: WU-UCT asymptotic proof}, the main cost of the \emph{on-going simulations} on the optimal arm is that it make the value estimate $\overline{Q}$ less accurate, and there will be an underestimation on the upper confidence bound since we shrinked the exploration term in \ref{eq: modified UCT} over-optimistically. 
Concretely, we formalize the condition of the loss and the cost/effect of it. To have $O_{\tau} (\state_{0}, \action_{k^{*}})$ increased at time step $\tau$, the precondition should be that the arm $k^{*}$ is pulled at that time step (i.e., general condition A). In addition to that, we have to make sure that the observed/returned task/simulation at time $\tau$ is not for arm $k^{*}$ (i.e., condition A) since if that is the case, $O_{\tau} (\state_{0}, \action_{k^{*}})$ would not change before and after time $\tau$, and thus no additional cost will be added. Since condition A is hard to directly quantify, we instead rely on a looser condition that has guaranteed larger probability of it. Specifically, condition B (Figure~\ref{fig: WU-UCT asymptotic proof}) is a quantifiable constraint that satisfies the above statement. Condition B is based on the fact that only the tasks initiated between time $\tau - \maxsimudelay$ and $\tau - \minsimudelay$ is possible to terminate at time $\tau$, where we define $\maxsimudelay$ as the maximum simulation interval and $\minsimudelay$ as the minimum simulation delay. To justify $\Pr (\text{condition A}) \leq \Pr (\text{condition B})$, we can verify that the converse of condition B (i.e., arm $i^{*}$ is pulled in all time steps between $\tau - \maxsimudelay$ and $\tau - \minsimudelay$. Without loss of generality, in the following we assume the simulation interval is always equal to $\simudelay$.

Therefore, at an abstract level, the additional expected cumulative regret incurred by WU-UCT compared to the (sequential) UCT can be written as:
    \begin{align*}
        & \Pr \left ( \text{General condition A} \right ) \cdot \Pr \left ( \text{Condition B} \right ) \cdot \Pr \left ( \text{Cost A} \right ). \numberthis \label{eq: WU-UCT abstract loss 1}
    \end{align*}
We upper bound equation (\ref{eq: WU-UCT abstract loss 1}) by
    \begin{align*}
        & \Pr \left ( \text{Condition B} \right ) \cdot \Pr \left ( \text{Cost A} \right ), \numberthis \label{eq: WU-UCT abstract loss 2}
    \end{align*}
We now consider each of the probabilities.

\noindent \textbf{Condition B} $\;$ As hinted by the description of condition B in Figure~\ref{fig: WU-UCT asymptotic proof}, the probability of condition B is upper bounded by 
    \begin{align*}
        & \left ( \maxsimudelay - \minsimudelay + 1 \right ) \cdot \max_{\tau_{0} \in [ \minsimudelay, \maxsimudelay ]} \max_{n \in [1, \tau - 1]} \max_{n' \in [1, \tau - 1]} \Big \{ \Pr ( Q_{k^{*}, n} \leq \mu^{*} - e_{\tau - \tau_{0}, n} ) \\ 
        & \qquad \qquad \qquad \qquad \qquad \qquad \qquad \qquad \qquad \qquad \quad + \Pr ( Q_{k, n'} \geq \mu_{k} + e_{\tau - \tau_{0}, n'}) \Big \} \\
        \leq & 2 \cdot \frac{\maxsimudelay - \minsimudelay + 1}{\tau - \maxsimudelay} \overset{(a)}{=} \frac{2}{\tau - \simudelay},
    \end{align*}
\noindent where $(a)$ uses our assumption that $\maxsimudelay = \minsimudelay = \simudelay$.\footnote{If this assumption does not hold, it will only add a constant term (independent to the number of rollout steps) in the final regret, which will not affect our main result.}
Note that in this case we do not need to consider the case where $O_{\tau} (\state_{0}, \action_{k}) > 0$ since they are bounded by the \emph{cost A} term in previous time steps and would be redundant to consider again here. Specifically, the excess regret caused by the on-going simulation at time step $\tau - \simudelay$ has been upper bounded by the \emph{cost A} term in their respective rollout step that they are initialized.

\noindent \textbf{Cost A} $\;$ The cost here refers to the additional expected regret incurred by using the adjusted confidence interval 
    \begin{align*}
        \sqrt{\frac{2 \ln \tau}{N_{\tau} (\state_{0}, \action_{k^{*}}) + O_{\tau} (\state_{0}, \action_{k^{*}})}} \quad (O_{\tau} (\state_{0}, \action_{k^{*}}) > 0) 
    \end{align*}
instead of the optimistic one (in the sequential case) $\sqrt{\frac{2 \ln \tau}{N_{\tau} (\state_{0}, \action_{k^{*}})}}$.
Formally, cost A can be bounded by 
    \begin{align}
        & \max_{O \in [1, M - 1]} \sum_{t = \tau}^{\tau + \simudelay} \max_{n \in [1, \tau - 1]} \max_{n' \in [1, \tau - 1]} \Big \{ \Pr ( Q_{k^{*}, n + O} \leq \mu^{*} - e_{t, n + O + 1} ) \nonumber \\
        & \qquad \qquad \qquad \qquad \qquad \qquad \qquad \; \; + \Pr ( Q_{k, n' + O} \geq \mu_{k} + e_{t, n' + O + 1}) \nonumber \\
        & \qquad \qquad \qquad \qquad \qquad \qquad \qquad \; \; - \Pr ( Q_{k^{*}, n + O} \leq \mu^{*} - e_{t, n + O} ) \nonumber \\
        & \qquad \qquad \qquad \qquad \qquad \qquad \qquad \; \; - \Pr ( V_{k, s' + O} \geq \mu_{i} + e_{t, n' + O}) \Big \} \nonumber \\
        & \leq \max_{O \in [1, M - 1]} \sum_{t = \tau}^{\tau + \simudelay} 2 \left [ t^{- \frac{O}{O + 1}} - t^{-1} \right ] \nonumber \\
        & \leq 2 \left ( \frac{\simudelay}{\sqrt{\tau}} - \frac{\simudelay}{\tau} \right ). \label{eq: WU-UCT partial result 0_1}
    \end{align}
Finally, we plug in the upper bounds of the conditions and costs into Eq.~(\ref{eq: WU-UCT abstract loss 2}), which gives
    \begin{align*}
        \frac{4}{\tau - \simudelay} \left ( \frac{\simudelay}{\sqrt{\tau}} - \frac{\simudelay}{\tau} \right ).
    \end{align*}
Finally, we upper bound the total cost incurred on $\expectation \left [ T_{i} (t) \right ]$ by
    \begin{align*}
        \sum_{\tau = \lceil \frac{8 \ln t}{\Delta_{i}^{2}} \rceil}^{t} \frac{4}{\tau - \simudelay} \left ( \frac{\simudelay}{\sqrt{\tau}} - \frac{\simudelay}{\tau} \right )
        \numberthis \label{eq: WU-UCT partial result 1}
    \end{align*}
Since $\simudelay$ is not dependent on $t$, there exists $t^{*} \in \mathbb{Z}^{+}$ such that whenever $t > t^{*}$, we have $\simudelay < \lceil \frac{8 \ln t}{\Delta_{k}^{2}} \rceil$ / 2. Therefore, Eq.~(\ref{eq: WU-UCT partial result 1}) is upper bounded by
    \begin{align*}
        4 \simudelay \sum_{\tau = \lceil \frac{8 \ln t}{\Delta_{k}^{2}} \rceil / 2}^{t} \left [ \frac{1}{\tau \sqrt{\tau}} - \frac{1}{\tau^{2}} \right ] \leq 2 \simudelay \left [ 2 \sqrt{\frac{\Delta_{k}^{2}}{4 \ln t}} - \frac{\Delta_{k}^{2}}{4 \ln t} \right ]. \numberthis \label{eq: WU-UCT partial result 2}
    \end{align*}
Note that Eq.~(\ref{eq: WU-UCT proof partial result 2}) is the regret bound of $\expectation [ T_{k} (t) ]$, plugging in Eq.~ (\ref{eq: re-decomposed regret}) finishes the proof, that is, the cumulative regret of WU-UCT on the MAB case is upper bounded by
    \begin{align*}
        R_{\mathrm{UCT}} (n) + 4 \simudelay \sum_{k: \mu_{k} < \mu^{*}} 2 \Delta_{k} \sqrt{\frac{\Delta_{k}^{2}}{4 \ln n}}.
    \end{align*}
Since in WU-UCT, $\tau_{\mathrm{sim}} = M$, the above quantity is equal to
    \begin{align*}
        R_{\mathrm{UCT}} (n) + 4 M \sum_{k: \mu_{k} < \mu^{*}} 2 \Delta_{k} \sqrt{\frac{\Delta_{k}^{2}}{4 \ln n}}.
    \end{align*}
\end{proof}

\section{Additional Details for BU-UCT}
\label{Additional Details for BU-UCT}

This section provides additional details of the BU-UCT algorithm, including an algorithm table (Appendix~\ref{Algorithm Table for BU-UCT}) and introduction of all its hyperparameters (Appendix~\ref{Hyperparameters of BU-UCT}).

\subsection{Algorithm Table for BU-UCT}
\label{Algorithm Table for BU-UCT}

The algorithm table of BU-UCT is provided in Algorithm~\ref{alg: BU-UCT}.

\subsection{Hyperparameters of BU-UCT}
\label{Hyperparameters of BU-UCT}

The following provides a list of all hyperparameters in BU-UCT. We briefly discuss the recommended values for each hyperparameter.

$\bullet \;$ $m_{\mathrm{max}}$. $m_{\mathrm{max}} \in (0, 1)$ is the hyperparameter that controls the degree we penalize $\overline{O}$. Specifically, if an edge $(\state, \action)$ has $\overline{O} (\state, \action) \geq m_{\mathrm{max}} \cdot M$ ($M$ is the number of workers), action $\action$ will not be selected by the tree policy when we are currently at node $\state$. In our experiments, we choose $m_{\mathrm{max}} \!=\! 0.8$.

$\bullet \;$ Maximum tree depth/width. These hyperparameters should depend on the complexity of the tasks as well as the total computation budget. In our experiments, both the maximum tree depth and the maximum tree width are set to $100$ and $20$, respectively.

$\bullet \;$ Number of expansion/simulation workers. Expansion and simulation workers perform expansion and simulation tasks, respectively. In our experiments, we use $1$ expansion worker and $16$ simulation workers.

$\bullet \;$ The tree policy balancing factor $c$. $c$ balance the exploration term (the second term) and the exploitation term (the first term) in the tree policy (Eq.~(\ref{eq: UCT tree policy})). In our experiments, it is selected as the standard deviation of the cumulative reward received by each node. For example, for node $\state$, $c$ is computed by the standard deviation of all cumulative reward received by $\state$ (i.e., all $V (\state)$).

\section{Alternative Surrogate Statistics}
\label{Alternative Surrogate Statistics}

\begin{figure}[t]
    \centering
    \includegraphics[width=\columnwidth]{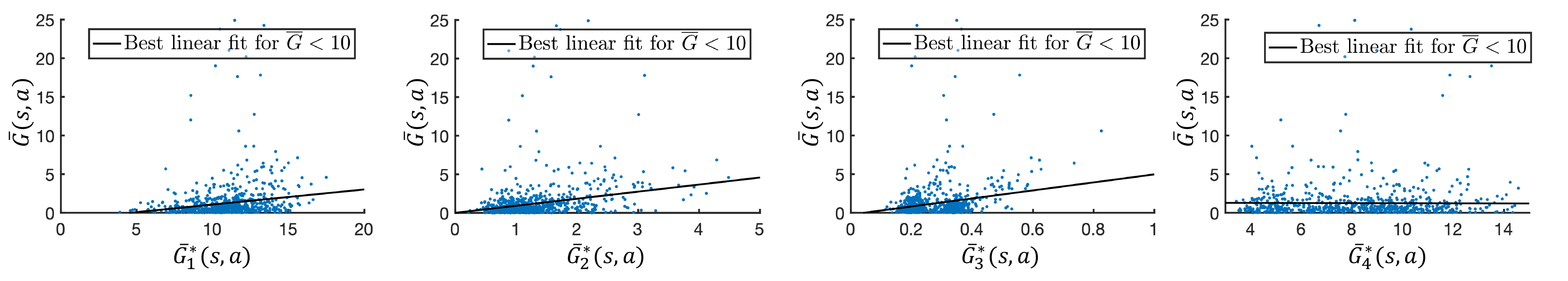}
    
    \vspace{-0.5em}
    \caption{Relation between four additional surrogate gaps (i.e., $\overline{G}^{*}_{1} (\state, \action)$, $\overline{G}^{*}_{2} (\state, \action)$, $\overline{G}^{*}_{3} (\state, \action)$, and $\overline{G}^{*}_{4} (\state, \action)$) and the action value gap (i.e., $\overline{G} (\state, \action)$).}
    \label{fig: additional surrogate statistics}
\end{figure}

Figure~\ref{fig: additional surrogate statistics} presents four surrogate gaps (i.e., $\overline{G}^{*}_{1} (\state, \action)$, $\overline{G}^{*}_{2} (\state, \action)$, $\overline{G}^{*}_{3} (\state, \action)$, and $\overline{G}^{*}_{4} (\state, \action)$) that also exhibit positive correlation with the action value gap $\overline{G} (\state, \action)$. $\overline{G}^{*}_{1} (\state, \action)$, $\overline{G}^{*}_{2} (\state, \action)$, and $\overline{G}^{*}_{3} (\state, \action)$ all exhibit positive correlation with the action value gap (although their fitness scores are worse than $\overline{G}^{*} (\state, \action)$ introduced in the main text); $\overline{G}^{*}_{4} (\state, \action)$ is a very related statistics of $\overline{G}^{*} (\state, \action)$ but it is not correlated with $\overline{G} (\state, \action)$. Note that the surrogate gaps presented here are not exhaustive, and better statistics with stronger correlation with the action value gap could exists. The goal of presenting these additional gaps is to help inspire future work for designing more principled parallel MCTS algorithms. In the following, we introduce the three surrogate gaps in detail.

\noindent $\bullet \;$ $\overline{G}^{*}_{1} (\state, \action)$: $\overline{G}^{*}_{1} (\state, \action)$ is the standard deviation of the $n$ (the number of rollouts) simulation returns related to the node $\state'$ (the next state following $(\state, \action)$):
    \begin{align*}
        \overline{G}^{*}_{1} (\state, \action) := \mathrm{Std} \big [ \big \{ V_{i} (\state') \big \}_{i = 1}^{n} \big ],
    \end{align*}
\noindent where $\mathrm{Std} [ A ]$ denotes the standard deviation of all values in the set $A$.
    
\noindent $\bullet \;$ $\overline{G}^{*}_{2} (\state, \action)$: Define $\overline{Q}_{i} (\state, \action)$ as the modified action value related to the edge $(\state, \action)$ at the $i$th rollout step. $\overline{G}^{*}_{2} (\state, \action)$ denotes the standard deviation of the $n$ modified action values $\{ \overline{Q}_{i} (\state, \action) \}_{i = 1}^{n}$:
    \begin{align*}
        \overline{G}^{*}_{2} (\state, \action) := \mathrm{Std} \big [ \big \{ \overline{Q}_{i} (\state, \action) \big \}_{i = 1}^{n} \big ].
    \end{align*}

\noindent $\bullet \;$ $\overline{G}^{*}_{3} (\state, \action)$: Similar to $\overline{G}^{*}_{1} (\state, \action)$, $\overline{G}^{*}_{3} (\state, \action)$ is the coefficient of variance (i.e., standard deviation divided by mean) of the $n$ simulation returns $\{ V_{i} (\state') \}_{i = 1}^{n}$:
    \begin{align*}
        \overline{G}^{*}_{3} (\state, \action) := \frac{\mathrm{Std} \big [ \big \{ V_{i} (\state') \big \}_{i = 1}^{n} \big ]}{\mathrm{Aveg} \big [ \big \{ V_{i} (\state') \big \}_{i = 1}^{n} \big ]},
    \end{align*}
\noindent where $\mathrm{Aveg} [ A ]$ denotes the average of all values in the set $A$.

\noindent $\bullet \;$ $\overline{G}^{*}_{4} (\state, \action)$: $\overline{G}^{*}_{4} (\state, \action)$ is defined as follows
    \begin{align*}
        \overline{G}^{*}_{4} (\state, \action) := \overline{O} (\state, \action) = \frac{1}{n} \sum_{i = 1}^{n} O_{i} (\state', \action').
    \end{align*}
Despite its subtle difference with $\overline{G}^{*} (\state, \action)$ (recall that $\overline{G}^{*} (\state, \action) \!:=\! \max_{\action' \in \actions} \overline{O} (\state', \action')$, where $\state'$ is the next state following $(\state, \action)$), $\overline{G}^{*}_{4} (\state, \action)$ is barely correlated with the action value gap $\overline{G} (\state, \action)$ while $\overline{G}^{*} (\state, \action)$ has strong (positive) correlation with it. Therefore, we conclude being extensively simulated by multiple workers does not necessarily results in high action value gap, which makes sense as the optimal child should be extensively exploited; instead, it is the maximum $\overline{O}$ among its child nodes that strongly correlates with $\overline{G}$, which might suggests that how well the tree policy can properly balance exploration and exploitation is of great importance. This point is further elaborated in the following.

\begin{figure}[t]
    \centering
    \includegraphics[width=\columnwidth]{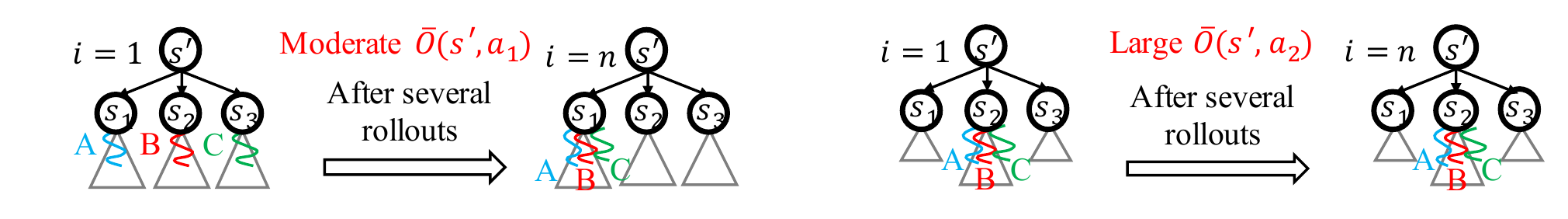}
    
    \vspace{-0.5em}
    \caption{A key implication of the surrogate gap $\overline{G}^{*}$ introduced in the main text: high $\overline{G}^{*}$ potentially indicates overly exploitation of some (suboptimal) child nodes.}
    \label{fig: surrogate gap implication}
\end{figure}

Consider the two example tree search processes shown in Figure~\ref{fig: surrogate gap implication}. On the left side, if the child nodes of $\state'$ (i.e., $\state_{1}$, $\state_{2}$, and $\state_{3}$) are visited equally often at earlier stages (i.e., when $\state'$ has been visit for only a few times) and only start exploiting the nodes with higher action value (i.e., $\overline{Q}$) after certain number of rollouts, the statistics $\overline{O} (\state', \action_{i}) \; (i \!=\! 1, 2, 3)$ will be relatively small since $\overline{O}$ is averaged across different rollout steps (i.e., different $i$ for $O_{i}$). Hence, in this case, the surrogate gap $\overline{G}^{*} (\state, \action)$ ($(\state, \action)$ is the edge that lead to $\state'$) will be relatively small, which suggests that the action value gap is also small. This match our intuition since properly explore all child nodes before exploiting the best one is beneficial and should lead to good performance. In contrast, consider a second case shown on the right side of Figure~\ref{fig: additional surrogate statistics}. In this case, exploitation happens even at the very beginning stage. The agent keeps assigning simulation tasks to query (offspring nodes of) $\state_{2}$. In this case, $\overline{O} (\state', \action_{2})$ is high according to its definition. This leads to high surrogate gap $\overline{G}^{*} (\state, \action)$, which suggests that the performance on this node is less desirable compared to the previous example. In fact, exploit certain nodes aggressively at earlier stages will cause other nodes under-explored, which makes the agent unable to recognize which child node is the most rewarding. If the agent does not happen to select the optimal node to exploit, it will fail to find an optimal action. Although implicit, the second modification (i.e., modification \#2) proposed by BU-UCT try to solve this problem by penalizing over-exploitation (through lowering $\overline{N}$) in earlier stages.

Another advantage of BU-UCT's second modification is to encourage the algorithm to search deeper and wider. Again use Figure~\ref{fig: additional surrogate statistics}(left) as an example. For BU-UCT, if $\overline{N} (\state', \state_{1}) \!=\! m$, then $m$ distinct offspring nodes of $\state_{1}$ have been assigned simulation tasks. However, for its base algorithm WU-UCT, since $\state_{1}$ might be queried by multiple workers, it will have less than $m$ distinct offspring nodes of $\state_{1}$ being assigned simulation tasks when $\overline{N} (\state', \state_{1}) \!=\! m$. This allows BU-UCT to explore deeper and potentially provide more accurate value estimate of $\state_{1}$ compared to WU-UCT. In an extreme case, if WU-UCT assigns $m$ workers to simulate $\state_{1}$, it can only obtain the simulation return at node $\state_{1}$, which makes its action value $\overline{Q} (\state', \action_{1})$ less accurate.

\section{Additional Details for Experiments}
\label{Details of the Experiments}

This section provides additional experiment results and implementation details of the Atari experiments. First, Appendix~\ref{Speedup Test for BU-UCT} provides results of BU-UCT's speedup test on 15 Atari games. Next, Appendix~\ref{Experiment Details of the Atari Games} describes additional implementation details of the Atari experiments. Finally, Appendix~\ref{Details for the Action Value Gap vs. Performance Experiments} provides details regarding the demonstrative experiment in Figure~\ref{fig: theoretical results demonstration}(b) (i.e., average action value gap vs. episode reward).

\subsection{Speedup Test for BU-UCT}
\label{Speedup Test for BU-UCT}

The speedup of BU-UCT with 16 workers compared to its sequential counterpart (i.e., 1 worker) is shown in Table~\ref{table: speedup}. Across 15 Atari games, BU-UCT achieves on average 14.33 times speedup using 16 workers, which suggests that BU-UCT can better retain the performance of UCT compared to the baselines while achieving desired speedup.

\begin{table*}[t]
\caption{Speedup achieved by BU-UCT using 16 workers on 15 Atari games. Elapsed time represents the average wall clock time to run a single tree search step (i.e., build a search tree with 128 rollouts). Speedup is calculated by dividing the (average) elapsed time using 1 worker by the (average) elapsed time using 16 workers.}
\label{table: speedup}

\centering
{\fontsize{9}{9}\selectfont

\begin{tabular}{cccc}
\toprule
Environment & \makecell[c]{Elapsed time/s \\ (1 worker)} & \makecell[c]{Elapsed time/s \\ (16 workers)} & \makecell[c]{Speedup \\ (16 vs. 1 worker(s))} \\
\midrule

Alien & 54.19 & 3.81 & 14.20 \\
Boxing & 54.71 & 3.55 & 15.37 \\
Breakout & 44.27 & 3.56 & 12.42 \\
Centipede & 50.18 & 3.34 & 15.01 \\
Freeway & 56.98 & 3.75 & 15.16 \\
Gravitar & 39.44 & 2.90 & 13.55 \\
MsPacman & 44.18 & 3.18 & 13.88 \\
NameThisGame & 43.36 & 3.06 & 14.13 \\
RoadRunner & 45.36 & 3.03 & 14.92 \\
Robotank & 54.08 & 3.80 & 14.19 \\
Qbert & 43.25 & 3.06 & 14.10 \\
SpaceInvaders & 45.11 & 3.14 & 14.36 \\
Tennis & 54.35 & 3.72 & 14.58 \\
TimePilot & 41.81 & 2.91 & 14.34 \\
Zaxxon & 46.09 & 3.09 & 14.88 \\

\bottomrule

\end{tabular}
}
\end{table*}

\subsection{Experiment Details of the Atari Games}
\label{Experiment Details of the Atari Games}

\noindent \textbf{MCTS simulation} Each simulation worker is equipped with a pre-trained policy network (that predicts $\pi (\action | \state)$) and a pre-trained value network (that estimate $V (\state)$). Both networks are pre-trained by the Proximal Policy Optimization (PPO) \citep{schulman2017proximal} algorithm. Table~\ref{table: atari PPO} summarizes the performance on the 15 Atari games using only the PPO policy. For a simulation started from state $\state_{0}$, we use the PPO policy network to interact with the environment for 100 steps, which forms a trajectory $\state_{0}, \action_{0}, \reward_{0}, \state_{1}, \dots, \state_{99}, \action_{99}, \reward_{99}, \state_{100}$. If the environment does not terminate, the full simulation return is computed by the intermediate rewards plus the value of $\state_{100}$, i.e., the simulation return is $\rewards_{\mathrm{sim}} \!:=\! \sum_{i = 0}^{99} \gamma^{i} \reward_{i} \!+\! \gamma^{100} V (\state_{100})$. To reduce the variance of Monte Carlo sampling, we average it with the value $V (\state_{0})$. The final simulation return is $\rewards \!:=\! 0.5 \rewards_{\mathrm{sim}} \!+\! 0.5 V (\state_{0})$.

\begin{table*}
\caption{Performance of the PPO policy on 15 Atari games.}
\label{table: atari PPO}

\centering
{\fontsize{9}{9}\selectfont

\begin{tabular}{cc}
\toprule
Environment& PPO policy\\
\midrule

Alien & 850\\
Boxing & 7\\
Breakout & 191\\
Centipede & 1701\\
Freeway & 32\\
Gravitar & 600\\
MsPacman & 1860\\
NameThisGame & 6354\\
RoadRunner & 26600\\
Robotank & 13\\
Qbert & 12725\\
SpaceInvaders & 1015\\
Tennis & -10\\
TimePilot & 4400\\
Zaxxon & 3504\\

\bottomrule

\end{tabular}
}
\end{table*}

\noindent \textbf{Hyperparameters and experiment details for BU-UCT} For all parallel MCTS algorithms, we choose the maximum tree depth/width as 100/20, respectively. The discount factor $\gamma$ is set to 0.99 (note that the reported score is not discounted). Additional details regarding hyperparameters are shown in Appendix~\ref{Hyperparameters of BU-UCT}. Experiments are deployed on machines with 88 CPU cores and 8 NVIDIA\textregistered~P40 GPUs. To minimize speed fluctuation caused by difference in the machines' workload, we ensure that the total number of processes is smaller than the total number of CPU cores.

\noindent \textbf{Hyperparameters and experiment details for baseline algorithms} For WU-UCT \citep{liu2020watch}, we reuse their code provided on GitHub. We also reuse the implementation of the baseline algorithms (i.e., VL-UCT, LeafP, and RootP) provided by \citet{liu2020watch}. All algorithms are implemented in Python, especially utilizing its ``multiprocessing'' module. All hyperparametes of LeafP and RootP have been covered in the previous paragraph. For VL-UCT, we report the better performance among the following two hyperparameter setups: $r_{VL} \!=\! 1.0$ and $r_{VL} \!=\! 5.0$ (see Appendix~\ref{Existing Parallel MCTS Algorithms}).

\subsection{Details for the Action Value Gap vs. Performance Experiments}
\label{Details for the Action Value Gap vs. Performance Experiments}

This section describes the experiment setup of the scatter plot between average action value gap and episode reward (i.e., Figure~\ref{fig: theoretical results demonstration}(b)).

\noindent \textbf{General setup} $\;$ Each node in the scatter plots represents a full run in the corresponding Atari game. That is, at each time step, we use MCTS to plan for the best action to execute, until the game terminates. Note that for each time step we need to construct a search tree and perform rollouts on it.

\noindent \textbf{Average action value gap} $\;$ The reported average action value is averaged across (i) search trees constructed at different time steps of the game, and (ii) (for a single search tree) the action value gap $\overline{G} (\state, \action)$ with respect to different edges $(\state, \action)$. Note that to minimize noise, we use a weighted average over action value gap $\overline{G} (\state, \action)$ for different edges. The weight is the complete visit count of that node (i.e., $N (\state, \action)$).

\noindent \textbf{Episode reward} $\;$ We adopt the most common performance measure used in Atari --- the episode reward. It sums up the reward obtained at all time steps without discount.

\noindent \textbf{Hyperparameters} $\;$ All experiments are run with a set of randomly selected hyperparameters. For all algorithms, we randomly select the number of workers from the range $[4, 32]$. All experiments perform in total 512 rollouts. A random default policy was used to reduce the time consumption. Maximum depth/width of the search tree is 100/20. For hard virtual loss (see Appendix~\ref{Existing Parallel MCTS Algorithms}), $r_{VL}$ is selected from the range $[0, 10]$; for soft virtual loss, $r_{VL}$ is selected from the range $[0, 20]$ and $n_{VL}$ is selected from the range $[1, 5]$.

\begin{figure}[t]
\begin{minipage}[t]{\textwidth}

\begin{algorithm}[H]
\caption{BU-UCT}
\label{alg: BU-UCT}
{\fontsize{9}{9} \selectfont
\begin{algorithmic}

\STATE{\textbf{Input:} environment emulator $\mathcal{E}$, root tree node $s_{root}$, maximum simulation step $T_{max}$, maximum simulation depth $d_{max}$, number of expansion workers $N_{exp}$, and number of simulation workers $N_{sim}$}

\STATE{\textbf{Initialize:} expansion worker pool $\mathcal{W}_{exp}$, simulation worker pool $\mathcal{W}_{sim}$, game-state buffer $\mathcal{B}$, $t \leftarrow 0$, and $t_{complete} \leftarrow 0$}

\WHILE{$t_{complete} < T_{max}$}

\STATE\texttt{\# Selection}

\STATE{Traverse the tree top down from root node $s_{root}$ with the tree policy shown below (i.e., Eq.~(\ref{eq: UCT tree policy})) until (i) its depth greater than $d_{max}$, (ii) it is a leaf node, or (iii) it is a node that has not been fully expanded and \emph{random()} $<$ 0.5. Specifically, when we are at node $\state_{t}$, we select the following action $\action_{t}$ using the tree policy:
    { \setlength{\abovedisplayskip}{-0.4em}
      \setlength{\belowdisplayskip}{-0.0em}
    \begin{align*}
        a_{t} \!=\! \operatornamewithlimits{argmax}\limits_{a \in \mathcal{A}} \! \bigg \{ \overline{Q} (s_{t}, a) \!+\! c \sqrt{\frac{2 \ln \sum_{a'} \! \overline{N} (s_{t},\! a')}{\overline{N} (s_{t}, a)}} \bigg \} \; \mathtt{(tree~policy)},
    \end{align*}}
    
\noindent where $\overline{Q} (\state, \action) \!:=\! Q (\state, \action) \!-\! \infty \!\cdot\! \mathbbm{1} \big [ \, \overline{O} (\state, \action) \!\geq\! m_{\mathrm{max}} \!\cdot\! M \big ]$ (i.e., Eq.~(\ref{eq: BU-UCT modified action value})) and $\overline{N} (\state, \action) \!:=\! N (\state, \action) \!+\! O (\state, \action)$.
}

\vspace{0.5em}

\STATE\texttt{\# Assign expension and simulation tasks}

\IF{expansion is required}

\STATE{$\Bar{s} \leftarrow \text{shallow copy of the current node}$}

\STATE{Assign expansion task $(t, \Bar{s})$ to pool $\mathcal{W}_{exp}$ \textbf{// $t$ is the task index}}

\ELSE

\STATE{assign simulation task $(t, s)$ to pool $\mathcal{W}_{sim}$ if episode not terminated}

\STATE{Call \textbf{incomplete\_update}$(s)$; if episode terminated, call \textbf{complete\_update}$(t, s, 0.0)$}

\ENDIF

\vspace{0.5em}

\STATE\texttt{\# Fetch expansion tasks}

\STATE{\textbf{if} $\mathcal{W}_{exp}$ fully occupied \textbf{then}}

\STATE{\hspace{\algorithmicindent} Wait for a expansion task with return: (task index $\tau$, current state $s$, expended action $a$, reward $r$,} 
\STATE{\hspace{\algorithmicindent} expended state $\state'$ (the next state following $(\state, \action)$), terminal signal $d$, task index $\tau$); expand the tree} 
\STATE{\hspace{\algorithmicindent} according to $\tau$, $s$, $a$, $s'$, $r$, and $d$; assign simulation task $(\tau, s)$ to pool $\mathcal{W}_{sim}$}
\STATE{\hspace{\algorithmicindent} Call \textbf{incomplete\_update}$(t, s)$}

\IF{$\state'$ is the first expended state of $\state$}

\STATE{$\state_{p} \leftarrow \mathcal{P}\mathcal{R} (\state)$; $\action_{p} \leftarrow \mathrm{the~action~that~transits~}\state_{p}\mathrm{~to~}\state$ \textbf{// $\mathcal{PR}(s)$ denotes the parent node of $s$}}

\STATE{$N (\state_{p}, \action_{p}) \leftarrow 1$ \textbf{// Note that all previous simulation returns updated to $Q (\state_{p}, \action_{p})$ are initiated from $\state$, hence according to Section~\ref{Towards Optimal Parallel Tree Search} we treat them as a single simulation return.}}

\ENDIF

\STATE{\textbf{else} \textbf{continue}}

\vspace{0.5em}

\STATE\texttt{\# Fetch simulation tasks and backpropagate}

\STATE{\textbf{if} $\mathcal{W}_{sim}$ fully occupied \textbf{then}}

\STATE{\hspace{\algorithmicindent} Wait for a simulation task with return: (task index $\tau$, node $s$, cumulative reward $\Bar{r}$)}

\STATE{\hspace{\algorithmicindent} Call \textbf{complete\_update}$(\tau, s, \Bar{r})$; $t_{complete} \leftarrow t_{complete} + 1$}

\STATE{\textbf{else} \textbf{continue}}

\vspace{0.5em}

\STATE{$t \leftarrow t + 1$}

\ENDWHILE

\end{algorithmic}
}

\end{algorithm}

\end{minipage}

\begin{minipage}[t]{1.0\textwidth}
\begin{algorithm}[H]
\caption{incomplete\_update}
\label{alg:incomplete_update}
{\fontsize{9}{9} \selectfont
\begin{algorithmic}

\STATE{\textbf{input:} node $s$}

\WHILE{$n \neq \text{null}$}

\STATE{$\action \leftarrow \mathtt{the~previous~action~selected~at~this~node}$}

\STATE{$O (\state, \action) \leftarrow O (\state, \action) + 1$ \textbf{// Update incomplete visit count}}

\STATE{$\overline{O} (\state, \action) \leftarrow \frac{\overline{N} (\state, \action) - 1}{\overline{N} (\state, \action)} \overline{O} (\state, \action) + \frac{1}{\overline{N} (\state, \action)} O (\state, \action)$ \textbf{// Update average incomplete visit count}}

\STATE{$s \leftarrow \mathcal{PR}(s)$ \textbf{// $\mathcal{PR}(s)$ denotes the parent node of $s$}}

\ENDWHILE

\end{algorithmic}
}

\end{algorithm}
\end{minipage}
\hfill
\begin{minipage}[t]{1.0\textwidth}
\begin{algorithm}[H]
\caption{complete\_update}
\label{alg:complete_update}
{\fontsize{9}{9} \selectfont
\begin{algorithmic}

\STATE{\textbf{input:} task index $t$, node $s$, reward $\Bar{r}$}

\WHILE{$n \neq \text{null}$}

\STATE{Retrieve the selected action $\action$ and the corresponding reward $r$ according to task index $t$}

\STATE{$N (\state, \action) \leftarrow N (\state, \action) + 1$; $O (\state, \action) \leftarrow O (\state, \action) - 1$ \textbf{// Update complete and incomplete visit count}}

\STATE{$\Bar{r} \leftarrow r + \gamma \Bar{r}$ \textbf{// Calculate cumulative reward}}

\STATE{$Q (\state, \action) \leftarrow \frac{N (\state, \action) - 1}{N (\state, \action)} Q (\state, \action) + \frac{1}{N (\state, \action)} \Bar{r}$ \textbf{// Update action value}}

\STATE{$s \leftarrow \mathcal{PR}(s)$ \textbf{// $\mathcal{PR}(s)$ denotes the parent node of $s$}}

\ENDWHILE

\end{algorithmic}
}

\end{algorithm}

\end{minipage}
\end{figure}

\end{document}